%% file: main.tex
\documentclass[10pt,twocolumn,letterpaper]{article}
\usepackage[T1]{fontenc}
\usepackage[utf8]{inputenc}
\usepackage[letterpaper,top=0.85in,bottom=0.9in,left=0.75in,right=0.75in,columnsep=0.25in]{geometry}
\usepackage{amsmath,amsthm}
\usepackage{mathptmx}
\usepackage[varqu]{inconsolata}
\usepackage{textcomp}
\usepackage{microtype}
\usepackage{booktabs,graphicx,xspace}
\usepackage{array,tabularx}
\usepackage[table]{xcolor}
\usepackage[font=small,labelfont=bf,labelsep=period,skip=4pt]{caption}
\usepackage[numbers,sort&compress]{natbib}
\usepackage{titlesec}
\usepackage{balance}
\usepackage{needspace}
\PassOptionsToPackage{hyphens}{url}
\usepackage[colorlinks,linkcolor=linkblue,citecolor=linkblue,urlcolor=linkblue]{hyperref}
\definecolor{linkblue}{RGB}{20,50,140}
\hypersetup{pdftitle={Delta Tokenizer: Reference-Identical Tokenization of Session Deltas and Segment-Parallelism for Long Prompts on Multicore CPUs},pdfauthor={Zhenyu Zhang, Zhichao Cao}}
\definecolor{ctrlrow}{RGB}{255,236,204}
\titleformat*{\section}{\large\bfseries}
\titleformat*{\subsection}{\normalsize\bfseries}
\titleformat*{\subsubsection}{\normalsize\bfseries\itshape}
\titleformat*{\paragraph}{\normalsize\bfseries}
\titlespacing*{\section}{0pt}{2.2ex plus 0.6ex minus 0.3ex}{1ex plus 0.2ex}
\titlespacing*{\subsection}{0pt}{1.6ex plus 0.5ex minus 0.2ex}{0.6ex plus 0.2ex}
\titlespacing*{\subsubsection}{0pt}{1.2ex plus 0.4ex minus 0.2ex}{0.4ex plus 0.1ex}
\titlespacing*{\paragraph}{0pt}{0.9ex plus 0.3ex minus 0.1ex}{1em}
\newtheorem{proposition}{Proposition}[section]
\newcommand{\Description}[1]{}
\newcommand{\sys}{Delta Tokenizer\xspace}
\newcommand{\cat}{\mathbin{\Vert}}
\newcommand{\code}[1]{\texttt{#1}}
\graphicspath{{figures/}}
\title{\sys: Reference-Identical Tokenization of Session Deltas and Segment-Parallelism for Long Prompts on Multicore CPUs}
\author{Zhenyu Zhang\\ Arizona State University\\ \texttt{zzhan641@asu.edu} \and Zhichao Cao\\ Arizona State University\\ \texttt{Zhichao.Cao@asu.edu}}
\date{}
\begin{document}
\twocolumn[
\begin{@twocolumnfalse}
\maketitle
\begin{abstract}
An agent that drives an LLM sends the model the same conversation again and again, and each turn adds only a little text at the end. Inference engines already avoid redoing the model's work for the part that has not changed: the KV cache holds the state of the shared prefix, and only the new tokens pass through the model. However, tokenization has no such shortcut. The CPU front end of the serving engine converts every request back into token IDs from its first character, so this stateless step is paid in full on every turn. As a result, it lengthens the time to first token and consumes more CPU resources. Even worse, longer contexts and longer sessions widen the gap.

To address those issues, we present \sys, a CPU tokenization front end that reads each request as a change to the request before it (i.e., tokenizing the delta) in the same session. \sys retains the token sequence of the earlier turn, encodes again only the appended text and the short tail whose tokens the append could affect, and joins the two only at a position the tokenizer's splitting rules themselves cannot move, which keeps the result exact. A request with no state to reuse must still be encoded from scratch. For long requests of this kind, \sys cuts the text where practical tokenizers already keep neighboring pieces independent, encodes the pieces on several cores at once, and concatenates the outputs into the reference encoding, exactly. \sys leverages a shared CPU pool to schedule these chunks together with the short incremental updates, so an update runs between two chunks rather than behind an entire long job.

Across our experiments, the incremental path shortens the tokenization of a session turn by roughly 40$\times$ relative to the Hugging Face reference tokenizer, and on 16 cores the chunk-parallel path encodes a request of four million characters 8.2$\times$ faster than a serial encode and 1.6--2.1$\times$ faster than LoPT. Inside SGLang, \sys reduces the median time to first token, or TTFT, through the chat interface by 59\% at 100K tokens, which is the median context of the agents we traced, and brings the P99 TTFT of agent sessions down 9.5$\times$ under load and 23$\times$ when long requests tie up the front end without ever reaching the model. When such requests are admitted and the model serves them from its prefix cache, the P99 TTFT of agent turns stays about 7$\times$ lower, and about 20$\times$ lower on a server with one core or a smaller model. Every output we checked matches the reference, among them 41.8 billion documents encoded whole by the fast encoder inside \sys.
\end{abstract}
\vspace{1.2em}
\end{@twocolumnfalse}
]
\input{sections/01-introduction}

\input{sections/02-motivation}

\input{sections/03-overview}

\input{sections/04-incremental}

\input{sections/05-parallel}

\input{sections/06-runtime}

\input{sections/07-evaluation}

\input{sections/08-discussion-related}

\input{sections/09-conclusion}

\label{sec:body-end}
\clearpage
\balance
\bibliographystyle{abbrvnat}
\bibliography{refs}
\clearpage
\newgeometry{margin=1in,twocolumn=false}
\onecolumn
\appendix
\renewcommand{\thesection}{S\arabic{section}}
\setcounter{table}{0}\renewcommand{\thetable}{S\arabic{table}}\renewcommand{\theHtable}{S\arabic{table}}
\setcounter{figure}{0}\renewcommand{\thefigure}{S\arabic{figure}}\renewcommand{\theHfigure}{S\arabic{figure}}
\renewcommand{\topfraction}{0.9}
\renewcommand{\bottomfraction}{0.8}
\renewcommand{\textfraction}{0.07}
\renewcommand{\floatpagefraction}{0.8}
\setcounter{topnumber}{3}
\setcounter{bottomnumber}{2}
\setcounter{totalnumber}{5}
\begin{center}
{\LARGE\bfseries Supplementary Material}
\end{center}
\vspace{0.5em}
\input{sections/supplement-body}

\end{document}

%% file: sections/01-introduction.tex
\section{Introduction}
\label{sec:intro}
A coding agent works in rounds. It sends the model a request, receives an answer, executes a tool, and comes back with the outcome so the model can take the next step in the same session. Every one of these requests repeats the conversation so far, although what is new is often no more than a tool's output or a one-line instruction~\cite{react2023,sweagent2024}. On the model side, this repetition is already handled. Serving engines keep the state of the shared prefix in the KV cache and run the model only over the text that was added~\cite{vllm2023,sglang2024}. However, current tokenizers do not have such an equivalent mechanism. They treat each request as a fresh string and turn the full history back into token IDs on the CPU, turn after turn.

That repeated encoding has a real price, which also scales with the length of the context. Contexts are growing, and because the KV cache removes most of the model's own repeated work, the tokenizer's slice of each turn keeps widening. In a vLLM sweep of our own, the tokenizer's share of the time to first token (TTFT) rose from 10\% to 64\% as the prefix-cache hit rate approached 0.99. In our SGLang measurements, a Qwen3-8B chat turn whose 100K-token prefix is already cached reaches its first token after 145\,ms, and 93--99\,ms of that is the reference tokenizer running on a single CPU core (\S\ref{sec:workload}). Making the tokenizer faster shrinks this step but leaves its shape intact: the work still repeats, and it still grows with the history.

\begin{figure*}[t]
\centering
\includegraphics[width=\textwidth]{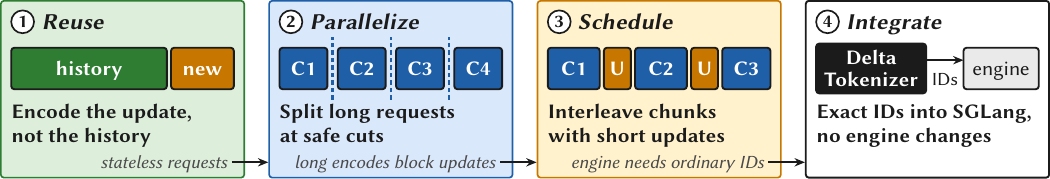}
\caption{The four steps of the design: reuse across turns, parallel full encodes, scheduling at chunk granularity, and an adapter that plugs into an unmodified serving engine. C stands for a chunk and U for an update.}
\label{fig:story}
\Description{Four connected design decisions: keep the previous prefix, run eligible full encodes in parallel, schedule chunks alongside updates, and hand exact IDs back to the serving engine.}
\end{figure*}

Our core argument is that \textbf{\emph{tokenization ought to be incremental, as the rest of the serving stack already is through its KV cache.}} A front end can hold on to the token sequence it produced for the previous turn and encode only the appended text plus the old suffix whose tokens the append might alter. What it must remember is modest, about one megabyte beyond the text the session stores anyway when it holds 100K tokens, four orders of magnitude and more below the KV cache the model keeps for those tokens. The model's cache cannot supply this reuse, since it is keyed by exactly the token IDs the tokenizer still has to compute.

However, incremental tokenization is more than concatenating the old IDs with the encoding of the new text, because adding characters to the last word can rewrite tokens that were already produced. A request ending in \code{pipe} that receives \code{line} ends up with \code{pipeline} as its last token (Figure~\ref{fig:append}). Before byte-pair encoding (BPE) runs, the tokenizer also normalizes characters, matches special-token strings, and splits the text into pieces, and every one of these stages can shift a token boundary close to the append~\cite{hftokenizers} (\S\ref{sec:challenges}). An incorrect token sequence changes what the model receives and what its KV cache can hit, even when it decodes back to text that reads fine (\S\ref{sec:incremental-results}). \textbf{\emph{Reuse therefore needs an exact, guaranteed rule}} for where the old tokens cease to be valid, while a fixed overlap will not achieve it.

To address those fundamental challenges, this paper presents \sys, a CPU tokenization front end built on three ideas: exact incremental reuse, chunk-parallel encoding for the full encodes that remain, and a scheduler that fits short updates between chunks. When a session continues, \sys encodes only the new text along with a short stretch taken from the tail of the preceding request, finds where the new token sequence falls into agreement with the old one, and splices them at a boundary that the splitting rules of the tokenizer itself cannot move, and nowhere else. Should the window hold no such boundary, \sys enlarges it, and as a final fallback it encodes the append together with everything after the last boundary it has sealed.

Reuse does nothing for the full encodes that are still required. A session that has just started, or one whose history was edited, has to be encoded from the beginning. A long request of this kind, carrying no session state, can keep a tokenization worker busy for hundreds of milliseconds. \sys divides such a request into text chunks and encodes them in parallel across CPU cores. For BPE in general, this is unsafe. Where a token ends may hinge on text far away, so no cut position can be trusted without reading the entire input. Existing parallel tokenizers therefore cut at arbitrary positions and mend the seams using overlap plus retries, and LoPT, the most recent, is not exact in its published form~\cite{lopt2026}. What makes the problem tractable is that tokenizers in practice have more structure. A pre-tokenizer first breaks the text into short pieces, words and numbers for instance, and BPE never merges across two of them. \sys cuts only at positions where this split cannot move, which the characters on either side of the cut reveal, and it leaves special-token strings and Unicode normalization undisturbed across the cut. Each chunk is then encoded by an existing exact encoder on a core of its own, and concatenating the outputs yields the whole request's encoding, with nothing to reconcile at the seams and nothing to retry. Chunks have a target size, 64K characters unless configured otherwise (\S\ref{sec:runtime}).

An incremental job is normally far smaller than one chunk. We propose tail-aware scheduling for those two types of jobs, such that an incremental job does not need to wait while long requests are being encoded chunk by chunk. \sys runs the tokenization cores as a worker pool, assigns every session a home worker that holds its reuse state, feeds chunks into a shared queue, and lets an incremental job slip in between chunks. A short update thus never queues behind an entire long stateless request. An adapter connects \sys to a serving engine like SGLang with no change to the engine itself. Figure~\ref{fig:story} shows the four steps in sequence.

We evaluate \sys with six byte-level BPE tokenizers, among them the tokenizers of Qwen3~\cite{qwen3}, gpt-oss~\cite{gptoss}, Llama-3.1~\cite{llama3}, and DeepSeek-V3~\cite{deepseekv3}, across three CPU platforms, and with SGLang~\cite{sglang2024} serving Qwen3-8B and gpt-oss-120b, on web text, code, and multilingual text as well as public coding-agent transcripts~\cite{swebenchpro_traces} and agent trajectories~\cite{swesmith2025} (\S\ref{sec:evaluation}). Measured against the Hugging Face reference tokenizer~\cite{hftokenizers}, which SGLang uses by default, incremental reuse brings a session turn from 93--99\,ms down to about 2\,ms, and the gain comes from the reuse rather than from the encoder. For chat requests with 100K-token contexts, the fast encoder in \sys, used without session state, is no quicker than the reference, whereas \sys cuts Qwen3-8B's median TTFT by 59\%; fastokens~\cite{fastokens}, a backend that reuses earlier work as well, is as fast, but for 78 of the 279 raw-text requests its token IDs are wrong. On 16 cores, chunk-parallel encoding of a request of four million characters runs 8.2$\times$ faster than a serial encode and 1.6--2.1$\times$ faster than LoPT~\cite{lopt2026}. With the same FIFO queue and the same workers, scheduling chunks rather than whole requests takes the P99 latency for million-character requests down from 374\,ms to 103\,ms. Inside SGLang, agent sessions see their P99 TTFT fall from 995\,ms to 105\,ms when they produce 24 turns per second, and when long stateless requests that exceed the model's context tie up the front end, it goes from 1,441\,ms to 63\,ms, 23$\times$ lower. When documents of about 100K tokens are admitted instead and the model serves them from its prefix cache, the tail is still set by a stateless front end. The P99 TTFT of agent turns then drops from about half a second to under 70\,ms, roughly 7$\times$, and by about 20$\times$ once the server is limited to one core or runs a smaller model. Only when every long document is new does GPU prefill take over the tail, and the three front ends then land within 5\% of one another.

The contributions of this paper are the following.
\begin{itemize}
\item \textbf{Exact incremental tokenization (\S\ref{sec:incremental}).} A front end that keeps the previous turn's tokens and repairs only the tail, under a check, so the encoding work of a turn follows the size of the append instead of the size of the context. The check, rather than any fixed overlap, is what makes the reuse exact.
\item \textbf{Chunk-parallel full encoding at safe cuts (\S\ref{sec:parallel}).} Practical tokenizers expose cut points that the surrounding characters identify. We use them to divide a long stateless request into independently encoded chunks whose concatenation is the reference output exactly, without the overlap, the seam matching, and the retries that earlier parallel tokenizers need.
\item \textbf{Tail-aware scheduling of updates and chunks (\S\ref{sec:runtime}).} Chunk boundaries act as scheduling points inside a CPU pool that pins each session to a home worker and draws chunks from a shared queue, so a short incremental job can run in between one long request's chunks.
\item \textbf{Integration and evaluation (\S\ref{sec:evaluation}).} We connect \sys to SGLang without modifying the engine and measure each mechanism on its own and in combination, including long documents that the model serves from its cache. Every comparison against the reference agrees, among them 41.8 billion documents encoded whole by \sys's fast encoder.
\end{itemize}

%% file: sections/02-motivation.tex
\section{Background and Motivation}
\label{sec:motivation}

\subsection{LLM Inference and Tokenization}
\label{sec:background}
An LLM serving engine such as vLLM or SGLang handles a request in three stages~\cite{vllm2023,sglang2024}. The front end renders the conversation into one text string, for chat requests through the model's chat template, which wraps each message in role markers, and tokenizes it into vocabulary IDs on the CPU. The engine then runs prefill over the prompt tokens on the GPU, which fills the key-value (KV) cache, and decodes output tokens one at a time. The time to first token (TTFT), from a request's arrival to its first output token, includes the front end, queueing, and prefill. Engines keep the KV cache across the requests of a session and reuse it for the shared token prefix, as in vLLM's prefix caching~\cite{vllm2023} and SGLang's RadixAttention~\cite{sglang2024}, so a continuation pays prefill only for its appended tokens.

Tokenization maps text to IDs deterministically, and a serving system must return exactly the IDs the model was trained and evaluated with. The LLMs we study use byte-level BPE~\cite{gpt2,llama3,qwen3}, which runs as a pipeline of stages. Special-token strings, such as the role markers of the chat template, are recognized first and kept whole. A normalizer may rewrite characters, for example into Unicode NFC form~\cite{uax15}. A pre-tokenizer then splits the text into short pieces with regular expressions, separating words, digit runs, punctuation, and whitespace, and BPE merges the bytes of each piece according to a learned merge order. The IDs of the pieces concatenate to the output. The reference implementation for these models is the Hugging Face \code{tokenizers} library~\cite{hftokenizers}, which encodes one request on one core.

\subsection[The Critical Need for Incremental and Parallel Tokenization]{The Critical Need for Incremental and\\ Parallel Tokenization}%
\label{sec:case}

\subsubsection[LLM agent workloads: long contexts, but small appends]{LLM agent workloads: long contexts,\\ but small appends}%
\label{sec:workload}
In 153,951 calls recorded from Claude Code and Codex sessions on developer machines, the prompt cache serves 88--94\% of the input tokens depending on the agent, and the median call appends about 1.5K characters to a context of 86--123K tokens (Figure~\ref{fig:motivation}a). The front end still re-encodes the whole history on every call, and in a vLLM sweep, as the hit rate approaches 0.99, tokenization grows from 10\% to 64\% of TTFT (Figure~\ref{fig:motivation}b). The reference encode itself grows with the context, from 7\,ms at 8K tokens to 93--99\,ms at 100K (Figure~\ref{fig:motivation}c), so the cost sits where the contexts are long. At these single-core encode times, the 49\% of recorded calls with contexts of 100K tokens or more account for 78\% of the reference tokenizer's CPU time, the 37\% between 32K and 100K tokens for 20\%, and the 14\% under 32K tokens for only 2\%. Reuse therefore matters for the long, cache-friendly sessions that carry most of this cost and changes little for short ones. Consecutive calls of a session arrive 12\,s apart at the median, so at the median call a session keeps a reference tokenizer core 0.7\% busy. At that rate, one core saturates at about 140 active sessions, against about 6,400 with the incremental path. Even below saturation, with one event loop a single 94\,ms encode delays every request queued behind it.

\begin{figure*}[t]
\centering
\includegraphics[width=\textwidth]{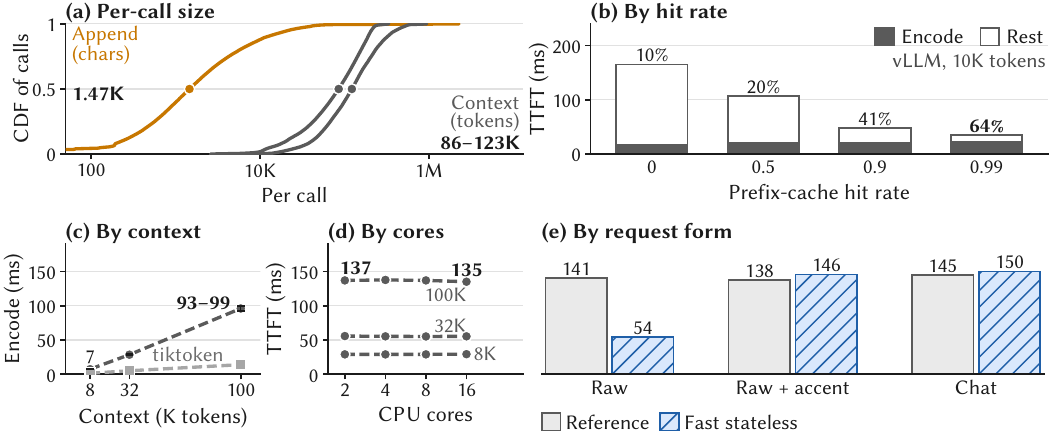}
\caption{Agent requests and what a stateless front end pays for them. (a)~Append (characters, both agents pooled) and context (tokens, Claude Code and Codex) per call in the recorded traces, dots at the medians. (b)~Mean TTFT in a vLLM sweep (Qwen3-1.7B, 10K-token prompts), reference encode timed separately. (c)~Reference encode of Qwen3-8B's tokenizer on one core, clean and accented transcripts, two runs each, and tiktoken on gpt-oss-120b's tokenizer, where it is native, on the same texts. (d,~e)~Median TTFT of Qwen3-8B in SGLang with a cached prefix, raw text by cores, and at 100K tokens by request form against the fast encoder without session state.}
\label{fig:motivation}
\Description{Five small panels in two rows, (a) and (b) above, (c), (d) and (e) below. (a) Two cumulative distributions on a log axis: appends have a median of 1.47 thousand characters, while contexts have medians of 86 and 123 thousand tokens. (b) Stacked bars of time to first token at prefix-cache hit rates 0, 0.5, 0.9 and 0.99; the tokenization part stays near 20 ms while the rest shrinks, so its share rises from 10 to 64 percent. (c) The reference encode takes 7, 28 to 29, and 93 to 99 ms at 8K, 32K and 100K tokens, and tiktoken on gpt-oss-120b's tokenizer about 1.4, 5 and 14 ms. (d) Time to first token stays at 137 and 135 ms at 100K tokens, near 55 ms at 32K and near 29 ms at 8K from 2 to 16 cores. (e) Bars at 100K tokens: the fast stateless encoder cuts the reference's 141 ms to 54 ms on raw text, but takes 146 against 138 ms with one accent and 150 against 145 ms on chat requests.}
\end{figure*}

Adding cores to a stateless serving process does not change this cost, because each encode runs on one core. In SGLang, moving from two to sixteen cores leaves the TTFT of a Qwen3 turn with a cached 100K-token prefix at 137 and 135\,ms (Figure~\ref{fig:motivation}d). Four tokenizer workers, like SGLang's dynamic-batching tokenizer, overlap requests rather than shortening one, and improve the Hugging Face tokenizer's median TTFT by only 6--10\% under a burst and not at all under paced load.

\subsubsection{A faster tokenizer is necessary but not sufficient}
\label{sec:fast-floor}
A faster serial encoder is the first step. \sys uses a corrected build of Gigatoken~\cite{gigatoken2026} that encodes 54--132\,MB/s on one core, an order of magnitude faster than the 3--4\,MB/s of the Hugging Face reference~\cite{hftokenizers}. Where it is native, on gpt-oss's tokenizer, tiktoken~\cite{tiktoken} encodes 100K tokens in 13--15\,ms (Figure~\ref{fig:motivation}c), against 85--86\,ms for the Hugging Face port, and as a whole encode its cost still grows with the context. Gigatoken needs the correction because the released version does not always return the reference's IDs. Over the 3.8 billion web documents of \S\ref{sec:correctness}, each encoded under eleven tokenizers, version 0.9.0 differs from the reference in 558 of the 41.8 billion encodes. Every difference we archived, 554 of the 558, traces to a rare character that its normalizer or character-class tables treat differently, since both follow a newer Unicode version than the reference. Our build uses the reference's Unicode data for both and agrees with the reference on all 41.8 billion encodes. \sys's encoder stays exact by guarding its fast path. When a call contains text its fast path does not handle, such as a special-token string or a character the normalizer would change, it encodes the whole call on an exact slow path instead, so a small exceptional region sets the cost of a large input. Even on English web text, four of the twenty documents of a million characters take this slow path and run about ten times slower than the rest (\S\ref{sec:parallel-results}). On mixed-language text, the fast encoder with Qwen3's tokenizer falls to 3.9\,MB/s, the speed of the reference.

Agent traffic hits these guards on every chat request, because the chat template renders role markers as special-token strings. Rendered with Qwen3's chat template, all 610 public Codex transcripts on SWE-Bench Pro tasks~\cite{swebenchpro_traces,swebenchpro2025} fell back to the encoder's exact slow path and took 94\,ms at the median on one core, against 124\,ms for the reference tokenizer. A stateless front end that encodes the whole request pays this price on every turn, because the offending text stays in the history (Figure~\ref{fig:motivation}e). An incremental front end pays it in full only at the session's first encode. Later, a character the normalizer would change sends just the unsealed tail and the append to the reference (\S\ref{sec:splice-correct}).

\subsubsection{The remaining full encodes set the tail}
After reuse, most continuations need a small repair, but initializations (new sessions and changed histories) and stateless requests still need a full encode, and one long encode can delay many updates that need only a few milliseconds of CPU time. In the traces, calls without a reusable prefix are only 1.0--3.6\% of the total, but they carry complete contexts, often two orders of magnitude larger than an append.

\subsubsection{Challenges of exact reuse and exact splitting}
\label{sec:challenges}
For both reuse and splitting, the difficulty is to return exactly the IDs of a full encode for every request while doing a small fraction of its work, and tokenization resists this because its output is not local.

\emph{An append can change tokens that were already emitted.} If a request ends with \code{pipe} and the next one appends \code{line}, the tokenizer sees \code{pipeline}, whose tokens need not equal the old tokens followed by an encoding of \code{line} (Figure~\ref{fig:append}). The new text can also regroup the whitespace or digits before it, combine with the preceding character under Unicode normalization, or complete a special-token string. How far back the change reaches depends on the tokenizer and on the text, so no fixed overlap is safe. Reuse therefore needs a check, cheap enough to run on every update, of where the old tokens stop changing. fastokens~\cite{fastokens} reuses earlier tokens without a check at the seam and returns different token IDs for 78 of 279 replayed requests (\S\ref{sec:incremental-results}).

\emph{A long input has no cut that is safe in general.} In byte-pair encoding, where a token ends can depend on text far from the cut, by a distance that depends on the tokenizer, so chunks encoded alone need not concatenate to the encoding of the whole. Earlier parallel tokenizers encode overlapping chunks and reconcile the seams afterwards, at the cost of sequential matching and retries. LoPT~\cite{lopt2026}, the most recent of them, matches overlapping tokens by character position alone and, as published, can accept a seam at which the two chunks disagree. The supplement (\S S5) shows a document from LoPT's own benchmark on which the published rule returns wrong token IDs although the seam meets the condition of its correctness theorem, and counts how often this happens. An exact splitter has to find cuts that every stage of the tokenizer respects, without running the tokenizer over the whole input first.

\emph{A running encode cannot be interrupted.} A scheduler can act only between calls into the encoder, so splitting must also yield units of work small enough to schedule.

%% file: sections/03-overview.tex
\section{\texorpdfstring{\sys}{Delta Tokenizer} Design}
\label{sec:design}
\subsection{Overview}
\label{sec:overview}
\sys keeps tokenization state per session and handles three kinds of request. A \emph{continuation} carries a session key, and the text stored for that session is a prefix of the new text; \sys repairs the tail of the stored tokens (\S\ref{sec:incremental}). An \emph{initialization} carries a session key but has no usable state, because the session is new, its history changed, its state was evicted, or the tokenizer changed; \sys encodes it whole on the session's home worker and stores the result. A \emph{stateless request} carries no session; \sys divides it at safe cuts and encodes the chunks on a shared pool (\S\ref{sec:parallel}).

\begin{figure*}[t]
\centering
\includegraphics[width=\textwidth]{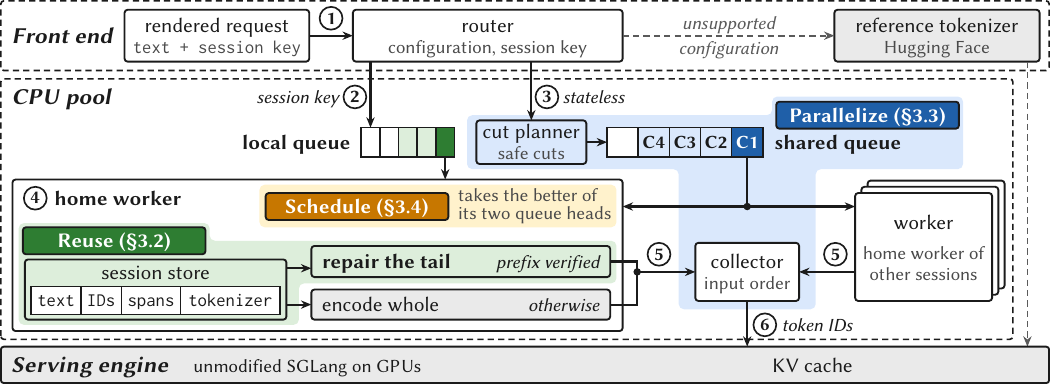}
\caption{Runtime architecture of \sys, with the request path numbered as in \S\ref{sec:overview}. Unsupported configurations take the reference path. The three labels mark where the designs of Figure~\ref{fig:story} live and the sections that describe them.}
\label{fig:architecture}
\Description{A router sends keyed requests to the home worker's local queue. The home worker holds the session store and either repairs the tail after verifying the prefix or encodes the request whole. Stateless requests pass through a cut planner into a shared queue served by every worker. A collector assembles the results in input order for the model engine, and unsupported configurations go to the reference tokenizer.}
\end{figure*}

Figure~\ref{fig:architecture} shows how a request moves through \sys, which sits between request rendering and the serving engine and uses only CPUs. (1)~The router receives the rendered text, the tokenizer configuration, and a session key when the request carries one. A request without an explicit key, such as a chat completion, gets a key derived from its content (\S\ref{sec:integration}), unless it is long enough to be chunked, in which case it is treated as a stateless request. (2)~A keyed request enters the local queue of its home worker, which holds the session store. After a successful repair, the worker updates the stored text and token sequence. (3)~The cut planner divides a stateless request at safe cuts into chunks on a shared queue that every worker serves, and a short stateless request enters that queue as one task. (4)~An idle worker takes the better of its two queue heads (\S\ref{sec:runtime}) and never interrupts an encode it has started. (5)~Workers return integer buffers to the collector, which assembles them in input order. (6)~The adapter returns the ordinary token-ID sequence to the unmodified serving engine. A request whose tokenizer configuration or API options are not admitted bypasses the pool and goes to the reference tokenizer, and the fast encoder's own guards can also send a request straight to the reference.

All paths share one contract. For text $x$ and a frozen tokenizer configuration $c$, the returned IDs must satisfy
\begin{equation}
 \operatorname{IDs}_{\sys}(x,c)=\operatorname{IDs}_{\mathrm{ref}}(x,c).
 \label{eq:contract}
\end{equation}
Sequence-level processing, such as adding a beginning-of-sequence token, is applied once rather than once per chunk. A failed reuse or cut check costs extra work, not exactness.

%% file: sections/04-incremental.tex
\subsection{Exact Incremental Tokenization}
\label{sec:incremental}
\subsubsection[Windowed re-encoding and the stable-boundary splice]{Windowed re-encoding\\ and the stable-boundary splice}%
\label{sec:repair}
\sys takes a suffix of the old text, 512 characters by default, appends the new text, and re-encodes this window (Figure~\ref{fig:repair}a). A reusable prefix must end where the old and new runs of the tokenizer agree on how the text divides into pieces (Figure~\ref{fig:append}), which equal token IDs alone do not show. \sys therefore looks for a run of tokens on which the cached result and the new encoding agree. Tokens are compared as records of ID, source position, and order among tokens with the same position. Window positions are first shifted into request coordinates, so repeated text elsewhere cannot produce a false match. A run counts only while the token indices stay consecutive in both sequences, so equal tokens at scattered positions never justify dropping the tokens between them. The first tokens of the window, which its artificial left edge may have changed, are ignored.

\begin{figure}[t]
\centering
\includegraphics[width=\columnwidth]{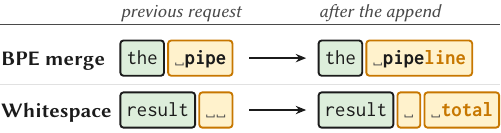}
\caption{An append can change tokens that were already emitted. Amber marks the tokens that change, and \textvisiblespace{} marks a space. Both examples hold for Qwen3 and Llama-3.1.}
\label{fig:append}
\Description{Two examples. Appending "line" to "the pipe" turns the token " pipe" into " pipeline". Appending "total" to "result" followed by two spaces splits the two-space token into a single space and the token " total".}
\end{figure}

A matching run is accepted only if it covers more characters than the longest token the vocabulary can produce and contains a character transition that the tokenizer's splitting rules always treat as a piece boundary, such as a letter followed by a digit. The pieces on each side of such a transition do not depend on the text on the other side, which is what makes the splice exact (\S\ref{sec:splice-correct}); the length condition is an extra margin that the argument does not need.

When the check succeeds, let $i$ and $j$ be the positions right after the matching run in the cached sequence $T_{\mathrm{old}}$ and in the window sequence $T_{\mathrm{win}}$. The new sequence is
\begin{equation}
 T_{\mathrm{new}}=T_{\mathrm{old}}[0{:}i]\cat T_{\mathrm{win}}[j{:}],
 \label{eq:splice}
\end{equation}
so the matching run is kept once, from the cache, and the start of the window, which may have been grouped differently, never reaches the result (Figure~\ref{fig:repair}b). When the check fails, \sys doubles the window, up to five times, and then lets the reference encode the text that the session has not sealed, as described below. The initial window only sets how much work to try first, and the check alone decides whether the result is exact. In our replay of 56,052 recorded appends with this rule, all but 3 splice on the first window, those 3 after one widening, and none needs the reference.

\begin{figure*}[t]
\centering
\includegraphics[width=\textwidth]{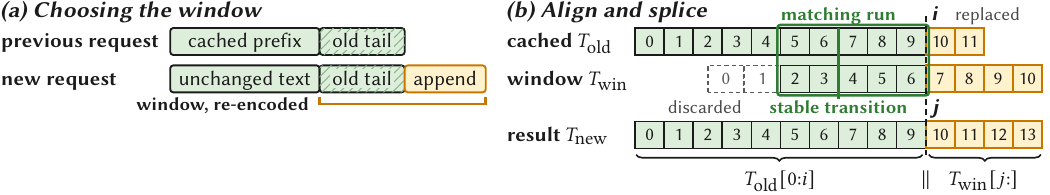}
\caption{Incremental repair. (a)~Only the old tail and the append are re-encoded. (b)~A matching run that contains a stable transition lets the result keep the cached IDs before index $i$ and take the window IDs from index $j$ on. A failed check widens the window or hands the unsealed tail and the append to the reference.}
\label{fig:repair}
\Description{An old prefix and tail become a new request with an append. Only an overlapping tail window is encoded. A checked matching run selects a splice between retained cached IDs and a newly encoded suffix.}
\end{figure*}

\subsubsection{Why the splice is exact}
\label{sec:splice-correct}
The argument covers the complete tokenizer, not only the BPE merges, and must rule out two changes. The append can change the end of the cached encoding, and starting the window inside a piece can change the start of the new encoding. The stable transition in the matching run separates the two regions. Before it, the cached pieces are those of a full encode, because no piece crosses the transition and the append lies beyond it. After it, the window's pieces are those of a full encode, because the window's left edge lies before it. The tokens of the matching run belong to both encodings, so keeping their cached copy changes nothing, and since BPE is deterministic within each piece, the result is exactly the full encode. Written with the notation of Equation~\ref{eq:splice}, the argument is the following statement, which Section~S3 of the supplement proves with its boundary conditions.

\Needspace{9\baselineskip}%
\begin{proposition}[Splice through an agreed region]
\label{prop:splice}
Let the cached encoding $T_{\mathrm{old}}$ of a text $A$ be exact, let $B$ be an append, and let $T_{\mathrm{win}}$ encode a window $(A\cat B)[s{:}]$. Suppose a matching run contains a boundary $p$ such that the pieces of $A$ before $p$ do not change when $B$ is appended and the pieces of the window after $p$ are those of $A\cat B$. If $i$ and $j$ are the positions just after the last entry of the run in $T_{\mathrm{old}}$ and $T_{\mathrm{win}}$, then $T_{\mathrm{old}}[0{:}i]\cat T_{\mathrm{win}}[j{:}]$ is the reference encoding of $A\cat B$.
\end{proposition}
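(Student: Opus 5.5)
I would prove the proposition by modelling the reference tokenizer as a piecewise map. For a text $x$, the pipeline of special-token matching, normalization and pre-tokenization produces a sequence of pieces $P(x)=(\pi_1,\dots,\pi_m)$, each with a source span. The encoding is the concatenation of $\mathrm{BPE}(\pi_k)$, and each token record carries the position of its piece and its order within that piece. Since BPE is a deterministic function of the piece's bytes, a token record is determined by the piece it belongs to together with its rank inside that piece. The plan is therefore to reduce the statement about token sequences to a statement about piece sequences. After that, the token-level claim follows by concatenation.

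\textbf{Step 1.} Split $P(A)$ and $P(A\cat B)$ at $p$. By hypothesis no piece crosses $p$, so $P(A)=L\cat R_A$, where $L$ consists of the pieces ending at or before $p$. The hypothesis "pieces of $A$ before $p$ do not change when $B$ is appended" gives $P(A\cat B)=L\cat R$ for some $R$. The second hypothesis, after shifting window positions into request coordinates, says that the pieces of the window after $p$ are exactly $R$. Writing $P(\text{window})=W\cat R$, the window encoding is $T_{\mathrm{win}}=\mathrm{enc}(W)\cat\mathrm{enc}(R)$. Exactness of the cache gives $T_{\mathrm{old}}=\mathrm{enc}(L)\cat\mathrm{enc}(R_A)$. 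Consequently the reference output for $A\cat B$ is $\mathrm{enc}(L)\cat\mathrm{enc}(R)$.

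\textbf{Step 2.} Locate the splice indices. Because $p$ lies inside the matching run, the run consists of two parts: entries whose source positions precede $p$, which lie in $\mathrm{enc}(L)$ on the cached side, and entries at or after $p$, which lie in $\mathrm{enc}(R_A)$ on the cached side and in $\mathrm{enc}(R)$ on the window side. The run is consecutive in both sequences and its records agree, including positions and intra-position order. Hence the run's suffix after $p$ is a common initial segment of $\mathrm{enc}(R_A)$ and $\mathrm{enc}(R)$. Its prefix before $p$ is a common final segment of $\mathrm{enc}(L)$ and the tail of $\mathrm{enc}(W)$. Let $q$ and $q'$ be the indices at $p$ in $T_{\mathrm{old}}$ and $T_{\mathrm{win}}$. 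Then $T_{\mathrm{old}}[0{:}q]=\mathrm{enc}(L)$ and $T_{\mathrm{win}}[q'{:}]=\mathrm{enc}(R)$. Moreover $T_{\mathrm{old}}[q{:}i]=T_{\mathrm{win}}[q'{:}j]$, because these are the agreeing run entries after $p$. Therefore
\[
T_{\mathrm{old}}[0{:}i]\cat T_{\mathrm{win}}[j{:}]
=\mathrm{enc}(L)\cat T_{\mathrm{win}}[q'{:}j]\cat T_{\mathrm{win}}[j{:}]
=\mathrm{enc}(L)\cat\mathrm{enc}(R),
\]
which by Step 1 is the reference encoding.

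\textbf{Main obstacle.} The step I expect to need the most care is the identification of $q$ and $q'$ as the same boundary in both sequences. This requires that a token record at source position $p$ starts a piece in both encodings, and that no token in either sequence straddles $p$. Normalization can shift offsets, and a special-token match could span $p$, so this is where the boundary conditions enter. I would state explicitly that $p$ is a piece boundary in the pre-normalization coordinates of both runs, and that no special token or normalization unit spans it. This is exactly what "stable transition" is meant to guarantee. With that in place, records with position less than $p$ and records with position at least $p$ separate cleanly, and both index positions are defined by the same position-based criterion. I would also treat separately the edge cases where the run begins or ends exactly at $p$, where $j$ equals the window length, and where $R$ is empty. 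These cases need only checking that the slice identities still hold.
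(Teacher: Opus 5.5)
Your proposal is correct and follows the paper's own argument: split both encodings at the boundary $p$, use the first hypothesis for the cached prefix and the second for the window suffix, and observe that taking the agreeing run entries after $p$ from the cache rather than from the window changes nothing. The paper's supplement states the hypothesis directly as agreement of the cached and window entries from $p$ through the splice entry, which makes your identification of $q$ and $q'$ immediate; your decomposition into $\mathrm{enc}(L)\cat\mathrm{enc}(R)$ is a more detailed write-up of the same short proof.
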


The stable transition is what supplies $p$. Starting from an exact first encode, the proposition applies again after every accepted update, so exactness carries through a whole session.

A chat turn meets two checks on this path, with different costs. The first concerns special-token strings, such as the role markers that the chat template inserts. The window runs through the complete pipeline, which matches special-token strings before the fast path, so a role marker inside the window is recognized as a unit, as in the reference pipeline, and the repair goes ahead (\S\ref{sec:incremental-results}). The second is the normalizer check that a full encode also applies. If the normalizer would change any character of the window, \sys does not splice. The session keeps its IDs fixed up to a sealed boundary, a stable transition a short distance before the end of its text, and the reference encodes only the retained tail after it together with the append (supplement, \S S3), so the cost follows the append rather than the context. In agent traffic the two checks fire at very different rates. In a turn-by-turn replay of the 610 chat transcripts of \S\ref{sec:fast-floor}, every append carries role markers, and 19,609 of the 19,620 appends are repaired on the fast path. The other 11 bring in a character that the normalizer changes. For them the reference encodes fewer than 5K characters beyond the append, and the call takes 4--98\,ms as the append grows from 2.9K to 106K characters, whatever the length of the history. The 610 first turns have no session state yet and are encoded whole, and since each contains special-token strings, the fast encoder's guard sends the whole call to the reference (\S\ref{sec:fast-floor}). The argument also relies on the tokenizer's configuration, which \sys checks before admitting a tokenizer (\S\ref{sec:cuts}).

\subsubsection{Session state, verification, and misses}
The session key only locates a candidate record, which is reused only after its text prefix and tokenizer identity are verified. If a template changes an earlier part of the conversation, the request is not an append even when the application considers it the next turn, and \sys encodes it in full, as it does for evicted state, a lost home worker, or a changed tokenizer, so stale IDs are never reused.

A session record holds the text, a four-byte ID per token, source positions for the tail that a repair may touch, and a few prefix digests, on the session's home worker, with a revision that the next request must extend. Beyond the text, which a session holds in any case, this adds about 4 bytes per token of history, plus prefix blocks shared through the store (Table~\ref{tab:memory}). The store keeps up to 128\,MiB of session text resident, evicting the least recently used session first (supplement, \S S6.3). The store is independent of the model's KV cache, so tokenizer state survives the model's cache eviction, and evicting a session's tokenizer state loses a CPU optimization, not correctness.

%% file: sections/05-parallel.tex
\subsection{Chunk-Parallel Full Tokenization}
\label{sec:parallel}
\subsubsection{Safe cuts from pre-tokenizer transitions}
Initializations and stateless requests still need a full encode, and \sys parallelizes the stateless ones. Since no position in general BPE is safe to cut without reading the whole input (\S\ref{sec:challenges}), earlier parallel tokenizers cut blindly, encode overlapping chunks, and repair the seams with retries (Figure~\ref{fig:contrast}a).

Practical tokenizers are more structured. The pre-tokenizer first splits the text into pieces that BPE never merges across, and most piece boundaries are decided by the two characters that meet there, such as a letter followed by a digit. If a cut falls on such a boundary, both sides consist of the same pieces as in the whole input, so any exact encoder can encode each chunk on its own and the outputs concatenate (Figure~\ref{fig:contrast}b). Parallel BPE thus reduces to finding such boundaries cheaply and making sure that the stages before the pre-tokenizer do not move them.

\begin{figure}[t]
\centering
\includegraphics[width=\columnwidth]{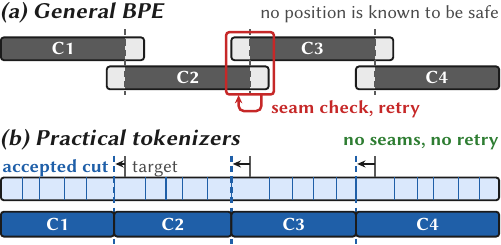}
\caption{Splitting a request for parallel encoding. (a)~Earlier systems cut blindly, encode overlapping chunks, and check the seams, retrying when a check fails. (b)~\sys moves each target cut back to a transition that ends a piece in every context, so the chunks are disjoint and nothing is retried.}
\label{fig:contrast}
\Description{Two panels over the same text bar. In the first, three blind cuts produce overlapping chunks whose seams are checked and retried. In the second, the text is divided into pieces, each target cut moves left to the nearest piece boundary, and the resulting chunks are disjoint.}
\end{figure}

\subsubsection{Preserving special tokens and normalization across cuts}
\label{sec:cuts}
Figure~\ref{fig:cuts} shows the three conditions \sys checks. First, the cut must fall on a pre-tokenizer boundary that holds in every context. \sys accepts a fixed set of character-class transitions for each admitted tokenizer, such as letter to digit or digit to punctuation, and leaves out the cases where the splitting rules look further. Whitespace can attach to the following word, an apostrophe can begin a contraction, and newlines follow rules of their own, so none of these is a cut by itself. The boundary must also survive the cut. At a letter-to-digit transition, each chunk keeps the letter run or digit run that the whole text has on its side, so the pieces are unchanged. A whitespace run, whose grouping depends on the next character, changes when it ends a chunk, which is why no transition out of whitespace is accepted.

Second, the cut must not fall inside any occurrence of a special-token string. These strings are recognized before anything else and map to their own IDs, so splitting one would turn it into ordinary text. The same stage yields a second kind of cut. When a tokenizer's special tokens neither strip surrounding whitespace nor depend on word boundaries, as holds for all 26 of Qwen3's, the text between two special tokens is encoded as a separate segment anyway, so the edges of every special-token string are safe cuts. This cut mode is evaluated on its own and is not used by the SGLang adapter. Cutting a chat prompt at its role markers keeps every chunk on the fast path (\S\ref{sec:parallel-results}).

Third, Unicode normalization must not reach across the cut. Normalization can combine a character with a following accent or rewrite one character as two~\cite{uax15}, so a boundary that exists in the raw text can disappear after normalization. The Devanagari letter U+0958, for example, normalizes to a base letter followed by a combining mark, so a slash that followed the letter now follows the mark and joins it in one piece. For tokenizers that normalize, such as Qwen3, \sys therefore checks two conditions within a short window around the cut. Normalizing the two sides separately must give the same text as normalizing them together, and the transition must still be a boundary in the normalized text. The window runs between two anchor characters at which the normalizer is known to factor, found within 256 code points on each side of the cut. Equality inside such a window is equality for the whole text, and a cut with no anchors in reach is rejected (supplement, \S S4). The U+0958 example gives the same text either way but loses the boundary. Without this check, cuts change the output on 2.1\% of a set of adversarial strings (\S\ref{sec:correctness}).

\begin{figure}[t]
\centering
\includegraphics[width=\columnwidth]{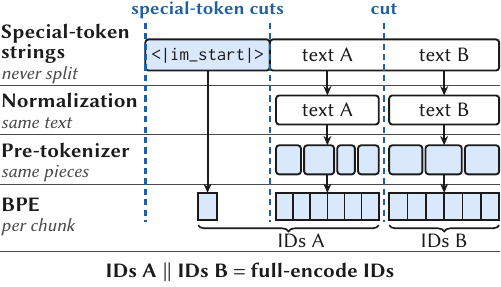}
\caption{A cut must preserve every stage that precedes BPE. Special-token strings stay intact, normalization gives the same text on both sides, and the pre-tokenizer the same pieces, so the outputs concatenate to the full encode.}
\label{fig:cuts}
\Description{A special-token string bypasses normalization and pre-tokenization. Ordinary text on the two sides of a verified cut normalizes independently, retains the same pieces, and produces concatenable ID subsequences.}
\end{figure}

Let $F$ denote tokenization without sequence-level processing. The three checks combine into the following statement, which Section~S4 of the supplement proves.

\Needspace{6\baselineskip}
\begin{proposition}[Concatenation at accepted cuts]
\label{prop:chunks}
For a frozen, admitted tokenizer configuration, let $x_1,\ldots,x_k$ divide a text at positions that satisfy the special-token, normalization, and pre-tokenizer conditions above. Then
\begin{equation}
 F(x_1\cat\cdots\cat x_k)=F(x_1)\cat\cdots\cat F(x_k).
 \label{eq:chunks}
\end{equation}
\end{proposition}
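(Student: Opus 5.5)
We model $F$ as the composition of the pipeline stages of \S\ref{sec:background}: $F=\mathrm{BPE}^{*}\circ P\circ N\circ S$. Here $S$ splits the text at occurrences of special-token strings into an alternating list of special tokens and ordinary segments. $N$ normalizes each ordinary segment. $P$ runs the pre-tokenizer on each normalized segment and yields a list of pieces. $\mathrm{BPE}^{*}$ maps each piece independently to its IDs and concatenates them, emitting special tokens as their fixed IDs. The proof reduces the statement to one cut: by induction on $k$, writing $x_1\cat\cdots\cat x_k=(x_1\cat\cdots\cat x_{k-1})\cat x_k$, it suffices to show $F(u\cat v)=F(u)\cat F(v)$ for a single accepted cut between $u$ and $v$. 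We must check that the three conditions at the cut between $x_{k-1}$ and $x_k$ remain true when the left side is the whole prefix rather than $x_{k-1}$ alone. This holds because each condition is stated locally: the special-token condition concerns occurrences overlapping the cut, the normalization condition uses a window bounded by anchors, and the pre-tokenizer condition depends on the two characters meeting there. Changing text further left, beyond the anchor and beyond any special-token occurrence, does not affect them.

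For one cut we push the decomposition stage by stage. \textbf{Stage $S$:} since no special-token occurrence in $u\cat v$ straddles the cut, the leftmost-longest matching used by the reference produces the same matches on $u$, on $v$, and on $u\cat v$. One caveat is that matching in $u$ alone could find a shorter match truncated at the end of $u$. We rule this out by requiring, as part of the special-token condition, that the cut lies outside every occurrence, and we argue that the reference's scan is decided by the first position at which a match starts. So $S(u\cat v)=S(u)\cat S(v)$, except that the ordinary segment $\sigma$ containing the cut becomes $\sigma=\sigma_u\cat\sigma_v$. If the cut lies at a special-token edge (the second cut mode), the segments are disjoint outright, and the admission check that special tokens do not strip whitespace or depend on word boundaries ensures nothing else couples them. \textbf{Stage $N$:} we use the anchor lemma. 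If $a$ and $b$ are characters at which the normalizer factors, that is $N(xay)=N(x)\,N(ay)$ and similarly at $b$, then equality of $N$ on the window $[a,b)$ computed jointly versus separately lifts to $N(\sigma)=N(\sigma_u)\cat N(\sigma_v)$. This is the step I expect to be the main obstacle: it needs a precise statement of when NFC factors at a character, namely a starter with canonical combining class $0$ that does not compose with its predecessor. It also needs the bound that no composition reaches past an anchor, so it depends on properties of the Unicode data that the admitted configuration freezes. I would prove it by appealing to the NFC stability property of UAX\#15: a string can be split at stable code points, and normalization commutes with the split.

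\textbf{Stage $P$:} the second half of the normalization condition makes the transition at the junction of $N(\sigma_u)$ and $N(\sigma_v)$ one of the admitted class transitions, such as letter$\to$digit. For each admitted regex we show a lemma. For the piece boundary at such a transition, every alternative of the pre-tokenizer pattern that could match across it is excluded by character class. Moreover, the last match in $N(\sigma_u)$ alone ends exactly at the end of that string, because the run on its side is kept intact and no alternative needs lookahead past a letter or digit. The excluded whitespace, apostrophe and newline transitions are exactly the ones where lookahead matters. Hence $P(N(\sigma))=P(N(\sigma_u))\cat P(N(\sigma_v))$. \textbf{Stage $\mathrm{BPE}^{*}$:} since it acts piecewise and is deterministic, the piece-list identity gives the ID identity, which completes the single cut and hence the induction.
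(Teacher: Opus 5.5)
Your proof is correct and follows the paper's own argument: you push the division through the pipeline stage by stage. No special-token occurrence, selected or not, crosses a cut; anchors bounding a window make the normalizer factor at the cut, with the transition rechecked on the normalized text, and your UAX\#15 stable code points stand in for the paper's tested anchor predicate; accepted transitions with their end-of-input case preserve the pieces; and BPE is deterministic per unit. The only difference is that you induct on $k$ where the paper treats all cuts at once, and the check your induction needs runs the other way from the one you wrote: the last cut's conditions already hold in the whole text, whereas applying the hypothesis to $x_1\cat\cdots\cat x_{k-1}$ needs the earlier cuts to stay accepted once the text ends at the last cut, where an earlier cut's normalization window can be truncated by the new end of text.
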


The proof follows the stages of the pipeline. The special-token condition keeps the special-token scan the same on each chunk, the normalization condition keeps the text that reaches the pre-tokenizer the same, and the transition condition keeps the ordered pieces the same. BPE is deterministic within each piece, so every chunk emits exactly its part of the whole-input IDs, and sequence-level processing is applied once to the combined result. The statement holds for any division at accepted cuts, so it needs no bound on the gap between cuts, no equal chunk sizes, and no particular completion order.

A tokenizer is admitted only after \sys checks its configuration, not its family name, uses character-class tables that match the reference, and probes the transition set against the reference pre-tokenizer. Eleven byte-level BPE tokenizers are admitted (supplement, \S S1), and other tokenizers, even new versions of admitted ones, are encoded whole.

\subsubsection{Chunk planning and execution}
The planner places tentative cuts at evenly spaced positions and scans backward from each, within a bounded distance, for an accepted cut. It never runs the pre-tokenizer over the whole request, which keeps planning a million characters into 32 chunks under 0.16\,ms. A stretch without an accepted cut yields fewer chunks, and a request without any runs whole, so the planner can lose parallelism but never exactness. Of 102.4 million web documents, only 35 or 36 per tokenizer had no accepted cut.

Each worker owns its own encoder instance, and a worker error fails the request in the pool rather than returning a partial sequence. Chunks of one request may use different encoders, the fast encoder for one and the reference for another, because both satisfy the same contract. On some inputs this matters as much as parallelism. A fast encoder's whole-call guard turns one exceptional character into a slow encode of the entire request, whereas with chunks the guard fires only for the chunk that contains the character (\S\ref{sec:parallel-results}). Chunks also give the pool a unit of work short enough to run between two updates, which the next section builds on.

%% file: sections/06-runtime.tex
\subsection{Tail-Aware Scheduling and Serving Integration}
\label{sec:runtime}
\subsubsection{Chunk-level scheduling in the CPU pool}
Once reuse makes most requests short, giving a short request priority helps only while the competing work is still queued, because an encoder cannot be interrupted in the middle of an encode. Reordering reduces waiting but cannot shorten service that has already started. With chunks, the scheduler can pick a waiting update whenever a chunk finishes, and several workers can advance different chunks of the same request at once, so a request behind it no longer waits for the whole encode on one worker (Figure~\ref{fig:scheduling}).

\begin{figure}[t]
\centering
\includegraphics[width=\columnwidth]{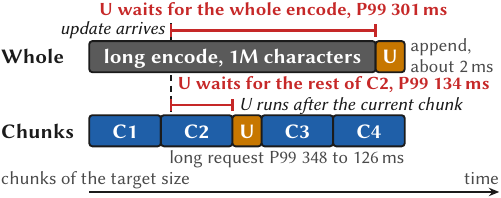}
\caption{Chunk boundaries are scheduling points. A queued update cannot interrupt a whole encode, but with chunks it can start once the current chunk completes. Each row shows one worker, and the P99 values are from the standard mix of \S\ref{sec:queue-results} with chunks in FIFO order.}
\label{fig:scheduling}
\Description{On one worker, an update arriving during a long encode waits for it to finish. With chunks, the scheduler can run the update after the current chunk.}
\end{figure}

Each worker keeps a local queue for the jobs of its sessions, and stateless tasks and chunks go to one shared queue. An idle worker takes the better of the two heads. In arrival order, the older task wins. Under shortest-first order, a task ranks by its estimated service time minus a credit that grows while it waits, which is meant to keep long requests from starving. For a task of $N$ characters the estimate is $a+b\,N(1+5\sigma)$, where $\sigma$ is the share of non-ASCII characters, $a$ is 0.2\,ms, and the per-character cost $b$ depends on the kind of task, since a session task mostly encodes an append. After each task, $b$ moves toward the value implied by the measured service time, as an exponentially weighted average with weight 0.05. The non-ASCII share is free for ASCII text and is sampled from a few short windows of a longer text, so the estimate stays cheap and rough. A wrong estimate changes only the order, never the tokens. The scheduler switches only at chunk boundaries, so the chunk size, more than the queue order, sets how long an update can wait behind a long request. The FIFO experiment of \S\ref{sec:queue-results} keeps the queue in arrival order and changes only the unit of work, which isolates this effect.

Not every job goes to the shared queue. Each session's home worker is chosen by a hash of its key. Session state stays on its home worker to avoid copying it between processes, so continuations and initializations run there and initializations remain whole jobs, which matters for workloads dominated by new sessions~(\S\ref{sec:queue-results}). The first error from any task fails its request, a worker that exits is retired and its queued session tasks move to the new home of their keys, and the adapter then encodes a failed request in its own process (\S\ref{sec:integration}). In 103 pool runs with 389,240 requests, none failed or took that path.

The main size parameter is the target chunk size. Larger chunks delay waiting updates, and smaller chunks need more dispatches. The pool splits a stateless request into chunks of about the target size, 64K characters by default, with at most one chunk per worker, and leaves requests shorter than 128K characters whole, 64K in the SGLang runs. On four workers the tail is flat over an eightfold range of targets. On 64 workers 32K-character chunks work best~(\S\ref{sec:capacity}).

\subsubsection{Integration behind the serving engine's interface}
\label{sec:integration}
\sys integrates with SGLang through a tokenizer adapter loaded at process startup, without changes to the serving engine's source. The adapter offers the same interface as the Hugging Face tokenizer it replaces, and calls with options it does not model, such as paired inputs and some truncation modes, go to the reference implementation unchanged. It takes the session identifier or cache salt when one is present and otherwise, as for a chat completion that SGLang renders before encoding, finds the session by content, from the first few thousand characters that stay the same from turn to turn (supplement, \S S6.3).

The adapter compares the tokenizer configuration with the admitted one at construction and detects later changes before reusing accelerated state, and a request whose configuration differs goes to the reference implementation. A sampled shadow check re-encodes served text with a separate copy of the reference tokenizer and compares the IDs to catch regressions in deployment.

The TTFT experiments of \S\ref{sec:incremental-results} run the adapter in process, without the pool, on the event-loop thread where SGLang encodes by default, so one long encode holds up every request behind it. The experiments of \S\ref{sec:e2e} hand requests to the worker pool instead, which answers asynchronously, and if a worker fails the adapter encodes the request in process, so the IDs never depend on the pool. Sending a session's call through the pool adds about 0.8\,ms at 100K characters and 5.7\,ms at a million, so small requests are cheaper in process.

On either path the adapter returns the complete ID list that the serving engine expects, so a repaired turn costs more as the context grows even though the encoding work does not. Through \sys's Python session interface, appending 1,300 characters to a session just created from a million characters takes 7.6\,ms, although the repair replaces only about 290 tokens. The same call takes 0.3\,ms at 16K characters, and the difference grows by about 34\,ns per token of the result, the cost of handing back the IDs as Python integers. Through the SGLang adapter, appending 29 characters to a million-character session takes 5.3\,ms, about the same as an adapter call that finds its text stored and only returns the IDs (supplement, Table~S15). An engine that accepted token updates would avoid this copy.

%% file: sections/07-evaluation.tex
\section{Evaluation}
\label{sec:evaluation}
We evaluate the three mechanisms of the design one at a time and then together in SGLang. The first experiments ask what incremental reuse adds beyond a fast stateless encoder, the second how fast the remaining full encodes run, and the third whether chunks work as scheduling units. The remaining experiments cover served requests in SGLang, including long requests that the model serves, many-core scaling, and exactness.

\subsection{Experimental Setup}
The main CPU experiments use sixteen pinned cores on one socket of a two-socket Zen 5 server (two 16-core AMD EPYC 9115, up to 4.1\,GHz, SMT disabled). Single-core measurements and repeats also run on two 16-core Zen 5 desktops (Ryzen 9 9950X, up to 5.7\,GHz). The many-core measurements, the capacity study of \S\ref{sec:capacity} and the Rust prototype of \S\ref{sec:parallel-results}, come from a rented 128-core Zen 3 node with two 64-core EPYC 7763 (up to 3.5\,GHz), on which CPU use outside our container stayed below 0.7 cores during the capacity runs (supplement, \S S6.6).
SGLang v0.5.19 runs the models on the server's 96\,GB RTX PRO 6000 or on a desktop's 32\,GB RTX 5090, while tokenization stays on the CPU. The reference is Hugging Face \code{tokenizers} 0.22.2 with frozen Qwen3, Llama-3.1, \mbox{gpt-oss}, and \mbox{DeepSeek-V3} tokenizer configurations.

The CPU inputs include English web text, code, rendered chat, and multilingual text. The SGLang sessions take their text from the instructions and tool outputs of the public Codex transcripts of \S\ref{sec:fast-floor}. Sizes are in characters, with 1M meaning one million characters, unless a size is given in tokens. Isolated encodes use distinct documents so that no document cache is measured, and worker pools are prewarmed on other text.

The scheduling testbed performs real encodes under controlled arrivals, and its standard mix has 64 sessions with at most one request in flight each and a few stateless requests. The sessions replay agent trajectories from SWE-smith~\cite{swesmith2025}, with each first turn extended by web text to the stated context, and each run measures 20 seconds after a four-second warmup. Latency is completion time minus scheduled arrival, so waiting caused by a session's own earlier request is included, and compared designs replay the same arrivals. Sweeps use one run per point, and the main comparisons are repeated with three arrival seeds. A warm turn is any turn after a session's first, when the model has cached its prefix, and TTFT intervals are bootstrap intervals over warm turns.

\paragraph{Baselines.} We refer to the configurations by fixed names. Reference is the Hugging Face tokenizer, also SGLang's default. Fast stateless is \sys's encoder without session state, so every call is encoded whole by the fast path or, when guarded, by the reference. \sys serial is the adapter in process with session reuse and without the pool, and \sys pool is the chunked pool with its shared queue in shortest-first order, four workers unless stated. fastokens 0.1.2~\cite{fastokens}, the optional backend of this SGLang version, reuses work across calls without a boundary check. LoPT~\cite{lopt2026}, whose code is not public, is reimplemented as published with the same encoder and chunk count. The parallel path is also compared with serial encoding by the same encoder and with a one-worker pool, whose own overhead changes the apparent speedup. For scheduling we compare the chunked pool with three whole-request pool designs (\S\ref{sec:queue-results}) and, in SGLang, with its multi-worker and dynamic-batching settings (\S\ref{sec:workload}). Table~\ref{tab:baselines} summarizes the baselines by whether they return the reference IDs and whether they reuse work across calls.

\begin{table}[t]
\centering
\footnotesize
\setlength{\tabcolsep}{3pt}%
\begin{tabularx}{\columnwidth}{@{}lll>{\raggedright\arraybackslash}X@{}}
\toprule
Baseline & Wrong IDs & Reuse & Result \\
\midrule
Reference & none & no & 93--99\,ms per encode (Qwen3) \\
SGLang workers & none & no & median TTFT 6--10\% below reference \\
tiktoken & none & no & 14.0\,ms vs.\ 85.8 for Hugging Face \\
Gigatoken 0.9.0 & 558 of 41.8B & no & compared in corrected form \\
Fast stateless & none & no & no faster than reference on chat \\
fastokens 0.1.2 & 78 of 279 & unchecked & as fast as \sys \\
LoPT & 30 of 21,789 & no & 1.6--2.1$\times$ slower than \sys \\
\bottomrule
\end{tabularx}
\caption{Baselines. Encode times are at 100K tokens. SGLang workers means four tokenizer workers or dynamic batching, measured under a burst. tiktoken runs on gpt-oss-120b's tokenizer. Fast stateless is our corrected Gigatoken without session state, and LoPT is our reimplementation. Wrong IDs counts encodes that differ from the reference, on the web corpus of \S\ref{sec:correctness} for Gigatoken, on replayed SGLang requests for fastokens, and on CLongEval for LoPT (supplement, \S S5). Reuse means carrying a session's tokens from one call to the next.}
\label{tab:baselines}
\end{table}

Two fast encoders appear in only part of the evaluation. tiktoken~\cite{tiktoken} is exact and fast where it can load the vocabulary, but it applies no normalizer, so it cannot serve Qwen3's tokenizer, which normalizes to NFC. On gpt-oss-120b's tokenizer, where it is native, it encodes the 100K-token turns of Figure~\ref{fig:motivation}c in 14.0\,ms against 85.8\,ms for the Hugging Face port, and the 8K and 32K turns in 1.4 and 5.1\,ms against 6.6 and 25.4\,ms. It encodes a whole Codex transcript in 11.0\,ms against 113\,ms, with equal IDs on all 610. On the same transcripts Fast stateless takes 2.0\,ms as plain text but 86\,ms in chat form, where every call is guarded, so for chat requests on this tokenizer tiktoken is the faster stateless encoder. It still encodes every call whole, so its cost grows with the context, while a repaired turn of \sys at 100K tokens takes about 2\,ms (\S\ref{sec:incremental-results}). Gigatoken as released returns different IDs from the reference on rare characters (\S\ref{sec:fast-floor}). Since a serving front end must return the reference IDs, we compare against our corrected build, the Fast stateless configuration, rather than the released one. Neither keeps a session's token sequence across calls. Gigatoken's cache of pre-tokenized pieces speeds up text it has seen before, but every call is still encoded whole.

\paragraph{Metrics.} For served requests we report the median and 99th-percentile (P99) TTFT of warm turns, measured from each turn's scheduled arrival. For the front end alone we report the median and P99 latency of the tokenizer stage against the arrival rate, and the highest arrival rate that meets a P99 objective of 50\,ms. Isolated encodes are reported as single-core throughput in MB/s and as speedup over serial encoding or over the one-worker pool. Exactness is the equality of the complete token-ID sequence with the reference, reported as the number of comparisons and the number that differ, plus the prompt-token counts that SGLang reports.

\subsection{Incremental Tokenization}
\label{sec:incremental-results}
We first measure how much of the token sequence a repair actually touches. With a fixed 1.3K-character append, the full result grows from about 3.7K to 662K tokens as the previous context grows from 16K to three million characters, while the part that the append replaces after the splice stays near 300 tokens.

Table~\ref{tab:memory} gives the state a session keeps for its next append. Beyond the transcript, \sys adds the IDs, tail positions, and shared prefix blocks of \S\ref{sec:incremental}, computed from the record layout and checked against the served store at 100K tokens, within 0.2\%. At 100K tokens this adds 1.1\,MB to a 0.4\,MB transcript, more than ten thousand times less than the KV cache in Qwen3-8B.

\begin{table}[t]
\centering
\small
\setlength{\tabcolsep}{4pt}%
\begin{tabular}{@{}rrrrrr@{}}
\toprule
 & Session & \multicolumn{3}{c}{Added by \sys} & Model \\
\cmidrule(lr){2-2}\cmidrule(lr){3-5}\cmidrule(lr){6-6}
Context & Text & IDs & Blocks & Added & KV cache \\
\midrule
8K & 31\,KB & 37\,KB & 117\,KB & 155\,KB & 1.18\,GB \\
32K & 125\,KB & 133\,KB & 562\,KB & 696\,KB & 4.72\,GB \\
100K & 391\,KB & 526\,KB & 562\,KB & 1,089\,KB & 14.75\,GB \\
250K & 977\,KB & 1,051\,KB & 562\,KB & 1,614\,KB & 36.86\,GB \\
500K & 1,954\,KB & 2,099\,KB & 562\,KB & 2,662\,KB & 73.73\,GB \\
1M & 3,908\,KB & 4,196\,KB & 562\,KB & 4,759\,KB & 147.46\,GB \\
2M & 7,816\,KB & 8,391\,KB & 562\,KB & 8,954\,KB & 294.91\,GB \\
\bottomrule
\end{tabular}
\caption{State per session by context in tokens (1\,KB = 1,000\,B), against the KV cache of Qwen3-8B for the same tokens. Text is the session's transcript. IDs include the tail's positions, blocks are prefix blocks shared through the store, counted as if no session shared them, and Added is IDs plus blocks plus digests under 1\,KB. The rows from 500K on follow the same accounting.}
\label{tab:memory}
\end{table}

Figure~\ref{fig:sglttft} shows warm-turn TTFT in SGLang with \sys serial, each turn appending 400 characters. On Qwen3-8B, median TTFT falls by 13\% at 16K tokens, 33\% at 32K, 49\% at 64K, and 61\% at 100K (Figure~\ref{fig:sglttft}a). The median goes from 55.7 to 37.5\,ms at 32K and from 136.8 to 53.7\,ms at 100K, where the P99 falls from 158.5 to 75.0\,ms. At 8K it goes from 29.1 to 29.2\,ms, since a 7\,ms reference encode does not show in TTFT. On the desktop with the RTX 5090 the median falls by 0.3\% at 8K tokens and by 28\% at 32K. On gpt-oss-120b it falls by 24, 42, and 45\% at 100K, 250K, and 350K characters, about 26K, 64K, and 90K of its tokens (Figure~\ref{fig:sglttft}c), and the P99 by 20--40\%.

The gain follows the context, as the trace analysis of \S\ref{sec:workload} leads one to expect. Below about 32K tokens, where 14\% of the recorded calls lie, the reference encode is a few milliseconds of a TTFT that the GPU sets, and reuse changes little. The 59--61\% at 100K tokens is what the long, cache-friendly sessions see. They are half of the recorded calls and carry most of the tokenizer's CPU time, and the prompt cache already serves 88--94\% of all input tokens, depending on the agent. An agent that keeps its contexts short would see much less.

\begin{figure*}[t]
\centering
\includegraphics[width=\textwidth]{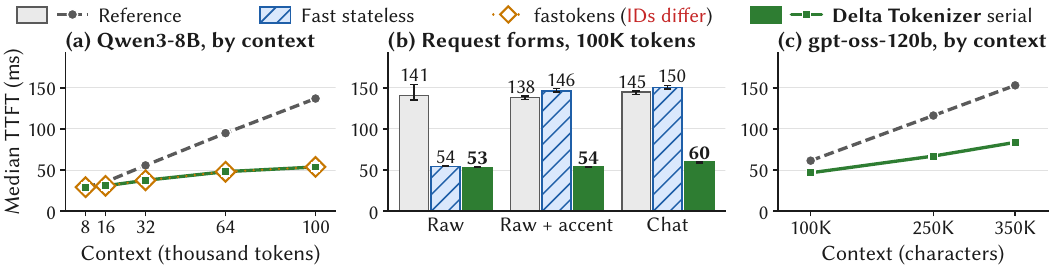}
\caption{Warm-turn median TTFT in SGLang. (a)~Qwen3-8B by context, where fastokens returns token IDs that differ from the reference. (b)~Qwen3-8B at 100K tokens by request form, extending Figure~\ref{fig:motivation}e with \sys serial, with 95\% bootstrap intervals. (c)~gpt-oss-120b by context.}
\label{fig:sglttft}
\Description{Three panels. Left: median TTFT against context for the reference, fastokens, and Delta Tokenizer on Qwen3-8B. Middle: bars for raw, accented, and chat requests at 100K tokens for the reference, Fast stateless, and Delta Tokenizer. Right: reference and Delta Tokenizer on gpt-oss-120b against context length.}
\end{figure*}

Which part of \sys carries the gain depends on the request form (Figure~\ref{fig:sglttft}b). We test clean raw text, raw text with one decomposed accent, which stands for any history with one character the normalizer would change, and requests through the chat interface, the usual way agents reach a model. On clean raw text, Fast stateless alone accounts for the gain, and median TTFT falls from 141\,ms with the reference, in a separate set of runs, to 54\,ms without session state and 53\,ms with it. The special tokens of the chat template, or the one accent, send every stateless call back to the reference path, and Fast stateless is then no faster than the reference (150\,ms and 146\,ms against 145\,ms and 138\,ms). \sys serial re-encodes only the appended turn and a short suffix before it, in about 2\,ms against 93--99\,ms for a reference encode, and reaches 60\,ms through chat and 54\,ms with the accent, 59\% below the reference through chat. The 2\,ms is the adapter's whole call and grows with the context because the returned ID list does (\S\ref{sec:integration}). The gain repeats on the desktop with the RTX 5090 and under Poisson arrivals (supplement, \S S1.3 and \S S6.7). We pace arrivals because in a closed loop a faster front end also changes how requests overlap in GPU prefill.

fastokens is as fast as \sys at every context and through the chat interface (54\,ms at 100K tokens and 60\,ms through chat), but it does not check where the reused tokens end. Replaying the 279 requests of the runs in Figure~\ref{fig:sglttft}a in serving order through one fastokens process changes the tokens of 78 of them, each time by splitting a whitespace run that the reference keeps whole, while the same text encoded in a fresh process is correct. In the raw-text serving runs, the prompt-token counts that SGLang reports with fastokens differ from the reference in 15 of 24 runs, although all six chat runs match.

We also repeat the core-count test of \S\ref{sec:workload} under open-loop load, where a turn also waits for its session's previous request, with six sessions replaying 250K-character transcripts. At eight turns per second a turn with the reference waits about 116\,ms for that request. \sys cuts the wait to 0.2\,ms and the median TTFT by 59\% at four turns per second and 71\% at eight, whether the front end has one core or eight.

\subsection{Chunk-Parallel Full Tokenization}
\label{sec:parallel-results}
Figure~\ref{fig:parallel-performance}a shows how Qwen3 encoding scales with the number of chunks, set directly for this sweep, on sixteen cores. Small inputs stop benefiting early because dispatch and assembly dominate. A four-million-character input takes 18\,ms with 32 chunks, against 150\,ms for direct serial encoding and 170\,ms on a one-worker pool, 8.2$\times$ and 9.3$\times$ faster. Tokenizers without a normalizer start faster and end lower, with Llama-3.1 at 2.6\,ms for a million characters, 4.4$\times$ faster than its one-worker pool. In all 838 configurations of the chunking sweep the planner found every requested cut, and a document of one repeated letter, with none, is encoded whole.

\begin{figure}[t]
\centering
\includegraphics[width=\columnwidth]{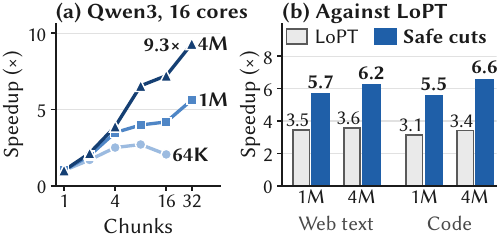}
\caption{Full-encode speedup from chunking. (a)~Qwen3 on 16 cores, relative to a one-worker pool. (b)~Speedup over serial encoding with the same Hugging Face encoder, Qwen3 with eight chunks on eight cores, for LoPT as published, whose seam matching is not exact, and for \sys's safe cuts. The 1.6--2.1$\times$ range in the text spans eight and sixteen chunks and two tokenizers (supplement, \S S5).}
\label{fig:parallel-performance}
\Description{Two panels. Left: pool-relative Qwen3 speedup versus chunk count for 64K, one-million, and four-million-character inputs on sixteen cores. Right: bars of speedup over serial encoding for LoPT and for Delta Tokenizer's safe cuts on web text and code at one and four million characters; the safe cuts are faster in every case.}
\end{figure}

\begin{figure*}[t]
\centering
\includegraphics[width=\textwidth]{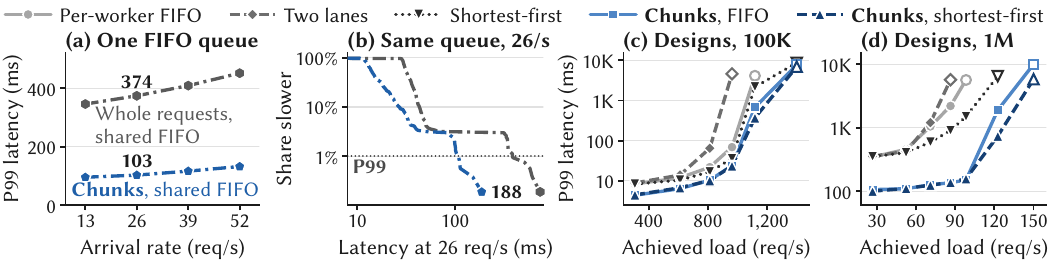}
\caption{Unit of work and queue order. (a)~One four-worker FIFO queue with the same arrival sequence, where only the unit of work changes, for million-character stateless requests. (b)~Every request of (a) at 26 requests/s. (c,~d)~P99 latency against achieved load for five pool designs on four workers, with stateless requests of 100K and 1M characters. Whole-request designs are gray, chunked designs blue, the shortest-first chunked design is \sys pool, and open markers denote overloaded runs.}
\label{fig:sched}
\Description{Four panels. The first shows P99 latency against arrival rate for whole requests and chunks in the same FIFO queue, with chunks between about 95 and 130 ms and whole requests between about 345 and 450 ms. The second shows the share of requests slower than a given latency at 26 requests per second; with whole requests about three percent exceed 300 ms, and with chunks none exceeds 200 ms. The last two show P99 latency against load for five pool designs, where the two chunked designs stay lowest.}
\end{figure*}

Against LoPT with the same encoder and chunk count (Figure~\ref{fig:parallel-performance}b), disjoint chunks are 1.6--2.1$\times$ faster on web text and code, because LoPT matches its seams one after another, and they are exact by construction, which LoPT's position matching is not (\S\ref{sec:challenges}). With 64 threads on the 128-core Zen 3 node, a native Rust prototype of the same planner encodes ten-million-character Qwen3 documents in 0.14\,s, against 4.5\,s serially. Gigatoken also splits long inputs, but only at an ASCII space between an alphanumeric character and a letter and only above a 1\,MiB floor, so a million-character request stays whole and a four-million-character one is split into about four pieces whatever the thread count. On the same host and inputs, with eight or sixteen threads, Gigatoken 0.9.0 is 10--37$\times$ faster than the single-core reference from an empty cache and 23--63$\times$ with its cache warmed by earlier calls, ratios that include its encoder's speed, against 38--97$\times$ for \sys's cuts with the fast encoder. The cuts beat Gigatoken's empty cache in all 16 settings and its warmed cache in 12, and the four exceptions are Qwen3 on web text, where the warmed cache is 2--20\% faster. All 800 outputs equal the reference.

Cuts also confine the fast encoder's guard to the chunk that needs it. On million-character English documents that trigger the guard, a whole-call fallback takes 276\,ms, against 27\,ms for documents that do not. Cut into sixteen chunks encoded one after another on one core, the guarded documents take 45\,ms, and sixteen cores bring them to 17\,ms. On the 610 chat transcripts of \S\ref{sec:fast-floor} (330K characters at the median), cutting at the role markers keeps every chunk on the fast path, matches the reference on all 610, and takes 4.2\,ms on one core, against 94\,ms for the guarded whole prompt.

Cutting at role markers is, on clean transcripts, close to the incremental path. Replaying the 610 transcripts turn by turn, a turn with 250--500K characters of history takes 4.0\,ms with cuts and 2.3\,ms incrementally at the median, and the median transcript costs 79 and 63\,ms of CPU, since the incremental path encodes the first turn whole. They part when the history contains text the fast path does not cover. With one decomposed accent in the first message, the stateless encode pays a reference encode of the chunk that contains it on every turn, 11--19\,ms, while the incremental path stays at 1.3--4.0\,ms and pays for the accent once, when it arrives, 363 against 64\,ms over a transcript. Eight of the transcripts contain such a character on their own, and there the cuts fall back on 257 turns and the incremental path on 11. Every turn's IDs match the reference (Table~\ref{tab:exact}).

\subsection{Tail-Aware Scheduling}
\label{sec:queue-results}
We first test whether a short update still waits behind a whole long request, on the standard mix on four workers, where 92\% of requests are appends to million-character contexts, with a burst of four stateless requests of a million characters each every two seconds. The appends queued behind a burst have a P99 of 301\,ms when the burst enters whole and 134\,ms when it enters as chunks, both in FIFO order, a gain in waiting time alone, since their own service time does not change with chunking. The burst's own P99 falls from 348 to 126\,ms, because four workers now share each request. Without the bursts, the P99 over all requests is 397--410\,ms in every design, set by initializations that run whole.

Figure~\ref{fig:sched}a isolates the unit of work under heavier load, changing only whether a stateless million-character request enters one shared FIFO queue on four workers as one task or as chunks. At 26 requests/s the P99 falls from 374 to 103\,ms, three arrival seeds give 353--377\,ms against 102--115\,ms, and the whole distribution moves (Figure~\ref{fig:sched}b). Here chunks shorten both waiting and service time, since every request is long enough for the four workers to share.

Figure~\ref{fig:sched}c and~d compare five pool designs. Three run whole requests, with per-worker FIFO queues, with two lanes separated by a size threshold, or with a shared shortest-first queue. Two serve chunks, in FIFO or shortest-first order. At 86 requests/s of million-character requests, the P99 is 2.2\,s with per-worker FIFO queues, 0.9\,s with the shared shortest-first queue, and 136\,ms with chunks in FIFO order. The best of twelve two-lane configurations stays 2.0--9.5$\times$ behind the chunked designs. Ordering the chunks shortest first helps further near saturation, with a P99 of 738\,ms against 1,891\,ms at 123 requests/s.

We also replay SGLang's tokenizer stage in one process, without a server or a model, on the testbed's 64 SWE-smith sessions, whose first turns carry 100K characters and whose appends are 1.3K characters at the median, with 2.3\% stateless requests. With the reference the P99 reaches 0.52\,s at 20 requests/s (Figure~\ref{fig:stage}a). \sys serial stays under 10\,ms up to 320 requests/s, and \sys pool holds 27\,ms at 640 requests/s, where the serial front end has saturated. Sessions of 250K characters shift every curve to lower rates in the same order (Figure~\ref{fig:stage}b).

\begin{figure}[t]
\centering
\includegraphics[width=\columnwidth]{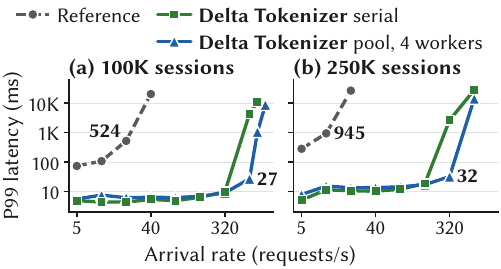}
\caption{SGLang's tokenizer stage replayed in one process, with session turns and 2.3\% stateless requests, for sessions of (a)~100K and (b)~250K characters.}
\label{fig:stage}
\Description{Two log-log panels of P99 latency against arrival rate. With 100K-character sessions the reference tokenizer exceeds half a second at 20 requests/s, Delta Tokenizer serial stays between about 4 and 10 ms up to 320 requests/s and saturates at 640, and the four-worker pool holds 27 ms at 640 requests/s. With 250K-character sessions the same order holds at lower rates.}
\end{figure}

\subsection{End-to-End Serving in SGLang}
\label{sec:e2e}
With the model in the loop and \sys pool inside SGLang's tokenizer stage, warm turns on 100K-character sessions arrive at eight per second and share the front end with bursts of four stateless requests of a million characters each every eight seconds. These requests exceed the model's context, and SGLang declines them only after tokenizing them, because it checks a request's length in tokens. They therefore load the tokenizer without reaching the GPU, and the experiment isolates the front end's effect on served turns. The P99 TTFT of the warm turns is 1,441\,ms with the reference, 166\,ms with \sys serial, and 63\,ms with \sys pool (Figure~\ref{fig:e2e-interference}a). With the reference, 17\% of warm turns take longer than 300\,ms and 8\% longer than a second, while the slowest turn takes 183\,ms with \sys serial and 101\,ms with \sys pool (Figure~\ref{fig:e2e-interference}b). Without long requests, the front end still sets the tail under load. At 24 warm turns per second, the P99 TTFT is 995\,ms with the reference and 103--105\,ms with \sys serial and \sys pool, and the pool adds 1 to 2\,ms to the median relative to \sys serial (supplement, \S S6.7). When the long requests fit the model's context (200K characters) and each one is new, warm turns wait in GPU prefill, which the front end does not touch, and the P99 is 5.2--5.4\,s with all three tokenizers. The next section asks when the front end still sets the tail while the model serves them.

\begin{figure}[t]
\centering
\includegraphics[width=\columnwidth]{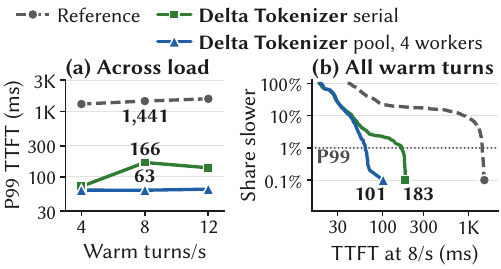}
\caption{Front-end interference on served agent turns in SGLang, while four stateless requests of one million characters arrive every eight seconds and are tokenized before SGLang declines them for their length. (a)~Warm-turn P99 TTFT by warm-turn rate. (b)~Share of warm turns slower than a given TTFT at eight turns per second.}
\label{fig:e2e-interference}
\Description{Two panels. Left: warm-turn P99 TTFT at 4, 8, and 12 warm turns per second for the reference tokenizer, Delta Tokenizer serial, and Delta Tokenizer pool. Right: at eight turns per second, the share of warm turns slower than each TTFT; the reference has a long tail above one second, Delta Tokenizer serial ends near 180 ms, and the pool near 100 ms.}
\end{figure}

\subsection{Served Long Requests}
\label{sec:served}
The 23$\times$ of \S\ref{sec:e2e} comes from long requests that SGLang tokenizes and then declines. Here SGLang accepts and serves them. Next to twelve sessions with 100K-character histories and eight warm turns per second, documents of about 400K characters, about 100K tokens and within the model's context of 131,072 tokens, arrive in clusters, and each is answered. In the repeated form, a cluster of four requests for the same document arrives every eight seconds, so after its first prefill the model finds the document in its prefix cache and computes one new token per request. In the new-document form, two documents never seen before arrive every 40 seconds. Each configuration runs with the reference, \sys serial, and \sys pool, with two or three arrival seeds of 60 seconds each, on the Zen 5 server with Qwen3-8B unless stated (Figure~\ref{fig:served-documents}).

\begin{figure*}[t]
\centering
\includegraphics[width=\textwidth]{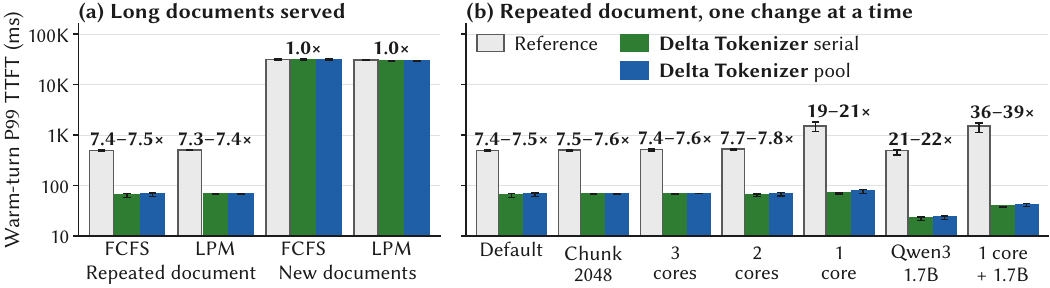}
\caption{Warm-turn P99 TTFT in SGLang next to served documents of about 100K tokens. Bars pool a configuration's arrival seeds, two or three of 60\,s each, error bars give each seed's P99, and the labels give the reference's P99 over that of \sys serial and \sys pool. Qwen3-8B unless noted, twelve sessions, SGLang 0.5.19. (a)~A repeated document, resent every 8\,s and cached by the model, and two new documents every 40\,s, under first-come-first-served (FCFS) and longest-prefix-match (LPM) order. (b)~One change at a time on the repeated document with FCFS order: prefill chunks of 2,048 tokens, the server pinned to three, two, or one cores, and Qwen3-1.7B in place of Qwen3-8B, alone and together with one core.}
\label{fig:served-documents}
\Description{Two panels of warm-turn P99 TTFT on a log axis for the reference, Delta Tokenizer serial, and Delta Tokenizer pool. With repeated documents the reference is near half a second and Delta Tokenizer near 70 ms; with new documents all three are near 30 seconds. The second panel shows the repeated configuration under several levers; one core raises the reference to about 1.6 seconds, and the smaller model lowers Delta Tokenizer to about 24 ms.}
\end{figure*}

With repeated documents the front end sets the tail. The P99 TTFT of warm turns is 498\,ms with the reference and 66.5 and 67.5\,ms with \sys serial and \sys pool, 7.4--7.5$\times$ lower (Figure~\ref{fig:served-documents}a). The reference spends about 110\,ms of serial tokenization on SGLang's event loop for each document, so the four documents of a cluster hold the warm turns behind them for about 0.4\,s, while those turns wait at most 1.5\,ms in the GPU scheduler's queue at the P99. With \sys a warm turn waits a few milliseconds before reaching the GPU, and its remaining tail is its own forward pass, 44--48\,ms at the P99. This holds across SGLang's settings. With FCFS or LPM order, prefill chunks of 2,048 or 8,192 tokens, and the server pinned to three cores or not, the reference's P99 stays at 498--515\,ms and \sys's at 66.5--70.5\,ms, 7.3--7.6$\times$ lower, and a rerun of one configuration with the same arrivals moves by less than 3\%. At twice the turn rate both tails rise and the gap narrows to 6.3--6.5$\times$, as more turns queue behind each document's prefill (supplement, \S S6.7).

Two changes widen the gap to about 20$\times$. The first takes cores away. With the whole server pinned to one core, SGLang's scheduler, which polls in a busy loop, shares that core with the event loop that tokenizes, so each reference encode takes about twice as long and the reference's P99 rises to 1,511\,ms. \sys's rises to 71.7\,ms serial and 77.6\,ms with the pool, 21$\times$ and 19$\times$ lower. With three cores the scheduler and the event loop each have a core of their own, and the readings stay within the rerun noise. With two cores they still do, 524\,ms for the reference and 66.8 and 68.1\,ms for \sys, so the step lies between two cores and one. With one core the pool's four workers share it too and its document encodes slow down, so there the serial adapter is the better configuration.

The second change gives each turn less GPU work. With Qwen3-1.7B, which uses the same tokenizer files, \sys's P99 falls to 22.5 and 23.7\,ms because the forward pass of a warm turn is shorter, while the reference's stays at 493\,ms, 22$\times$ and 21$\times$ higher. On two cores the smaller model gives the same picture, and only the pool, whose workers then share the cores with the server, is about 15\% slower. The two changes add up but do not multiply. With one core and Qwen3-1.7B together, the reference's P99 is 1,508\,ms and \sys's 38.6 and 41.7\,ms, 36--39$\times$ apart rather than the 60$\times$ the two ratios would give, because on one core \sys's own encodes also take about twice as long, and that doubling is a visible share of a 23\,ms tail.

When every document is new, the GPU sets the tail. Two cold prefills of about 106K tokens hold the GPU for about 30 seconds of every 40, and the warm-turn P99 is 29.4--32.0\,s with all three front ends, 0.9--3.6\% higher with the reference. The front end shows only while the backlog drains after each pair of documents. About 230 queued warm turns drain in 6.4--7.7\,s with the reference and 3.9--5.0\,s with \sys, so the reference's median is 1.4--1.9\,s higher. LPM order lowers the median by about 2\,s but the P99 by only 1.0--1.7\,s, and with Qwen3-1.7B the median wait behind a pair of documents by about 4\,s but the P99 by 0.5\,s. The P99 moves less because each session has one request in flight, so a turn delayed by the documents delays the session's next turn. A faster prefill does not move this boundary. With Qwen3-1.7B each cold prefill takes 5.0--6.6\,s instead of 15--18\,s, the warm-turn P99 is 9.9--11.0\,s with all three front ends, and the reference is 2--5\% higher because its backlog drains in 2.1--3.2\,s against 0.5--0.6\,s with \sys.

The same picture holds on a second machine. On a GH200 node, with a 72-core Grace host CPU of which the server used 16 cores, and the fast encoder rebuilt for that host and rechecked before the runs (supplement, \S S7), the repeated document gives a P99 of 630\,ms with the reference and 47.5 and 47.0\,ms with \sys, 13$\times$ apart, since the reference's encodes are 20--30\% slower on that CPU and a warm turn's forward pass is faster on that GPU. With one core the three figures are 2,812, 77.7, and 99.9\,ms. The direction is the same on both machines, and the multiple depends on the platform.

These runs cover two machines, repeated and new documents, twelve sessions, and two or three seeds per configuration, mostly of 60 seconds. In all 129 runs every served prompt-token count matched the reference, for 1,548 final session turns and 3,021 documents, as did all 2,268 shadow re-encodes.

\subsection{Scalability Under a Latency Objective}
\label{sec:capacity}
The load comes from several client processes, since one cannot generate enough. For each pool size we report the highest tested arrival rate of million-character stateless requests whose P99 stays at or below 50\,ms in all three arrival seeds, and the lowest tested rate that fails in every seed. With 16, 32, and 64 workers these brackets are 244 to 290 requests/s (the default 64K target), 530 to 620 (one chunk per worker of about 31K characters), and 750 to 875 (a 32K target).

\begin{table}[t]
\centering
\setlength{\tabcolsep}{3pt}%
\begin{tabularx}{\columnwidth}{@{}>{\raggedright\arraybackslash}Xrr@{}}
\toprule
Check & Compared & Differ \\
\midrule
Fast encoder, whole encodes, 11 tokenizers & 41.8B & 0 \\
Web corpus split at \sys's cuts, 4 tokenizers & 409.6M & 0 \\
Special-token cuts, chat prompts & 610 & 0 \\
Turn-by-turn replay, 610 chat transcripts & 323{,}680 & 0 \\
Adversarial battery, 6 tokenizers & 220{,}000 & 0 \\
\rowcolor{ctrlrow}\quad \emph{control, expected to differ}, without the normalization check, Qwen3 & 100{,}000 & 2{,}115 \\
Platform and runtime repeats (supplement, \S S1) & 1{,}106{,}787 & 0 \\
SGLang, reported prompt tokens & every run & 0 \\
SGLang, shadow re-encodes & 2{,}195 & 0 \\
SGLang with served long documents, shadow re-encodes & 2{,}268 & 0 \\
SGLang, greedy completions & 550 & 0 \\
\bottomrule
\end{tabularx}
\caption{Exactness checks against the reference. Each comparison covers a complete token-ID sequence, except the prompt-token counts that SGLang reports and the greedy completions, which compare the model's output. The shaded row is a control, not a check of \sys. It removes the normalization check, and its nonzero count shows that the battery detects what that check prevents.}
\label{tab:exact}
\end{table}

Capacity grows close to linearly up to 32 workers. At 64 workers the target chunk size matters. At 750 requests/s the 32K target keeps the P99 at 30--44\,ms in the three seeds, 64K-character chunks give 63--78\,ms, and one chunk per worker, twice as many chunks as at 32K, leaves it at 311--340\,ms (supplement, \S S6.6). The best target shrinks as workers are added, but not to one chunk per worker, since the dispatcher's CPU time grows with the number of chunks. At 100K characters, where requests stay whole, the shared queue meets the same objective up to about 6,000 requests/s on 64 workers, while per-worker FIFO queues stop scaling near 3,000. Trace-driven agent traffic asks for much less. Replayed from each session's first call, while contexts are still short, it leaves four workers at most 15\% busy, and at the contexts the same traces reach in steady state the same arrivals would ask the reference tokenizer for 16--20 times more CPU. The case for the front end is latency first, and capacity once many sessions or long requests share it.

\subsection{Exactness}
\label{sec:correctness}
Table~\ref{tab:exact} lists the checks, and apart from the control none finds a difference. Propositions~\ref{prop:splice} and~\ref{prop:chunks} say why a repaired turn and a chunked request return the reference IDs for an admitted configuration, and the checks test at three levels that the implementation and its admission keep to them.

The first level is the encoder that both paths use. The largest check compares the fast encoder's whole encodes with the reference on 3.8 billion web documents, 12.3 trillion characters, under each of the eleven admitted tokenizers, 41.8 billion encodes in all. The second level is the chunked path. 102.4 million of these documents, between 512 and 400,000 characters, are split at \sys's cuts under four tokenizers and compared with whole encodes by the fast encoder and by the reference, and the adversarial battery targets normalization seams and special-token strings under six tokenizers. Its control removes the normalization check and finds 2,115 differences in 100,000 Qwen3 strings, which shows that the battery reaches the cases the check exists for. The third level is the repair path and serving. The turn-by-turn replay of 610 chat transcripts compares every turn of three front ends with the reference, and the SGLang checks cover the served requests themselves, including repaired turns and the served long documents of \S\ref{sec:served}. Supplement \S S7 states what each row counts.

%% file: sections/08-discussion-related.tex
\section{Related Work}
\label{sec:related}
\paragraph{Fast serial and GPU tokenizers.}
The Hugging Face \code{tokenizers} library is the reference implementation for most open models and encodes one request on one core~\cite{hftokenizers}. tiktoken~\cite{tiktoken} is a faster exact encoder for OpenAI's vocabularies and applies no normalizer, and Gigatoken~\cite{gigatoken2026} speeds up pre-tokenization with SIMD and a cache of pre-tokenized pieces. \sys uses a build of Gigatoken corrected to the reference's Unicode data (\S\ref{sec:fast-floor}). These encoders make each full encode cheaper, while \sys removes most full encodes and divides the rest. GPU tokenizers move the encode onto the accelerator. BlockBPE explores GPU BPE without regex pre-tokenization~\cite{blockbpe2025}, GPUTOK implements GPU byte-level BPE~\cite{gputok2026}, and RAPIDS provides GPU WordPiece tokenization for BERT-style workloads~\cite{cudfsubword}. TokTier~\cite{toktier2026}, the preprint of July 2026 in which we first described the session repair of this paper together with a GPU route for fresh long inputs, rewrites the reference regex as a run decomposition so that pre-tokenization runs exactly on the GPU, merges on the GPU, and replays one captured CUDA graph per capacity bucket with the live counts kept on the device, byte-identical to its frozen references on the six configurations it certifies for the GPU. GraphTok~\cite{graphtok2026}, posted in October 2026, describes itself as concurrent with and independent of TokTier. The three contributions it lists, exact parallel tokenization of each family's frontend on the GPU, graph-resident execution that separates a captured graph's capacity from a request's occupancy, and a complete GPU encoding path with host-visible or device-resident output, are the mechanisms TokTier published ten weeks earlier (TokTier, \S4.1 to \S4.3). What GraphTok adds is an overflow path for very long pre-tokens inside the captured graph, the SentencePiece family of Gemma, a dispatcher that merges concurrent requests into one replay, and a Dynamo backend with a study of how many GPUs one front end can feed. Its comparison with TokTier disables TokTier's session reuse and times fresh encodes, and it states that TokTier's released package lacks a CUDA-graph path with array delivery and has a GPU path only in Python; the 0.2.9 release of August 2026 contains that path, selectable through its engine interface, and ships the GPU runtime in its Rust crate as well. GraphTok also starts from the premise that every request must be tokenized in full before the prefix cache can be consulted, and its vLLM experiments, which follow the agentic pattern of a long shared prefix and a short new turn, tokenize the whole prompt on every call. For the turns of a session \sys does without that premise. A turn re-encodes only its appended text and a short window before it, at any boundary the check accepts rather than only at a chat-turn or special-token boundary, and the long requests with no state to reuse it divides across the cores of the front end. By GraphTok's own measurements, fast CPU encoders come within a few percent of it in the TTFT of a single request, and its case rests on the capacity of a front end that encodes every request in full for a fleet of GPUs. \sys keeps the encode on the CPU of the front end, next to request rendering, and reduces the work rather than moving it, and once the front end is this cheap the model's prefill sets the tail whenever every long request is new (\S\ref{sec:served}).

\paragraph{Parallel tokenization of one request.}
LoPT~\cite{lopt2026} cuts a long input into overlapping chunks without regard to the tokenizer, encodes them in parallel, and joins neighboring chunks where their tokens in the overlap coincide by character position, retrying with longer chunks up to five times before it encodes serially. As published, the position match can accept a seam at which the two chunks divide the bytes of a character differently (supplement, \S S5). \sys's chunks are disjoint and end where the tokenizer's own stages cannot move a boundary, so they concatenate without matching or retries. Gigatoken's own split is also disjoint. It cuts only at an ASCII space between an alphanumeric character and a letter, and only in inputs above a 1\,MiB floor. It keeps added-token strings whole across a cut and leaves normalization to the chunks without stating a condition, and unit tests rather than an argument cover its seams. \sys accepts a wider set of transitions for each admitted tokenizer, states the special-token and normalization conditions with a proof, and splits at any target size (\S\ref{sec:parallel-results}).

\paragraph{Incremental and streaming BPE\@.}
Incremental and streaming BPE algorithms bound how far the merge stage looks when text is appended and give it a formal treatment~\cite{incrementalbpe2026,streamingbpe2026,berglund2023bpe}. Fixed BPE dictionaries admit a finite, dictionary-dependent lookahead for appended input~\cite{berglund2023bpe}, but this does not by itself give safe boundaries for normalization, special-token matching, or regex pre-tokenization. \sys's checks cover these stages before the merges, which decide the pieces a serving front end must keep exact, and it checks each boundary at run time instead of a fixed overlap.

\paragraph{Reuse in serving front ends.}
Serving front ends reuse an encoding only when the whole text is cached or the reused prefix ends at a registered special-token string~\cite{dynamotokenizers2026,llmtokenizer2026}, while \sys repairs an append at any boundary its check accepts, for every admitted tokenizer. fastokens~\cite{fastokens}, an optional tokenizer backend of SGLang, reuses earlier work across calls without a check at the seam, and the version we evaluated returned different IDs for 78 of 279 replayed requests (\S\ref{sec:incremental-results}). SGLang can also run several tokenizer workers or batch tokenization requests dynamically~\cite{sglang2024}, which overlaps requests rather than shortening one (\S\ref{sec:workload}). Model-side prefix caching~\cite{vllm2023,sglang2024,mooncake2025} reuses computation after tokenization. Indexed by token IDs, it complements tokenizer-side reuse and cannot replace it.

\newpage
\paragraph{Scheduling for the tail.}
Size-interval assignment~\cite{harcholbalter1999sita}, shortest-remaining-time scheduling~\cite{schrage1968,harcholbalter2003srpt}, and queueing analysis of variable service times~\cite{kleinrock1975,harcholbalter2013} motivate dividing and ordering work by size for the tail~\cite{deanbarroso2013}. Shinjuku preempts long requests at microsecond scale~\cite{shinjuku2019}, but a tokenizer worker cannot switch requests inside an encode, so \sys makes the long work divisible where the output is known not to change. Sarathi-Serve interleaves chunked prefill with decoding~\cite{sarathi2024}, and \sys divides work one stage earlier, where a chunk must be safe and not merely sized.

\paragraph{Agent workloads and CPU costs of serving.}
CPU-induced inference slowdowns have been characterized~\cite{cpuslowdown2026}, as have coding-agent workloads~\cite{tracelab2026,cachewise2026} and production LLM request patterns~\cite{burstgpt2025}. In its Claude Code subset, TraceLab reports the imbalance of \S\ref{sec:workload} between long cached prefixes and short appends, 126K cached-prefix tokens against 857 appended at the median~\cite{tracelab2026}.

%% file: sections/09-conclusion.tex
\section{Conclusion}
\label{sec:conclusion}
\sys treats a growing agent request as an incremental update to an existing token sequence, and an exact boundary check lets it encode only the new text and a short stretch before it. Cut points that practical tokenizers already provide let it split long stateless requests into chunks that run between short updates. In SGLang, this lowers the median time to first token by 59\% at 100K-token chat contexts and keeps agent turns fast while long requests share the front end, whether the engine declines those requests or serves them from the model's cache, with every compared output identical to the reference.

\paragraph{Limitations.}
\label{sec:limits}
Initializations still run whole on their home worker, so workloads dominated by new long sessions keep a long tail, and when every long request is new the GPU's prefill sets the tail whatever the front end does (\S\ref{sec:served}). The special-token cuts of \S\ref{sec:cuts} are not yet wired into the SGLang adapter, and the pool helps only when it has cores of its own. Each pool worker holds about 320\,MB of memory. Evicting a session frees its entry and its text, but the prefix blocks it shared through the store stay, so with many long sessions whose contents differ the store grows with the number of sessions, by about 0.56 to 0.75\,MB per session at 100K tokens. Worker startup and recovery are not evaluated.

\paragraph{Use of AI tools.}
We used AI assistants (Anthropic's Claude; OpenAI's Codex and ChatGPT) to write and review code and scripts, analyze measurements, prepare figures, and draft and edit the text. The authors take full responsibility for the content.

%% file: sections/supplement-body.tex
This supplement accompanies the main paper. Section~S1 lists the tokenizers that \sys admits, the public models that ship them, and the checks and results for each. Sections~S2 to~S4 state the tokenizer contract and the correctness arguments for incremental repair and for safe cuts in full. Section~S5 shows that LoPT's seam matching, as published, is not exact, and compares its speed with that of safe cuts. Section~S6 defines the measurements and adds results that the main paper has no room for, and Section~S7 states what each exactness check counts.

Platforms are named as in the main paper. The Zen 5 server has two 16-core AMD EPYC 9115 processors (up to 4.1\,GHz, SMT disabled) and an RTX PRO 6000 GPU, and the experiments use one of the two processors. The Zen 5 desktops are two machines with a 16-core Ryzen 9 9950X (up to 5.7\,GHz), one of them with an RTX 5090. The 128-core Zen 3 node has two 64-core EPYC 7763 processors (up to 3.5\,GHz) and was rented for the capacity study and the Rust prototype. Earlier capacity runs and a first run of the prototype used shared cluster nodes with two 64-core EPYC 7713 processors (up to 3.7\,GHz), and Sections~\ref{sec:ev-capacity} and~\ref{sec:ev-rust} report them for comparison.

\section{Supported tokenizers and models}
\label{sec:tokenizers}
\sys decides whether it may cut a tokenizer's input into chunks from the tokenizer file and the configuration the loader builds from it, not from the model's name. It first computes the SHA-256 digest of the tokenizer file and looks it up in a table of eleven frozen byte-level BPE tokenizers, each stored with its own repair parameters. If the file is neither one of the eleven nor a recognized equivalent of one (Section~\ref{sec:public-models}), the reference encodes every input. Otherwise \sys reads the configuration that the loader actually builds, which may add special-token strings declared outside the tokenizer file. This configuration must either leave the text unnormalized or normalize it to NFC and match every special-token string on the raw text, before normalization. If it does neither, \sys never cuts the input.

\subsection{The admitted tokenizers}
Table~\ref{tab:tokenizers} lists the eleven tokenizers. Each tokenizer file was copied once from the public model repository in the table, at the revision shown, and the experiments load these frozen copies. Three of the tokenizers normalize text to NFC, two carry an empty list of normalizers that leaves text unchanged, and six have no normalizer. They recognize between 21 and 1,283 special-token strings, the strings such as chat-template markers that are matched before the text is split. These counts come from the loaded configuration, not from the tokenizer file alone. Qwen3.5-0.8B shows the difference, since its tokenizer file declares 26 special-token strings and its separate tokenizer configuration file declares seven more, so the guard that keeps cuts away from these strings works with 33.

The last columns give the two repair parameters that differ between tokenizers (Section~\ref{sec:splice}). The margin is the stretch at the left end of the repair window that the search for a match leaves out. It equals the length of the longest special-token string, but is at least 8 characters. The bound is the longest text that a single token decodes to, and an accepted match must cover more normalized characters than that. The other repair settings are the same for all eleven tokenizers. The first window spans 512 characters of old text, the window doubles at most five times, and a match needs at least two tokens.

\begin{table}[htbp]
\centering
\footnotesize
\setlength{\tabcolsep}{4pt}
\begin{tabular}{@{}l>{\raggedright\arraybackslash}p{2.3in}lrrrr@{}}
\toprule
Tokenizer & Repository; revision and file digest & Normalizer & \shortstack[r]{Special\\strings} & Margin & Bound & \shortstack[r]{Also\\in} \\
\midrule
Qwen3-8B & Qwen/\allowbreak Qwen3-8B\newline \texttt{b968826d9c46}\quad\texttt{aeb13307a71a} & NFC & 26 & 20 & 128 & 79 \\
Llama-3.1-8B & meta-llama/\allowbreak Llama-3.1-8B\newline \texttt{d04e592bb4f6}\quad\texttt{76e48799b099} & none & 256 & 30 & 128 & 3 \\
gpt-oss-120b & openai/\allowbreak gpt-oss-120b\newline \texttt{b5c939de8f75}\quad\texttt{0614fe83cada} & none & 21 & 19 & 128 & 3 \\
DeepSeek-V3 & deepseek-ai/\allowbreak DeepSeek-V3\newline \texttt{e815299b0bcb}\quad\texttt{621ac2e32d0d} & empty list & 818 & 23 & 128 & 2 \\
Qwen3.5-0.8B & Qwen/\allowbreak Qwen3.5-0.8B\newline \texttt{2fc06364715b}\quad\texttt{5f9e4d4901a9} & NFC & 33 & 21 & 128 & 19 \\
MiniMax-M3 & MiniMaxAI/\allowbreak MiniMax-M3\newline \texttt{50942730318c}\quad\texttt{bb1f1626cf01} & NFC & 61 & 31 & 128 & 1 \\
DeepSeek-V4-Flash & deepseek-ai/\allowbreak DeepSeek-V4-Flash\newline \texttt{60d8d70770c6}\quad\texttt{8f9f37ca37fd} & empty list & 1,283 & 36 & 128 & 7 \\
GLM-5.2 & zai-org/\allowbreak GLM-5.2\newline \texttt{b4734de4facf}\quad\texttt{19e773648cb4} & none & 36 & 26 & 512 & 7 \\
OLMo-3-7B & allenai/\allowbreak Olmo-3-1025-7B\newline \texttt{a81bae42db39}\quad\texttt{73fd5254624f} & none & 22 & 19 & 128 & 7 \\
Ministral-3-8B & mistralai/\allowbreak Ministral-3-8B-Instruct-2512-BF16\newline \texttt{f6fae9795746}\quad\texttt{d5f6046775b1} & none & 1,000 & 18 & 76 & 2 \\
Nemotron-3-Nano-4B & nvidia/\allowbreak NVIDIA-Nemotron-3-Nano-4B-BF16\newline \texttt{dfaf35de3e30}\quad\texttt{623c34567aeb} & none & 1,000 & 18 & 76 & 16 \\
\bottomrule
\end{tabular}
\caption{The eleven admitted tokenizers. Each tokenizer file was frozen from the named repository at the revision given by the first hexadecimal string. The second string begins the SHA-256 digest of the file, which admission compares. Special strings are counted in the loaded configuration. Margin and Bound are the repair parameters in characters. Also in counts the other public repositories whose tokenizer file has the same digest (Table~\ref{tab:same-file}).}
\label{tab:tokenizers}
\end{table}

\subsection{Public models with the same tokenizer}
\label{sec:public-models}
Many public models ship one of these files unchanged. Table~\ref{tab:same-file} lists 146 public repositories whose tokenizer file has the same SHA-256 digest as one of the eleven. Most are other sizes, later versions, or quantized releases of the same model lines, and three are NVIDIA models that ship the Qwen3-8B file. For such a repository \sys runs the frozen tokenizer, and the checks of that tokenizer apply unchanged because the bytes are the same. In SGLang, the adapter also compares the configuration that the server loads with the frozen one, section by section, and leaves encoding to SGLang's own tokenizer if any section differs. The list was compiled in August 2026 by downloading and hashing the files of public repositories. \sys hashes the file again whenever it loads a model, so a repository that later changes its file no longer matches.

Another 48 public repositories ship a file whose bytes differ from a frozen file while the tokenizer it describes is the same. In 46 of them the difference lies in JSON formatting or in an older way of writing the same fields, and for the other two the loader builds exactly the frozen configuration from the repository's files. \sys maps these repositories to the matching frozen tokenizer and runs the frozen bytes. A repository whose tokenizer differs in anything else, for instance only in its special-token strings, is not mapped to a frozen tokenizer and is encoded by the reference.

\begin{table}[htbp]
\centering
\footnotesize
\setlength{\tabcolsep}{4pt}
\hyphenpenalty=10000
\exhyphenpenalty=10000
\begin{tabular}{@{}lr>{\raggedright\arraybackslash}p{4.75in}@{}}
\toprule
Tokenizer & Count & Public repositories with the same tokenizer file \\
\midrule
Qwen3-8B & 79 & Qwen/Qwen3-0.6B (+FP8, GPTQ-Int8, MLX-4bit, MLX-6bit, MLX-8bit, MLX-bf16); Qwen/Qwen3-1.7B (+FP8, GPTQ-Int8, MLX-4bit, MLX-6bit, MLX-8bit, MLX-bf16); Qwen/Qwen3-4B (+FP8, AWQ, MLX-4bit, MLX-6bit, MLX-8bit, MLX-bf16); Qwen/Qwen3-4B-Instruct-2507 (+FP8); Qwen/Qwen3-4B-SafeRL; Qwen/Qwen3-4B-Thinking-2507 (+FP8); Qwen/Qwen3-8B (FP8, AWQ, MLX-4bit, MLX-6bit, MLX-8bit, MLX-bf16); Qwen/Qwen3-14B (+FP8, AWQ, MLX-4bit, MLX-6bit, MLX-8bit, MLX-bf16); Qwen/Qwen3-30B-A3B (+FP8, GPTQ-Int4, MLX-4bit, MLX-6bit, MLX-8bit, MLX-bf16); Qwen/Qwen3-30B-A3B-Instruct-2507 (+FP8); Qwen/Qwen3-30B-A3B-Thinking-2507-FP8; Qwen/Qwen3-32B (+FP8, AWQ, MLX-4bit, MLX-6bit, MLX-8bit, MLX-bf16); Qwen/Qwen3-235B-A22B (+FP8, GPTQ-Int4, MLX-6bit, MLX-8bit, MLX-bf16); Qwen/Qwen3-235B-A22B-Instruct-2507 (+FP8); Qwen/Qwen3-235B-A22B-Thinking-2507-FP8; Qwen/Qwen3-Coder-30B-A3B-Instruct-FP8; Qwen/Qwen3-Coder-480B-A35B-Instruct-FP8; Qwen/Qwen3-Next-80B-A3B-Instruct (+FP8); Qwen/Qwen3-Next-80B-A3B-Thinking (+FP8); Qwen/Qwen3-Reranker-0.6B; Qwen/Qwen3-Reranker-4B; Qwen/Qwen3-Reranker-8B; Qwen/Qwen3-VL-Reranker-2B; Qwen/Qwen3-VL-Reranker-8B; nvidia/Nemotron-Terminal-8B; nvidia/Nemotron-Terminal-14B; nvidia/Nemotron-Terminal-32B \\
Llama-3.1-8B & 3 & meta-llama/Llama-3.1-70B; meta-llama/Llama-3.1-405B (+FP8) \\
gpt-oss-120b & 3 & openai/gpt-oss-20b; openai/gpt-oss-safeguard-120b; openai/gpt-oss-safeguard-20b \\
DeepSeek-V3 & 2 & deepseek-ai/DeepSeek-V3-0324; deepseek-ai/DeepSeek-V3-Base \\
Qwen3.5-0.8B & 19 & Qwen/Qwen3.5-2B; Qwen/Qwen3.5-4B; Qwen/Qwen3.5-9B; Qwen/Qwen3.5-27B (+FP8, GPTQ-Int4); Qwen/Qwen3.5-35B-A3B (+FP8, GPTQ-Int4); Qwen/Qwen3.5-122B-A10B (+FP8, GPTQ-Int4); Qwen/Qwen3.5-397B-A17B (+FP8, GPTQ-Int4); Qwen/Qwen3.6-27B (+FP8); Qwen/Qwen3.6-35B-A3B (+FP8) \\
MiniMax-M3 & 1 & MiniMaxAI/MiniMax-M3-MXFP8 \\
DeepSeek-V4-Flash & 7 & deepseek-ai/DeepSeek-V4-Flash-0731; deepseek-ai/DeepSeek-V4-Flash-Base; deepseek-ai/DeepSeek-V4-Flash-DSpark; deepseek-ai/DeepSeek-V4-Pro; deepseek-ai/DeepSeek-V4-Pro-0813; deepseek-ai/DeepSeek-V4-Pro-Base; deepseek-ai/DeepSeek-V4-Pro-DSpark \\
GLM-5.2 & 7 & zai-org/GLM-4.7-Flash; zai-org/GLM-5 (+FP8); zai-org/GLM-5.1 (+FP8); zai-org/GLM-5.2-FP8; zai-org/GLM-5.3-Flash \\
OLMo-3-7B & 7 & allenai/Olmo-3-7B-Think; allenai/Olmo-3-7B-Think-DPO; allenai/Olmo-3-32B-Think; allenai/Olmo-3-32B-Think-DPO; allenai/Olmo-3-1125-32B; allenai/Olmo-3.1-7B-RL-Zero-Math; allenai/Olmo-3.1-32B-Think \\
Ministral-3-8B & 2 & mistralai/Ministral-3-3B-Instruct-2512-BF16; mistralai/Ministral-3-14B-Instruct-2512-BF16 \\
Nemotron-3-Nano-4B & 16 & nvidia/NVIDIA-Nemotron-3-Nano-4B-FP8; nvidia/NVIDIA-Nemotron-3-Nano-30B-A3B-Base-BF16; nvidia/NVIDIA-Nemotron-3-Super-120B-A12B (BF16, FP8, NVFP4); nvidia/NVIDIA-Nemotron-3-Super-120B-A12B-Base-BF16; nvidia/NVIDIA-Nemotron-3-Ultra-550B-A55B (BF16, NVFP4); nvidia/NVIDIA-Nemotron-3-Ultra-550B-A55B-Base-BF16; nvidia/NVIDIA-Nemotron-3-Ultra-550B-A55B-GenRM; nvidia/NVIDIA-Nemotron-Labs-3-Elastic-30B-A3B (BF16, FP8, NVFP4); nvidia/NVIDIA-Nemotron-Labs-3-Puzzle-75B-A9B (BF16, FP8, NVFP4) \\
\midrule
Total & 146 & \\
\bottomrule
\end{tabular}
\caption{Public model repositories whose tokenizer file has the same SHA-256 digest as the frozen file of a tokenizer. A parenthesized list names suffixes that further repositories append to the name before it, and a leading plus sign means that the name itself is included. Qwen/Qwen3-4B-Instruct-2507 (+FP8) thus stands for two repositories, the model and its FP8 release, and nvidia/NVIDIA-Nemotron-3-Ultra-550B-A55B (BF16, NVFP4) for the two suffixed ones.}
\label{tab:same-file}
\end{table}

\subsection{Per-tokenizer results}
\label{sec:per-tokenizer}
The main paper reports most measurements for Qwen3-8B, Llama-3.1-8B, gpt-oss-120b, and DeepSeek-V3. This part breaks the checks and measurements down by tokenizer and adds the checks that cover the other seven. As in the main paper, a check compares a complete token-ID sequence with the Hugging Face reference, and a single differing ID counts as a difference.

\paragraph{Exactness.}
Table~\ref{tab:tok-exact} splits the exactness checks of the main paper by tokenizer. Under each of the eleven admitted tokenizers, including the five that the table has no column for, the fast encoder's whole encodes of the full web corpus, 3,800,016,491 documents and 12.3 trillion characters, match the reference. Six tokenizers also have checks of the chunked path, and none of these found a difference either. Among them, the adversarial battery and the tests of the Rust prototype are the only checks that cover Qwen3.5-0.8B and MiniMax-M3.
The battery's control reruns the same strings without the normalization check, and it differs from the reference only under Qwen3-8B. Qwen3.5-0.8B and MiniMax-M3 also normalize to NFC, but their splitting patterns keep combining marks inside a word, so the unchecked cuts did not change their output on these strings. The normalization check still rejected 9,891 and 9,825 candidate cuts for them.

\begin{table}[htbp]
\centering
\footnotesize
\setlength{\tabcolsep}{4pt}
\begin{tabular}{@{}lrrrrrr@{}}
\toprule
Check & Qwen3-8B & \shortstack[r]{Llama-\\3.1-8B} & \shortstack[r]{gpt-oss-\\120b} & \shortstack[r]{DeepSeek-\\V3} & \shortstack[r]{Qwen3.5-\\0.8B} & \shortstack[r]{MiniMax-\\M3} \\
\midrule
Fast encoder, whole encodes, documents & 3.8B & 3.8B & 3.8B & 3.8B & 3.8B & 3.8B \\
Web corpus split at \sys's cuts, documents & 102.4M & 102.4M & 102.4M & 102.4M & -- & -- \\
Chunking sweep, Zen 5 server & 16,050 & 6,560 & 6,560 & -- & -- & -- \\
Chunking sweep, two desktops & $2\times5{,}248$ & $2\times5{,}248$ & -- & -- & -- & -- \\
Serial encoder, two desktops & 2,645 & 2,645 & 2,645 & 2,645 & -- & -- \\
Special-token cuts, chat prompts & 610 & -- & -- & -- & -- & -- \\
Turn-by-turn replay, 610 chat transcripts & 323,680 & -- & -- & -- & -- & -- \\
Rust prototype, cuts and IDs & 35,912 & 36,000 & 24,000 & 24,000 & 24,000 & 24,000 \\
Adversarial battery & 100,000 & 20,000 & 20,000 & 20,000 & 30,000 & 30,000 \\
\quad control without the normalization check, differ & 2,115 & 0 & 0 & 0 & 0 & 0 \\
Scheduling testbed, 638 runs & 405,268 & -- & -- & -- & -- & -- \\
Scheduling testbed, desktop & 43,408 & -- & -- & -- & -- & -- \\
Capacity study, 128-core Zen 3 node & 429,457 & -- & -- & -- & -- & -- \\
SGLang, shadow re-encodes & 2,195 & -- & 59 & -- & -- & -- \\
SGLang, served long documents, shadows & 2,268 & -- & -- & -- & -- & -- \\
SGLang, greedy completions & 550 & -- & -- & -- & -- & -- \\
SGLang, reported prompt tokens & every run & -- & every run & -- & -- & -- \\
\bottomrule
\end{tabular}
\caption{Checks against the reference, by tokenizer. Each comparison covers a complete token-ID sequence, except the prompt-token counts that SGLang reports and the greedy completions, which compare the model's output. Apart from the control, none finds a difference. The control row counts the strings that differ once the normalization check is removed. The sweep rows count every output assembled at \sys's cuts, whichever encoder encoded the chunks (\sys's fast encoder, the reference, or fastokens), and the whole encodes of \sys's fast encoder. The serial-encoder row counts \sys's fast encoder called in three ways, through the native runtime with and without building the index for content lookup (Section~\ref{sec:ev-session}) and through the Python tokenizer interface. The gpt-oss-120b shadow re-encodes are not part of the main paper's count of 2,195. The row for served long documents includes runs with Qwen3-1.7B, which uses the same tokenizer files as Qwen3-8B, and twelve runs on a GH200 node with an ARM build of the fast encoder (Section~\ref{sec:validation}).}
\label{tab:tok-exact}
\end{table}

\begin{table}[htbp]
\centering
\small
\begin{tabular}{@{}lrr@{}}
\toprule
Check & Compared & Differ \\
\midrule
Chunking sweep, Zen 5 server & 29{,}170 & 0 \\
Chunking sweep and serial encoder, two Zen 5 desktops & 31{,}572 & 0 \\
Rust prototype, cuts and IDs & 167{,}912 & 0 \\
Scheduling testbed, server and desktop & 448{,}676 & 0 \\
Capacity study, 128-core Zen 3 node, 215 runs & 429{,}457 & 0 \\
\midrule
Total & 1{,}106{,}787 & 0 \\
\bottomrule
\end{tabular}
\caption{The rows that Table~3 of the main paper combines into its row of platform and runtime repeats. Each comparison covers a complete token-ID sequence, and none differs.}
\label{tab:platform-exact}
\end{table}

On the web corpus, each document, of 512 to 400,000 characters, was split into $\max(2,\lceil n/8{,}000\rceil)$ chunks, two up to 16,000 characters and about one per 8,000 characters beyond. Candidate cuts that fell inside a special-token string were rejected 250 times under Qwen3-8B and 162 times under Llama-3.1-8B, while gpt-oss-120b and DeepSeek-V3 never met this case. The normalization check rejected four candidates, all under Qwen3-8B. No acceptable cut existed in 36 documents under Qwen3-8B and in 35 under each of the others, and these documents were encoded whole. The guards are thus exercised on real text, not only in the battery.

\paragraph{Configuration checks for all eleven tokenizers.}
Table~\ref{tab:tok-config} gives two further checks for all eleven tokenizers. The first tests, for each tokenizer, the transition rule that Section~\ref{sec:cutproof} argues for. It builds 57,728 short strings, each with one of the eight class changes of the transition set between two characters and placed between one of 16 left contexts and one of 11 right contexts, and asks the reference pre-tokenizer whether a new piece starts at the change. Every string in which no new piece starts falls under one of the two exclusions of the rule, a line break after punctuation or an apostrophe after a letter.

The second check applies the repair's acceptance rule of Section~\ref{sec:splice} to the seams between overlapping chunks. It generates random documents of up to 5,000 characters that mix ASCII text, code, CJK and Hangul characters, emoji, accented letters, runs of whitespace, repeated fragments, and special-token strings. It cuts each document into chunks of 1 to 512 characters that overlap by 1 to 512 characters and joins the chunks at the seams that the rule accepts. A document with a seam that is still not accepted after five retries is encoded whole. For the seven tokenizers other than the main paper's four, every one of 3,000 documents per tokenizer matches the reference, and 72 to 126 of them per tokenizer are encoded whole.

Five admitted tokenizers, DeepSeek-V4-Flash, GLM-5.2, OLMo-3-7B, Ministral-3-8B, and Nemotron-3-Nano-4B, have the fast encoder's whole-encode comparison on 3.8 billion documents and the two checks above, but no check of the chunked path itself. Three of them share parts of their files with a tokenizer whose chunked path was checked. DeepSeek-V4-Flash keeps all 818 special-token strings of DeepSeek-V3 at the same IDs and adds 465 more, and every other part of its file equals that of DeepSeek-V3. GLM-5.2 and OLMo-3-7B split text with the regular expression of Llama-3.1-8B and have no normalizer, but their vocabularies are their own. The remaining two are related only to each other. Nemotron-3-Nano-4B has the merges, pattern, and vocabulary of Ministral-3-8B except for 14 special-token strings, and no other admitted tokenizer uses this pattern.

\begin{table}[htbp]
\centering
\footnotesize
\setlength{\tabcolsep}{4pt}
\begin{tabular}{@{}lllrrrrr@{}}
\toprule
& & & \multicolumn{2}{c}{Transition probe, unsplit} & \multicolumn{3}{c}{Seam test} \\
\cmidrule(lr){4-5}\cmidrule(l){6-8}
Tokenizer & Pattern & Normalizer & at exclusions & elsewhere & documents & wrong & whole \\
\midrule
Qwen3-8B & A & NFC & 1,760 & 0 & -- & -- & -- \\
Llama-3.1-8B & B & none & 1,760 & 0 & -- & -- & -- \\
gpt-oss-120b & C & none & 2,208 & 0 & -- & -- & -- \\
DeepSeek-V3 & D & empty list & 1,760 & 0 & -- & -- & -- \\
Qwen3.5-0.8B & E & NFC & 1,760 & 0 & 3,000 & 0 & 94 \\
MiniMax-M3 & C & NFC & 2,208 & 0 & 3,000 & 0 & 91 \\
DeepSeek-V4-Flash & D & empty list & 1,760 & 0 & 3,000 & 0 & 93 \\
GLM-5.2 & B & none & 1,760 & 0 & 3,000 & 0 & 126 \\
OLMo-3-7B & B & none & 1,760 & 0 & 3,000 & 0 & 89 \\
Ministral-3-8B & F & none & 1,760 & 0 & 3,000 & 0 & 81 \\
Nemotron-3-Nano-4B & F & none & 1,760 & 0 & 3,000 & 0 & 72 \\
\bottomrule
\end{tabular}
\caption{Configuration checks for all eleven tokenizers. Tokenizers with the same pattern letter split text with the same regular expressions. The probe has 57,728 strings per tokenizer, and a string is unsplit when the reference does not start a new piece at the class change it probes. The seam test was run for the seven tokenizers other than the main paper's four; whole counts documents encoded whole after the retries.}
\label{tab:tok-config}
\end{table}

\paragraph{Single-core speed.}
Table~\ref{tab:tok-floor} gives the single-core speed of the fast encoder for the four tokenizers of the main paper on English web documents of one million characters. The fast encoder is 30 to 34 times faster than the reference under the three tokenizers that do not normalize text and about 16 times faster under Qwen3-8B, the one of the four with NFC. Desktop B is slightly slower than desktop A, and the speedups on the two agree within 3\%.

\begin{table}[htbp]
\centering
\small
\begin{tabular}{@{}llrrrrrr@{}}
\toprule
& & \multicolumn{3}{c}{Desktop A} & \multicolumn{3}{c}{Desktop B} \\
\cmidrule(lr){3-5}\cmidrule(l){6-8}
Tokenizer & Normalizer & Reference & Fast encoder & Speedup & Reference & Fast encoder & Speedup \\
\midrule
Qwen3-8B & NFC & 3.3 & 54.4 & 16.6$\times$ & 3.1 & 51.2 & 16.4$\times$ \\
Llama-3.1-8B & none & 4.1 & 126.3 & 30.7$\times$ & 3.9 & 118.6 & 30.3$\times$ \\
gpt-oss-120b & none & 4.1 & 131.6 & 32.3$\times$ & 3.8 & 122.6 & 32.4$\times$ \\
DeepSeek-V3 & empty list & 3.5 & 113.6 & 32.9$\times$ & 3.2 & 109.9 & 33.8$\times$ \\
\bottomrule
\end{tabular}
\caption{Single-core throughput in MB/s of UTF-8 text on English web documents of one million characters, measured on two identical Zen 5 desktops. Each value is the median over nine of ten documents, leaving out the first document that the process encodes. Speedups divide the unrounded medians.}
\label{tab:tok-floor}
\end{table}

\paragraph{Chunk-parallel full encodes.}
Table~\ref{tab:tok-chunks} shows full encodes split into chunks on sixteen cores of the Zen 5 server for the three tokenizers of the chunking sweep. With sixteen chunks, Llama-3.1-8B and gpt-oss-120b run 2.9 to 4.5 times faster than their whole encode on one core, and Qwen3-8B runs 3.7 to 6.4 times faster. Two effects favor Qwen3-8B. Its encode costs about three times as much per character, 37 against about 13 milliseconds per million characters in the cost model of Section~\ref{sec:measure} fitted to web text, so the fixed cost of each chunk weighs less. A character that makes the fast encoder pass work to the reference also costs one chunk instead of the whole document. With 32 chunks Qwen3-8B reaches 5.0 to 8.3 times. The sweep ran Llama-3.1-8B and gpt-oss-120b with at most sixteen chunks.

\begin{table}[htbp]
\centering
\footnotesize
\setlength{\tabcolsep}{4pt}
\begin{tabular}{@{}lrrrrrrr@{}}
\toprule
& \multicolumn{2}{c}{One core} & \multicolumn{3}{c}{Sixteen cores} & \multicolumn{2}{c}{Best speedup over} \\
\cmidrule(lr){2-3}\cmidrule(lr){4-6}\cmidrule(l){7-8}
Input & Reference & Whole & 1 chunk & 16 chunks & 32 chunks & whole & 1 chunk \\
\midrule
\multicolumn{8}{@{}l}{\emph{Qwen3-8B}} \\
\quad web, 1M & 374.37 & 27.21 & 30.81 & 7.37 & 5.48 & 5.0$\times$ & 5.6$\times$ \\
\quad web, 4M & 1,617.90 & 150.06 & 170.28 & 23.62 & 18.31 & 8.2$\times$ & 9.3$\times$ \\
\quad code, 1M & 435.35 & 30.96 & 33.10 & 7.56 & 5.56 & 5.6$\times$ & 6.0$\times$ \\
\quad code, 4M & 1,866.23 & 149.47 & 154.59 & 23.22 & 19.30 & 7.7$\times$ & 8.0$\times$ \\
\quad agent, 1M & 437.32 & 26.84 & 28.75 & 5.74 & 4.29 & 6.3$\times$ & 6.7$\times$ \\
\quad agent, 3M & 1,331.20 & 86.56 & 104.58 & 15.80 & 10.91 & 7.9$\times$ & 9.6$\times$ \\
\quad multi, 1M & 529.54 & 62.53 & 65.23 & 15.04 & 11.37 & 5.5$\times$ & 5.7$\times$ \\
\quad multi, 4M & 2,805.52 & 440.28 & 452.57 & 68.72 & 52.95 & 8.3$\times$ & 8.5$\times$ \\
\multicolumn{8}{@{}l}{\emph{Llama-3.1-8B}} \\
\quad web, 1M & 302.52 & 9.81 & 11.20 & 2.55 & -- & 3.9$\times$ & 4.4$\times$ \\
\quad web, 4M & 1,458.07 & 35.46 & 42.19 & 11.80 & -- & 3.0$\times$ & 3.6$\times$ \\
\quad code, 1M & 343.53 & 16.08 & 17.62 & 4.04 & -- & 4.0$\times$ & 4.4$\times$ \\
\quad code, 4M & 1,448.57 & 62.31 & 67.80 & 15.75 & -- & 4.0$\times$ & 4.3$\times$ \\
\quad agent, 1M & 333.67 & 11.50 & 13.05 & 3.21 & -- & 3.6$\times$ & 4.1$\times$ \\
\quad agent, 3M & 1,036.46 & 32.87 & 38.50 & 9.36 & -- & 3.5$\times$ & 4.1$\times$ \\
\quad multi, 1M & 446.76 & 39.48 & 42.33 & 12.86 & -- & 3.1$\times$ & 3.3$\times$ \\
\quad multi, 4M & 2,374.46 & 231.69 & 250.62 & 51.71 & -- & 4.5$\times$ & 4.8$\times$ \\
\multicolumn{8}{@{}l}{\emph{gpt-oss-120b}} \\
\quad web, 1M & 307.66 & 10.10 & 11.46 & 3.28 & -- & 3.1$\times$ & 3.5$\times$ \\
\quad web, 4M & 1,287.98 & 37.26 & 44.04 & 12.68 & -- & 2.9$\times$ & 3.5$\times$ \\
\quad code, 1M & 341.32 & 14.43 & 15.66 & 4.13 & -- & 3.5$\times$ & 3.8$\times$ \\
\quad code, 4M & 1,534.16 & 54.66 & 62.32 & 14.64 & -- & 3.7$\times$ & 4.3$\times$ \\
\quad agent, 1M & 326.50 & 10.37 & 11.97 & 3.38 & -- & 3.1$\times$ & 3.5$\times$ \\
\quad agent, 3M & 1,006.47 & 30.89 & 36.99 & 9.22 & -- & 3.3$\times$ & 4.0$\times$ \\
\quad multi, 1M & 460.01 & 39.85 & 42.84 & 12.35 & -- & 3.2$\times$ & 3.5$\times$ \\
\quad multi, 4M & 2,415.55 & 219.66 & 229.54 & 49.26 & -- & 4.5$\times$ & 4.7$\times$ \\
\bottomrule
\end{tabular}
\caption{Median encode time in milliseconds on sixteen cores of the Zen 5 server, over 20 documents per setting, each timed at its first encode in the process. Reference and Whole encode the document on one core with the reference and with the fast encoder. The 1-chunk column uses the worker pool with a single chunk, which adds the cost of handing the document to a worker. The speedups use the best chunk count, 32 for Qwen3-8B and 16 for the others. Inputs are English web text, source code, text from public coding-agent transcripts, and multilingual text; sizes are in characters.}
\label{tab:tok-chunks}
\end{table}

\paragraph{Documents that the fast encoder hands to the reference.}
Table~\ref{tab:tok-guard} counts the documents that the fast encoder, encoding whole on one core, passed to the reference in the same sweep. Under Qwen3-8B this happens on every kind of input and more often for longer documents, up to 7 of 10 web documents of ten million characters, mostly because of the normalization guard. Under Llama-3.1-8B it happens only on multilingual text that contains a special-token string, and under gpt-oss-120b it never happens. A document passed to the reference takes six to twelve times as long as the other documents of its setting.

\begin{table}[htbp]
\centering
\footnotesize
\setlength{\tabcolsep}{4pt}
\begin{tabular}{@{}llrrrrrr@{}}
\toprule
& & & \multicolumn{3}{c}{Passed to the reference} & \multicolumn{2}{c}{Median time} \\
\cmidrule(lr){4-6}\cmidrule(l){7-8}
Tokenizer & Input & Documents & all & normalization & special string & passed & others \\
\midrule
Qwen3-8B & web, 64K & 20 & 1 & 1 & 0 & 17.39 & 1.64 \\
Qwen3-8B & web, 256K & 20 & 1 & 1 & 0 & 77.36 & 6.60 \\
Qwen3-8B & web, 1M & 20 & 4 & 4 & 0 & 286.32 & 25.41 \\
Qwen3-8B & web, 4M & 20 & 7 & 7 & 0 & 1202.29 & 129.26 \\
Qwen3-8B & web, 10M & 10 & 7 & 7 & 0 & 2881.65 & 397.84 \\
Qwen3-8B & code, 4M & 20 & 1 & 1 & 0 & 1328.83 & 148.42 \\
Qwen3-8B & code, 10M & 10 & 2 & 2 & 0 & 3243.87 & 378.99 \\
Qwen3-8B & agent, 3M & 20 & 1 & 1 & 0 & 718.60 & 85.64 \\
Qwen3-8B & multi, 256K & 20 & 1 & 1 & 0 & 152.38 & 18.13 \\
Qwen3-8B & multi, 1M & 20 & 1 & 1 & 0 & 445.24 & 61.11 \\
Qwen3-8B & multi, 4M & 20 & 7 & 2 & 5 & 1971.83 & 347.13 \\
Llama-3.1-8B & multi, 1M & 20 & 1 & 0 & 1 & 306.29 & 37.18 \\
Llama-3.1-8B & multi, 4M & 20 & 5 & 0 & 5 & 1567.26 & 215.93 \\
\bottomrule
\end{tabular}
\caption{Documents that the fast encoder, encoding whole on one core of the Zen 5 server, passed to the reference, by the guard that triggered, with median times in milliseconds. Settings without such documents are omitted, including every gpt-oss-120b setting.}
\label{tab:tok-guard}
\end{table}

\paragraph{Serving in SGLang.}
\sys served two models in SGLang, Qwen3-8B and gpt-oss-120b. Table~\ref{tab:tok-sglang} gives the time to first token for one session that grows by 400 characters per turn, over 30 warm turns per server process, leaving out the first warm turn of each process. The gain appears once the context reaches tens of thousands of tokens and grows with it, to 61\% for Qwen3-8B at 100K tokens and 45\% for gpt-oss-120b at about 90K tokens. The RTX 5090 desktop runs Qwen3-8B at its native context of 40,960 tokens, so it covers 8K and 32K tokens, and it shows the same pattern.

Exactness in SGLang rests on different evidence for the two models. For Qwen3-8B, the adapter encoded 2,195 served requests a second time with the reference and compared the IDs, and 550 greedy completions were compared with those produced with the reference tokenizer on the same host. Neither comparison found a difference. On gpt-oss-120b greedy decoding is not reproducible on this serving stack, since the same prompt sent twice produced a different completion in one of three repeats, so the check there uses token IDs only. Its 59 synchronous shadow re-encodes equal the reference, and so do the prompt-token counts that SGLang reports in every run.

\begin{table}[htbp]
\centering
\footnotesize
\setlength{\tabcolsep}{3pt}
\begin{tabular}{@{}llrrrrr@{}}
\toprule
& & \multicolumn{3}{c}{Median} & \multicolumn{2}{c}{99th percentile} \\
\cmidrule(lr){3-5}\cmidrule(l){6-7}
Model and host & Context & Reference & \sys & change & Reference & \sys \\
\midrule
Qwen3-8B, Zen 5 server (RTX PRO 6000) & 8K tokens & 29.13 & 29.18 & $+0.2$\% & 33.55 & 33.49 \\
 & 16K tokens & 35.48 & 30.84 & $-13.1$\% & 38.85 & 33.53 \\
 & 32K tokens & 55.67 & 37.52 & $-32.6$\% & 63.58 & 42.98 \\
 & 64K tokens & 94.85 & 48.02 & $-49.4$\% & 104.25 & 55.69 \\
 & 100K tokens & 136.83 & 53.72 & $-60.7$\% & 158.49 & 74.97 \\
\addlinespace
Qwen3-8B, Zen 5 desktop (RTX 5090) & 8K tokens & 27.30 & 27.22 & $-0.3$\% & 32.55 & 32.44 \\
 & 32K tokens & 50.20 & 35.99 & $-28.3$\% & 80.19 & 44.15 \\
\addlinespace
gpt-oss-120b, Zen 5 server (RTX PRO 6000) & 25,641 tokens & 61.37 & 46.61 & $-24.1$\% & 68.60 & 54.69 \\
 & 64,103 tokens & 116.18 & 66.95 & $-42.4$\% & 123.16 & 74.27 \\
 & 89,744 tokens & 152.85 & 83.62 & $-45.3$\% & 263.33 & 192.22 \\
\bottomrule
\end{tabular}
\caption{Time to first token in milliseconds in SGLang for one session that grows by 400 characters per turn, with the reference tokenizer and with \sys. Qwen3-8B ran with three server processes per tokenizer on the Zen 5 server and two on the Zen 5 desktop, and gpt-oss-120b with two on the Zen 5 server. The gpt-oss-120b contexts are 100K, 250K, and 350K characters.}
\label{tab:tok-sglang}
\end{table}

\section{The tokenizer contract}
\label{sec:contract}
Let $S$ be the exact text passed to the tokenizer after request rendering. Let $G(S)$ be its ordered sequence of units after special-token matching, normalization of the text between special-token strings, and pre-tokenization. A unit is either a special-token string (an added token in Hugging Face's terms) or a normalized piece. Let $E(u)$ be the deterministic sequence of IDs produced for unit $u$. With sequence-level post-processing disabled, the reference output is $F(S)=E^*(G(S))$, where $E^*$ applies $E$ to each unit and concatenates its outputs.

Writing $F$ this way shows what the tokenizer has to satisfy. BPE must never merge across two pieces, its merge rules must be fixed, and BPE dropout must be off, so that $E(u)$ depends on $u$ alone. The flags that control how special-token strings are matched, and the normalizer, are part of the configuration that admission checks (Section~\ref{sec:tokenizers}), and IDs computed under one configuration are never reused under another. Sequence-level post-processing runs once, on the complete assembled ID sequence, as the reference runs it once on the whole text. Calls with paired inputs, truncation modes, or other API options that this contract does not cover go to the reference tokenizer.

For each token the repair keeps an entry with three fields, the token ID, its span in the source text in request coordinates, and an ordinal that numbers the tokens sharing one span. The ordinal is needed because several byte-level tokens can report the same character span when BPE divides the bytes of one character among them (Section~\ref{sec:lopt}). Spans serve to line up the cached and the new sequence and need not partition the text. A match compares all three fields, and the splice is made at token indices, not at character positions.

\section{Incremental splice argument}
\label{sec:splice}
Let $A$ be the previously encoded text and $B$ an append. A repair encodes the window $W=(A\cat B)[s{:}]$ for some start $s$ inside $A$. The cached encoding of $A$ is exact. The repair must find a region where the cached encoding of $A$, the encoding of $W$, and the encoding of the whole of $A\cat B$ all agree. Equal IDs in the cached and window encodings do not show this by themselves.

A match is a run of consecutive entries with equal IDs, equal spans, and equal ordinals in the cached and the window encoding. The search skips a fixed margin at the left end of $W$ (Table~\ref{tab:tokenizers}). For the admitted tokenizers, \sys accepts a run only if it has at least two tokens, covers more normalized characters than the tokenizer's bound (Table~\ref{tab:tokenizers}), and contains a stable pre-tokenizer transition (Section~\ref{sec:cutproof}). These conditions are sufficient for the admitted tokenizers, not for BPE in general. The length condition on its own does not make a splice exact, and Section~\ref{sec:lopt} shows LoPT's version of it accepting a wrong seam. The argument below rests on the transition, and the length condition only adds margin.

\begin{proposition}[Splice through an agreed region]
Suppose a matching run contains a boundary $p$ for which the checks of an admitted tokenizer establish two facts. First, the cached units of $A$ before $p$ are unchanged by the append. Second, the units of $W$ after $p$ are the units of $A\cat B$ after $p$. If the cached and the window entries agree from $p$ through a later entry $r$ of the run, let $i$ and $j$ be the positions just after $r$ in the cached sequence $T_{\mathrm{old}}$ and in the window sequence $T_{\mathrm{win}}$. Then $T_{\mathrm{old}}[0{:}i]\cat T_{\mathrm{win}}[j{:}]=F(A\cat B)$.
\end{proposition}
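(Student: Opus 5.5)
The plan works entirely with the contract of Section~\ref{sec:contract}: $F(S)=E^*(G(S))$, where $G(S)$ is the ordered unit sequence and $E$ depends on a unit alone. I will split the unit sequences of $A$, of $W$, and of $A\cat B$ at the boundary $p$. The hypotheses say that $p$ is a unit boundary in all three. Write $G(A)=U_{<p}\cat U^A_{\ge p}$, $G(W)=V_{<p}\cat V_{\ge p}$ and $G(A\cat B)=U'_{<p}\cat U'_{\ge p}$.

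The first fact gives $U'_{<p}=U_{<p}$: the units of $A$ before $p$ are exactly those of $A\cat B$ before $p$. The second fact gives $V_{\ge p}=U'_{\ge p}$. Applying $E^*$ and the concatenation property of $E^*$, I get
\begin{equation*}
F(A\cat B)=E^*(U_{<p})\cat E^*(V_{\ge p}).
\end{equation*}
The cached sequence $T_{\mathrm{old}}=F(A)$ begins with $E^*(U_{<p})$, and the window sequence $T_{\mathrm{win}}$ ends with $E^*(V_{\ge p})$. So I need two indices: $a$, the position in $T_{\mathrm{old}}$ where the tokens of units starting at or after $p$ begin, and $b$, the analogous position in $T_{\mathrm{win}}$. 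Then $F(A\cat B)=T_{\mathrm{old}}[0{:}a]\cat T_{\mathrm{win}}[b{:}]$.

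The remaining step moves the splice from $(a,b)$ to $(i,j)$. By assumption the cached and window entries agree entry by entry from $p$ through $r$, and the run is consecutive in both sequences. Hence $i-a=j-b=:m$, and $T_{\mathrm{old}}[a{:}i]=T_{\mathrm{win}}[b{:}j]$ as ID sequences. Substituting this into the display,
\begin{equation*}
T_{\mathrm{old}}[0{:}a]\cat T_{\mathrm{win}}[b{:}]=T_{\mathrm{old}}[0{:}a]\cat T_{\mathrm{old}}[a{:}i]\cat T_{\mathrm{win}}[j{:}]=T_{\mathrm{old}}[0{:}i]\cat T_{\mathrm{win}}[j{:}].
\end{equation*}
This is the claimed identity.

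I expect the main obstacle to be the bookkeeping that identifies $a$ and $b$ with the run entries at $p$. The run is matched on (ID, span, ordinal) records in request coordinates, but spans need not partition the text. Byte-level tokens can share a span, which is why ordinals exist. I have to argue that the first run entry at or after $p$ is the first token of the first unit after $p$ in both encodings. That requires $p$ to be a unit boundary in both, so that no token's span straddles $p$, together with the fact that the run's entries are consecutive indices in both sequences. I would make this precise by indexing tokens by (unit, position within $E(u)$), and noting that a unit boundary in request coordinates determines the index uniquely. I would also state explicitly the degenerate case where $p$ coincides with the start of the run, so that $m$ may equal the full run length, and the case $r$ is the last entry, so that $j$ may equal the end of $T_{\mathrm{win}}$.
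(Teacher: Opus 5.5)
Your proposal is correct and follows the same route as the paper's proof. The cached prefix before $p$ is exact by the first fact, the window suffix after $p$ is exact by the second fact and the determinism of $E$, and the entries that agree from $p$ through $r$ can be taken from either sequence; your intermediate splice at $(a,b)$ and the index bookkeeping you flag spell out what the paper states in words, since it too reads ``agree from $p$'' as alignment at the first token after $p$ in both sequences and argues with token indices rather than spans.
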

\begin{proof}
Before $p$, the cached units are the whole-input units by the first fact. After $p$, encoding each unit of the window gives the whole-input IDs by the second fact, because $E$ is deterministic. The entries from $p$ through $r$ have the same IDs in both encodings, so taking them from the cache instead of the window changes nothing. The cached prefix through $r$ followed by the window suffix after $r$ is therefore the whole-input output. The argument works with token indices and does not need the character spans to be disjoint.
\end{proof}

The proposition assumes a boundary $p$ and does not say how to find one. For the admitted tokenizers, \sys finds it with the stable class transitions of Section~\ref{sec:cutproof}, the left margin, and three checks on the window before the IDs from the fast encoder are used. The window must be valid UTF-8, the active normalizer must leave it unchanged, and the bytes decoded from its token IDs must reproduce it exactly. A window that the normalizer would change is never spliced, even where its entries would still match, because normalization moves the entries relative to the raw text. If the window fails a check, or the fast encoder's guard fires, the reference encodes the text after the session's sealed boundary, the retained tail of $A$ together with $B$, and the sealed prefix keeps its IDs. Special-token strings inside the window do not trigger this fallback, because the window's pipeline matches them before the fast path, as a full encode by the reference would.

The first window holds the last 512 characters of $A$. If no run qualifies, \sys doubles the window, at most five times in the evaluated configuration. When the retries are used up, or the window already covers the whole retained tail, the reference encodes the retained tail and $B$ in the same way. The fast encoder's guard can also ask for this at once, without further retries. The session seals its IDs only up to a stable transition, so the pieces before the sealed boundary do not depend on the text after it, and the sealed prefix followed by this encode is the whole-input output. Exactness therefore depends on neither the starting size nor the retry limit. They only decide how much local work \sys tries before it hands the retained tail to the reference.

The serving adapter reuses a session record only when its text $A$ is an exact prefix of the new request. A session identifier that stays the same is not enough, because a chat template can rewrite earlier turns. Edits in the middle of the history are not appends, so \sys encodes such requests in full, and the measurements do not include them. The tokenizer's session records and the model's KV cache are evicted independently, so a request whose session record is gone is re-encoded even when the model still caches its prefix.

\section{Independent cuts for full encoding}
\label{sec:cutproof}
\subsection{Pre-tokenizer transitions}
\label{sec:transitions}
The admitted tokenizers use six distinct splitting patterns (Table~\ref{tab:tok-config}). Each matches runs of letters, groups of digits, runs of punctuation, and whitespace, and they differ in the details. \sys assigns character classes from a table that matches the reference tokenizer's classes. Let $\mathrm{L}$ denote a letter, $\mathrm{N}$ a number, $\mathrm{W}$ whitespace, $\mathrm{M}$ a mark, and $\mathrm{O}$ the remaining class. The candidate transition set is
\[
\begin{split}
\mathcal S=\{&\mathrm{L}{\to}\mathrm{N},\ \mathrm{L}{\to}\mathrm{O},\ \mathrm{L}{\to}\mathrm{W},\\
            &\mathrm{N}{\to}\mathrm{L},\ \mathrm{N}{\to}\mathrm{O},\ \mathrm{N}{\to}\mathrm{W},\\
            &\mathrm{O}{\to}\mathrm{N},\ \mathrm{O}{\to}\mathrm{W}\}.
\end{split}
\]
The $\mathrm{O}{\to}\mathrm{W}$ case is excluded when the right character is CR or LF, and $\mathrm{L}{\to}\mathrm{O}$ is excluded when that character is an apostrophe. The set has no transition out of whitespace, because a pattern can attach the last space of a whitespace run to the following word, so how the run is split depends on the character after it.

For each of the six patterns, an accepted transition in normalized text that contains no special-token string always ends one piece and starts the next, whatever text precedes it. A cut at the transition also makes it the end of the left chunk. The run of letters, digits, or punctuation before the transition then has to close at the end of the chunk exactly as it closes at the class change, and for the six patterns it does. Independent chunks need this end-of-input property. The repair of Section~\ref{sec:splice} does not, because its window always extends past the transition.

\subsection{Special-token strings}
A cut must not fall inside any occurrence of any special-token string, including occurrences that the whole-text scan does not select. The scan takes the leftmost, longest match, so of two overlapping occurrences only one is selected. A cut inside the other one could still change which of the two a chunk's scan selects, so the guard checks every occurrence. \sys admits a tokenizer only if none of its special tokens strips neighboring whitespace, matches only as a single word, or is matched after normalization, and it reads these flags from the loaded configuration (Section~\ref{sec:tokenizers}).

If no occurrence crosses the cut, no partial match is open at the cut, so each chunk contains the special-token strings that the whole-text scan finds in that part of the text. The cut then lies in ordinary text between two special-token strings, and only that stretch is split further for normalization and pre-tokenization.

\subsection{Normalization anchors}
For an NFC configuration, a raw class change is not enough. One counterexample is a raw letter U+0958 followed by a slash. NFC decomposes the letter to U+0915 and the mark U+093C. The raw letter-to-other transition becomes mark-to-other in the normalized text and does not satisfy the allowed set. The control row of Table~3 of the main paper, which removes the normalization check, exposes this class of error.

For a proposed cut $b$, \sys looks for two anchors $a<b<c$ within 256 code points on each side of $b$, positions at which the normalizer is known to split the text (the predicate is given below). The ends of the text also count as anchors. The cut is rejected if either anchor is missing or if any occurrence of a special-token string touches $S[a{:}c]$. Otherwise the normalizer $N$ must satisfy
\[
 N(S[a{:}c])=N(S[a{:}b])\cat N(S[b{:}c]),
\]
and the last character of the normalized left side and the first character of the normalized right side must form an accepted transition. The candidate $b$ itself is still chosen by an accepted transition in the raw text, as in Section~\ref{sec:transitions}.

A code point can serve as an anchor if it is not a mark, normalizing it alone leaves it unchanged, placing the probe marks U+0301 or U+0335 before or after it does not reorder it, and it does not compose with a preceding Hangul probe U+1100 or U+AC00. ASCII characters take a fast path. At run time the predicate is evaluated with the installed reference normalizer, and an offline test over every code point compares it with the Unicode tables of the reference normalizer's library. The argument relies on those tables and on the reordering and composition cases of NFC that they allow, so a different normalizer or Unicode version needs the same test.

Because the normalizer splits the text at the anchors $a$ and $c$, the equation above, which holds between them, shows that it also splits the text at $b$. The transition is then rechecked on the normalized characters at $b$, since a class change that exists in the raw text may have disappeared after normalization, as with U+0958 above. If any of these conditions fails, \sys makes no cut at $b$.

\subsection{Chunk argument}
\begin{proposition}[Concatenation at accepted cuts]
For a frozen, admitted configuration, let $S_1,\ldots,S_k$ partition $S$ at positions satisfying the special-token, normalization, and pre-tokenizer conditions above. Then
\[
 F(S)=F(S_1)\cat\cdots\cat F(S_k).
\]
\end{proposition}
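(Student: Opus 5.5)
I will prove the statement using the contract of Section~\ref{sec:contract}, $F(S)=E^*(G(S))$. Since $E^*$ applies the deterministic $E$ unit by unit and concatenates, it suffices to show the unit-level identity
\[
 G(S)=G(S_1)\cat\cdots\cat G(S_k).
\]
Once this holds, $F(S)=E^*(G(S_1))\cat\cdots\cat E^*(G(S_k))=F(S_1)\cat\cdots\cat F(S_k)$. I will reduce to a single cut, $k=2$: a division at $k-1$ accepted cuts follows by induction, since each remaining cut of $S_2\cat\cdots\cat S_k$ is still accepted in that suffix. To justify this I need a small check that the three conditions are local. The special-token guard looks only at occurrences touching the cut. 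The normalization window $[a,c]$ lies within 256 code points, and a text end can only add an anchor, not remove one. The transition is determined by its two neighboring characters. So a cut accepted in $S$ remains accepted in any chunk containing its window, or the window is clipped at a chunk end that itself counts as an anchor.

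For one cut $b$, I follow the pipeline stage by stage. First, special-token matching. By the guard, no occurrence of any special-token string crosses $b$. The leftmost-longest scan therefore has no open match at $b$, so the scan of $S$ selects exactly the union of the selections on $S[{:}b]$ and $S[b{:}]$. Because admission excludes whitespace stripping, single-word matching, and matching after normalization, selected special tokens do not interact with their neighbors. Hence $b$ lies inside one ordinary segment $R=R_1\cat R_2$ of $S$ between special tokens, and every other segment and special unit appears unchanged in exactly one chunk.

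Second, normalization of $R$. Here I use the anchor lemma: $N$ factors at the anchors $a$ and $c$. Together with the checked equation $N(S[a{:}c])=N(S[a{:}b])\cat N(S[b{:}c])$, this gives $N(R)=N(R_1)\cat N(R_2)$. The condition that no special token touches $S[a{:}c]$ keeps the window inside $R$.

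Third, pre-tokenization of $N(R)$. The recheck ensures the normalized characters at $b$ form a transition in $\mathcal S$, outside the exclusions. I then invoke the per-pattern fact of Section~\ref{sec:transitions}: such a transition ends one piece and starts the next in any context. This gives that the pieces of $N(R)$ are the pieces matched up to $b$ followed by those matched from $b$. The regex scan from $b$ on $N(R_2)$ is then identical to the scan on $N(R)$ from $b$, because the pattern is anchored at the scan position and never looks behind. For the left side, I additionally need the end-of-input property: the last run closes at the end of $N(R_1)$ exactly as it closes at the class change, and the scan before that is unchanged because matching only looks ahead, and lookahead that stops at the class change yields the same match at end of input.

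I expect the main obstacle to be this pre-tokenizer step, specifically establishing the "any context" and end-of-input claims for all six patterns. That requires reasoning about regex alternation order and lookaheads, such as \texttt{\textbackslash s+(?!\textbackslash S)}, case-insensitive contraction alternatives, and digit grouping like \texttt{\textbackslash p\{N\}\{1,3\}}, where a digit run split at the end of a chunk could regroup. For each pattern, I would first enumerate the alternatives that can match a string ending at the left class and show that none can consume the right class. The exclusions (apostrophe after a letter, CR/LF after punctuation) are exactly the alternatives that can, and the transition probe of Table~\ref{tab:tok-config} serves as empirical corroboration. The digit-grouping worry disappears because $\mathrm{N}$ runs are never split internally: cuts occur only at class changes, so the run's grouping from its start is unchanged.
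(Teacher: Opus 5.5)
Your proof is essentially the paper's. You reduce via $F=E^*\circ G$ to $G(S)=G(S_1)\cat\cdots\cat G(S_k)$ and carry each cut through the special-token scan, the anchor-bounded normalization equation, and the stable transition with its end-of-input case, and you take the per-pattern regex facts as given at exactly the point where the paper asserts them.

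The only addition is your induction on the number of cuts, and its clipped-window case is asserted rather than checked: if the window $[a,c]$ of a later cut $b$ reaches back past $b_1$, acceptance in the suffix needs $N(S[b_1{:}c])=N(S[b_1{:}b])\cat N(S[b{:}c])$, which follows from the two checked window equations only once the normalization splits at $b_1$ and $b$ are known to be compatible (true here, since no canonical reordering or composition crosses an accepted transition), and the paper's all-cuts-at-once argument relies on the same fact tacitly when two cuts share one window.
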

\begin{proof}
The special-token condition keeps the special-token scan the same on each chunk. The normalization condition keeps the text that reaches the pre-tokenizer the same, including the stretch that a cut divides. The class-transition condition, with its end-of-input case, keeps the ordered pre-tokenizer pieces the same. So $G(S)$ is the concatenation of the chunk unit sequences $G(S_i)$. Since $E$ encodes each unit independently and deterministically, $E^*$ preserves this concatenation. Post-processing is then applied once to the assembled sequence, as the reference applies it once to the whole text.
\end{proof}

A bounded backward search places cuts near evenly spaced targets, and Section~\ref{sec:ev-sched} gives the scan budget and the minimum chunk size. A missed target is absorbed by a larger neighboring chunk, and if every target is missed the text is encoded whole. The proposition holds for any partition at accepted cuts, so it does not need a maximum gap between safe cuts, equal chunk sizes, or a particular completion order.

\section{LoPT's seam matching as published}
\label{sec:lopt}
LoPT~\cite{lopt2026} is the most recent parallel tokenizer that cuts a long input without regard to the tokenizer and repairs the seams afterwards. The main paper states that its published algorithm is not exact. This section gives the evidence, takes one failure apart, and compares the speed of the two approaches.

\subsection{Reimplementation}
The code of LoPT is not public, so we reimplemented the algorithm from its paper~\cite{lopt2026}. The input is cut into 32 chunks of equal length, and each chunk reaches into the next one by an overlap. The chunks are encoded in parallel, and the outputs of neighboring chunks are joined where their tokens in the overlap coincide. Two tokens coincide when their character spans in the original text are equal, a choice the paper makes to avoid ``expensive token ID comparisons''. A seam is accepted when at least two consecutive tokens coincide. Otherwise the chunk length is doubled and the input is encoded again, at most five times, after which the input is encoded serially. The paper does not state the overlap length, and we use 512 characters, the default of the earlier ParallelTokenizer. Theorem~3.1 of the paper states that the joined output equals serial tokenization whenever every matched run of tokens spans more characters than the longest token of the vocabulary.

The reimplementation reproduces closely the numbers of LoPT that do not depend on hardware, its accuracy and its retry counts. On LongBench~v2~\cite{longbenchv2} (the contexts of its 503 questions) and L-Eval~\cite{leval2024} (the 537 documents of its Hugging Face release), whose texts are English, it returns the serial IDs for every document under all six tokenizers of LoPT's evaluation. The BPE tokenizers retry on one of the 503 LongBench~v2 documents, where LoPT reports none, and on repetitive inputs the BERT tokenizers exhaust all five retries and fall back to serial encoding, as in LoPT's Table~3.

\subsection{Wrong IDs on CLongEval}
The published rule fails on CLongEval~\cite{clongeval2024}, the Chinese long-text benchmark of LoPT's own evaluation. Table~\ref{tab:lopt-clongeval} counts, for each tokenizer and subset, the samples whose joined IDs differ from serial encoding with the same Hugging Face tokenizer. Under Qwen3, Llama-3.1, and gpt-oss, 30 of the 21,789 encodes are wrong, at most 0.31\% of a subset, and under DeepSeek-V3 and the two BERT tokenizers none is. CLongEval pairs several questions with one context, so the 30 wrong encodes come from 22 distinct pairs of tokenizer and text. Every wrong output was accepted at the first attempt, without a retry. When the matching compares token IDs as well as positions, every sample of the three affected tokenizers returns the serial IDs. LoPT's paper reports an accuracy of 1.0 on this benchmark for all six tokenizers, so its own implementation may compare more than the positions that the paper describes.

\begin{table}[h]
\centering
\begin{tabular}{lrrrr}
\toprule
Tokenizer & Small & Medium & Large & With ID check \\
\midrule
Qwen3 & 8 & 1 & 0 & 0 \\
Llama-3.1 & 8 & 2 & 0 & 0 \\
gpt-oss & 4 & 5 & 2 & 0 \\
DeepSeek-V3 & 0 & 0 & 0 & -- \\
BERT, cased & 0 & 0 & 0 & -- \\
BERT, uncased & 0 & 0 & 0 & -- \\
\midrule
Samples per tokenizer & 2,605 & 2,653 & 2,005 & -- \\
\bottomrule
\end{tabular}
\caption{CLongEval samples whose IDs from LoPT's published rule differ from serial encoding. The last column reruns all three subsets for the three tokenizers with differences, with token IDs compared as well as positions; a dash marks a tokenizer that was not rerun because its published rule produced no difference.}
\label{tab:lopt-clongeval}
\end{table}

\subsection{How position matching goes wrong}
The errors come from characters whose bytes byte-level BPE divides among tokens. Each of the Chinese characters in this example takes three bytes in UTF-8, and depending on its neighbors BPE may keep them in one token or give the last byte to the next token. The tokenizer reports offsets in characters, so every token that holds some of a character's bytes reports that character's span. Two chunks that divide the bytes of one character differently therefore produce tokens with equal spans and different IDs, and matching by position alone treats them as one token.

Table~\ref{tab:lopt-example} shows the seam of one failure, in a 52,232-character document of the medium subset under Qwen3. The character U+547D, whose UTF-8 bytes are 229, 145, and 189, is the last character of the left chunk, so the left chunk encodes it as one three-byte token. Serial encoding and the right chunk see the next character, U+6570, and attach the byte 189 to it, which leaves a token for the first two bytes of U+547D. The three-byte token and the two-byte token report the same span, so the matched run extends to the end of the left chunk. The join then takes the three-byte token from the left chunk and continues with the right chunk's token that starts with byte 189 again. The joined sequence has the serial length and differs from it at one position, and decoding it yields a stray replacement character between the two characters.

\begin{table}[h]
\centering
\begin{tabular}{rrrllc}
\toprule
Index & Serial ID & LoPT ID & Serial bytes & LoPT bytes & Character span \\
\midrule
11,133 & 53930 & 53930 & U+4E4B & U+4E4B & 13,574--13,575 \\
11,134 & 17401 & 50509 & 229 145 & 229 145 189 & 13,575--13,576 \\
11,135 & 29941 & 29941 & 189, U+6570 & 189, U+6570 & 13,575--13,577 \\
\bottomrule
\end{tabular}
\caption{The seam of one wrong output under Qwen3. Index 11,134 carries different tokens with the same span, so position matching accepts the seam, and the joined output contains byte 189 twice.}
\label{tab:lopt-example}
\end{table}

The matched runs in these failures are long. In 19 of the 22 failing pairs of tokenizer and text, every seam of the document matched more than 128 characters, and no token of the three vocabularies is longer than 128 characters. These documents meet the condition of Theorem~3.1 and still receive wrong IDs. The argument behind the theorem identifies a token with its span, and byte-level tokens inside one multi-byte character break that identification. A longer overlap moves the end of the left chunk but does not remove the case, since any chunk can end right after a character whose bytes serial encoding divides.

\subsection{Other inputs}
Comparing token IDs does not make the rule exact for WordPiece. BERT's tokenizer replaces a word longer than 100 characters with one unknown token whose span is the whole word, so the span of a token is not bounded by the longest entry of the vocabulary. When a cut falls inside such a word, each chunk sees a shorter word and splits it into pieces. Random testing found periodic words, such as a long run of \code{yF}, for which the pieces of the two chunks agree in position and ID over 26 characters, more than the 18 characters of BERT's longest token, while serial encoding emits a single unknown token for the whole word. In a 368-character example the joined output has 246 tokens and serial encoding 121. Raising the threshold of the theorem to the 100-character word limit rejects these seams.

The experiments of LoPT's paper accept a seam after two matching tokens, which is weaker than the condition of the theorem. In 60,000 random cases over six tokenizers, with chunk and overlap lengths between 1 and 512 characters, 25 joined outputs differ from serial encoding under this threshold. In 24 of them the matched run is shorter than the longest token, typically a special-token string or another long token cut into pieces that happen to line up. The remaining case matches 261 characters across a combining sequence whose bytes the two chunks group differently, the mechanism of the CLongEval failures. These inputs lie outside the settings of LoPT's evaluation.

\subsection{Speed with the same encoder}
Table~\ref{tab:lopt-speed} compares LoPT with \sys's safe cuts on web text and code, using the Hugging Face tokenizer as the encoder of every chunk in both, on eight cores of the Zen 5 server. LoPT needed no retry on any of these inputs, and both methods returned the serial IDs for all 360 of them. The safe cuts are 1.57 to 2.10 times faster. LoPT matches its seams one after another, which takes 19 to 36\% of its time, while safe cuts are joined by concatenation.

\begin{table}[h]
\centering
\begin{tabular}{lllrrrrrr}
\toprule
 & & & \multicolumn{3}{c}{8 chunks} & \multicolumn{3}{c}{16 chunks} \\
\cmidrule(lr){4-6}\cmidrule(lr){7-9}
Tokenizer & Input & Size & Serial & LoPT & Safe cuts & Serial & LoPT & Safe cuts \\
\midrule
Qwen3 & Web text & 256K & 103 & 31.7 & 18.6 & 101 & 32.9 & 19.4 \\
 &  & 1M & 420 & 121 & 73.7 & 417 & 123 & 77.9 \\
 &  & 4M & 1,784 & 499 & 286 & 1,789 & 508 & 305 \\
 & Code & 256K & 121 & 39.0 & 22.2 & 120 & 40.2 & 19.7 \\
 &  & 1M & 483 & 154 & 87.2 & 487 & 150 & 88.2 \\
 &  & 4M & 2,107 & 617 & 321 & 2,087 & 621 & 339 \\
\addlinespace
Llama-3.1 & Web text & 256K & 70.8 & 26.4 & 12.9 & 70.6 & 27.6 & 14.0 \\
 &  & 1M & 319 & 104 & 56.4 & 320 & 104 & 57.2 \\
 &  & 4M & 1,509 & 456 & 238 & 1,523 & 446 & 232 \\
 & Code & 256K & 82.2 & 32.8 & 15.7 & 82.4 & 35.3 & 17.0 \\
 &  & 1M & 362 & 134 & 66.0 & 362 & 131 & 66.1 \\
 &  & 4M & 1,703 & 560 & 267 & 1,642 & 554 & 272 \\
\bottomrule
\end{tabular}
\caption{Median encode time in milliseconds on eight cores, with the Hugging Face tokenizer as the encoder in every column. Serial encoding was measured again in each configuration. Each 8-chunk cell has 20 inputs, and each 16-chunk cell 10.}
\label{tab:lopt-speed}
\end{table}

\section{Measurement definitions and evaluation details}
\label{sec:measure}
This section defines the measurements and expands results and mechanisms that the main paper states in a sentence or two. It covers the rules by which the worker pool orders and cuts work, how the SGLang adapter finds a request's session, the full comparison of pool designs, the replay of SGLang's tokenizer stage, capacity on the 128-core Zen 3 node, SGLang under load, the fastokens divergence, fallback localization, and the Rust prototype. Latencies are in milliseconds. Unless a caption says otherwise, they are rounded from the recorded values to two decimals below 1\,ms, to one decimal below 100\,ms, and to whole milliseconds from 100\,ms up, and percentage changes and ratios are computed before rounding. In the tables, a dagger (\dag) marks a backlogged run, as defined for each harness below, and a dash marks a configuration that was not run at that point.

\subsection{Workload and latency units}
Character lengths count Unicode code points, not UTF-8 bytes. Throughput values in MB/s use UTF-8 byte counts. Model-context experiments state lengths in tokens.

Front-end replays execute real tokenizer calls with controlled synthetic arrivals. Their latency is completion time minus scheduled arrival, so it includes the time a request waits for its session's previous request. TTFT is the time to the first streamed token, and where arrivals follow a schedule it is measured from the scheduled arrival and includes the same wait. Closed-loop throughput cannot stand in for open-loop latency. In a closed loop, a faster front end sends each session's next turn sooner, so more turns overlap in GPU prefill and the median TTFT can rise while throughput goes up (Section~\ref{sec:ev-load}), whereas open-loop runs hold the arrival schedule fixed.

The request-form comparison uses Qwen3-8B in SGLang v0.5.19, a 100K-token context, 400-character appends, and 60 warm turns pooled from two server processes per tokenizer. Clean raw text, raw text with one decomposed accent, and rendered chat are measured separately (Table~\ref{tab:ev-forms}). The first warm request in each process is excluded. The confidence intervals resample these turns, so they do not include variation between separately started servers.

\begin{table}[h]
\centering
\begin{tabular}{lrrr}
\toprule
100K-token request form & Reference & Fast stateless & \sys \\
\midrule
Clean raw text & 140.64 & 54.21 & 53.17 \\
Raw text, one decomposed accent & 137.94 & 145.66 & 53.64 \\
Chat completion & 144.80 & 150.07 & 59.61 \\
\bottomrule
\end{tabular}
\caption{Median TTFT in milliseconds. These are the unrounded values behind the request-form comparison of the main paper.}
\label{tab:ev-forms}
\end{table}

\subsection{Scheduler and chunking rules as implemented}
\label{sec:ev-sched}

The pool is a set of worker processes. Each worker loads its own tokenizer, encodes with a single thread, and can be pinned to a core. Every worker has a local queue for the tasks of the sessions it owns, and one shared queue holds stateless tasks and chunks. Each session has a home worker, the live worker selected by a CRC-32 hash of the session key. An idle worker compares the head of its local queue with the head of the shared queue and takes the task with the smaller key, or the local task on a tie. Under FIFO order the key is the arrival time. Under shortest-first order it is the estimated service time plus the arrival time, both in milliseconds, which sorts tasks exactly as the estimate minus a credit that grows by one millisecond for every millisecond of waiting (the aging weight, 1.0 by default). Every queued task gains credit at the same rate, so the key never changes while a task waits and each queue can be a heap. A running encode is never interrupted.

The estimate for a task of $N$ characters is $a+b_\kappa N(1+5s)$, where $s$ is the share of non-ASCII characters and $a=0.2$\,ms. The per-character cost $b_\kappa$ depends on the kind of task. It starts at $3\times10^{-5}$\,ms for stateless tasks and chunks and at $10^{-5}$\,ms for session tasks, whose work is mostly an append, and after each task it moves toward the value implied by the measured service time, as an exponentially weighted average with weight 0.05. The non-ASCII share is zero for ASCII text, which Python recognizes in constant time. Other texts are counted in full up to 2,048 characters and through four 512-character windows spread over longer ones. The estimate only orders the queues, so it can be rough. At the end of the microbenchmark of Table~\ref{tab:ev-poolcost}, calibration had moved the per-character cost to $2.75$ and $2.76\times10^{-5}$\,ms for stateless tasks and to $2.78$ and $2.97\times10^{-5}$\,ms for chunks, with four and eight workers, and the mean relative error of the estimates was 22.3\% and 15.5\%.

A stateless request of $N$ characters is cut only if $N$ reaches a threshold, 131,072 characters by default. It then becomes $k=\min(\lceil N/s\rceil,k_{\max},W)$ chunks, where $s$ is the target chunk size, 65,536 characters by default, $k_{\max}=16$, and $W$ is the number of live workers. An optional rule also caps $k$ by the number of idle workers at arrival; no experiment uses it. The planner places tentative cuts at $\lfloor iN/k\rfloor$ for $i=1,\ldots,k-1$ and scans backward from each for an accepted cut, first over 48 characters and then, if that finds none, over up to 4,096. It does not make a cut that would leave a chunk shorter than $\max(16,\min(1024,\lfloor N/4k\rfloor))$ characters, and it never cuts a text shorter than 256 characters. The text of a cut that is not made stays with the preceding chunk, so a request can get fewer chunks than requested but never different IDs. Each chunk carries its request's identifier and its index. A worker returns the chunk's IDs as a packed array of 32-bit integers, and the collector stores them under the index and releases the request once every index has arrived.

The first error from any task fails its request, and later answers for that request are dropped. A worker that exits is retired. The request it was serving fails, and the session tasks queued on it move to the new home of their keys. A request with no answer 60\,s after its first dispatch also fails, and the pool starts only if every worker has loaded within 300\,s. When the pool fails a request, the adapter encodes it in its own process on the route the request would have taken without the pool, so the IDs never depend on the pool. Across the 103 pool runs of the tokenizer-stage replay (Section~\ref{sec:ev-replay}), 389,240 requests were submitted, and none failed, timed out, or fell back to that path. In 12 of these runs one stateless document was encoded whole, because the planner found no accepted cut within its scan window.

Table~\ref{tab:ev-params} lists the parameters and the values the experiments used. The scheduling testbed of the main paper implements the same designs in a separate harness and differs from the pool in three ways. It rounds $N/c$ to the nearest integer instead of rounding up. It orders shortest-first queues by a fixed estimate, 0.2\,ms plus $3.3\times10^{-5}$\,ms per character for stateless work and chunks, and 0.6\,ms plus $5\times10^{-4}$\,ms per appended character for session appends. Its per-worker FIFO design places stateless requests round-robin, so session affinity applies only to session work.

\begin{table}[htbp]
\centering
\small
\begin{tabular}{p{4.3cm}p{3.0cm}p{7.6cm}}
\toprule
Parameter & Pool default & In the experiments \\
\midrule
Queue order & shortest-first & shortest-first; FIFO in the four-worker FIFO comparison \\
Aging weight & 1.0\,ms per ms waited & 1.0 \\
Cutting threshold & 131,072 characters & 65,536 in the replay, the SGLang pool runs, and the microbenchmark; 131,072 in the testbed \\
Target chunk size $c$ & 65,536 characters & 65,536; also 32,768 and one chunk per worker on the 128-core Zen 3 node \\
Chunk count & $\min(\lceil N/c\rceil,k_{\max},W)$ & testbed rounds $N/c$ to the nearest integer \\
Chunk cap $k_{\max}$ & 16 & 16; equal to the number of workers on the 128-core Zen 3 node \\
Estimate, fixed part & 0.2\,ms & 0.2\,ms; 0.6\,ms for appends in the testbed \\
Estimate, per character & $3\times10^{-5}$\,ms stateless, $10^{-5}$\,ms session & calibrated as in the pool; fixed $3.3\times10^{-5}$ and $5\times10^{-4}$\,ms (per appended character) in the testbed \\
Non-ASCII weight & 5 & 5; not used in the testbed \\
Calibration weight & 0.05 & 0.05; no calibration in the testbed \\
Answer timeout & 60\,s & 60\,s \\
Pool start timeout & 300\,s & 300\,s \\
Encoding threads per worker & 1 & 1 \\
\bottomrule
\end{tabular}
\caption{Scheduler and chunking parameters. The pool default is what the adapter uses when nothing is configured.}
\label{tab:ev-params}
\end{table}

Table~\ref{tab:ev-poolcost} gives the cost of single calls. Sending the same session append through the pool adds 0.83\,ms at 100K characters and 5.73\,ms at one million, the cost of moving the whole transcript to the worker and its IDs back. Cutting a stateless million-character request into four chunks on four workers brings it from 30.80 to 12.43\,ms, and eight chunks on eight workers bring it to 8.39\,ms. At 100K characters, two chunks (2.06 and 2.07\,ms) are no faster than the serial encode (1.98\,ms). A single task of 60,000 characters takes 1.71 and 1.77\,ms on the pool.

\begin{table}[htbp]
\centering
\small
\begin{tabular}{lrr}
\toprule
Call & 100K characters & 1M characters \\
\midrule
Reference tokenizer, serial & 22.79 & 325.31 \\
\sys serial, stateless & 1.98 & 30.80 \\
\sys serial, session append of 29 characters & 0.68 & 5.34 \\
Pool of 4, same session append & 1.51 & 11.07 \\
Pool of 8, same session append & 1.46 & 11.29 \\
Pool of 4, stateless, one task & 1.71 & -- \\
Pool of 8, stateless, one task & 1.77 & -- \\
Pool of 4, stateless, chunked & 2.06 (2 chunks) & 12.43 (4 chunks) \\
Pool of 8, stateless, chunked & 2.07 (2 chunks) & 8.39 (8 chunks) \\
Pool of 4, four requests at once (wall time) & 5.95 & -- \\
Pool of 8, eight requests at once (wall time) & 7.08 & -- \\
\bottomrule
\end{tabular}
\caption{Cost of a single call with the Qwen3 tokenizer, in milliseconds to two decimals, measured on an otherwise idle host, each the median of five calls, or of three for the reference. The cutting threshold is 65,536 characters here, so 100K-character requests are cut. The one-task rows use the first 60,000 characters of the same document, which stay below the threshold and run as a single task. The session rows time the first call that appends 29 characters to a session holding the text, in a fresh session for each repetition; a second call with the same text, which finds it stored and only returns the IDs, costs about the same (0.75 and 6.44\,ms serial). Every answer of the pool equals the reference IDs.}
\label{tab:ev-poolcost}
\end{table}

\subsection{How a request finds its session in SGLang}
\label{sec:ev-session}

The adapter looks for a session key in each request, in this order: SGLang's session identifier (available from v0.5.17), the cache salt that separates a client's prefix cache from other clients', and the part of the request identifier before a separator, when one is configured. A request that uses SGLang's server-side continuation sends only its new turn, which is not a transcript that a stored session could extend, so it gets no key. In the configuration used by the experiments with session identifiers, the key reaches the encoder through a context variable that the adapter binds around SGLang's tokenization of each request, and a request without a key is served by content lookup. The adapter can also use content lookup for every request, which is its default, or encode every request without session state.

Content lookup needs only the text. Texts shorter than 4,096 bytes are encoded directly, because for them the bookkeeping costs more than the encode it would save. For every stored text, an index keeps 128-bit keyed BLAKE2b digests of prefixes, one at the end of the text and one at each byte offset that doubles from 4,096. A single pass over the request computes its running digest at every indexed length and proposes the longest stored text whose end digest matches. A byte-by-byte comparison with the stored text must confirm the match before any IDs are reused, so a colliding digest can waste a comparison but cannot change the output. When the confirmed text equals the request, its stored IDs are returned, and when it is a proper prefix, only the remainder goes through the incremental repair of the main paper. Otherwise the request is encoded in full and stored as a new entry. Three limits apply. Content lookup keeps at most 1,024 such entries and drops the oldest first. The in-process cache keeps up to 128\,MiB of session text resident and evicts the least recently used session first. The native store holds up to 4,096 sessions. A keyed session uses its key in place of the index. A stored text that equals the request is reused as it is, one that is a prefix of the request is extended, and any other is replaced.

\paragraph{Session state size.} A session's state is its text, which the store keeps for the prefix check, plus what \sys adds: a four-byte ID per token, source positions for the tail that a repair may touch, a few 16-byte prefix digests, and prefix blocks shared through the store, about 0.56\,MB for a long session that shares no prefix with another. Table~2 of the main paper gives the added state by context. With the text, a 100K-token session takes about 1.48\,MB. The 100K-token row was checked in a served store, where 20 sessions of about 100K tokens grew the native heap by 1,449,152 bytes each, against 1,447,367 by this accounting applied to those sessions' actual lengths. The 128\,MiB budget counts text only, and the shared blocks are not reclaimed when a session is evicted.

With the pool, a keyed request waits in the queue of its home worker. A request without a key is routed by content. The adapter hashes its first 4,096 characters with CRC-32 and uses the hash as its key, and because these characters are the same in every turn of a conversation, all turns reach the same worker, whose store then finds the stored prefix. A keyless request that is long enough to be cut is treated as stateless and cut into chunks, unless the adapter was configured to use content lookup for every request.

In the chat-interface runs behind the request-form comparison of the main paper, the adapter was in its keyed configuration but no key reached it, and all 99 encodes of each of the two server processes took the content route. With raw-text requests at 100K tokens, content lookup and the session identifier give the same median TTFT, 53.59\,ms, while the adapter's median encode time is 2.19 to 2.46\,ms with content lookup and 1.73 to 1.98\,ms with the identifier across three server processes. Two SGLang options change how keys arrive. The dynamic-batch tokenizer runs encodes in an executor that does not carry context variables, so every request takes the content route; with eight paced sessions the median TTFT is then 199.3\,ms, against 200.2\,ms with keys. With four tokenizer processes, each process keeps its own store, so the turns of one session can reach different stores. Keyed encodes slower than 2\,ms, which had to encode in full rather than append, rise from 0.68\% to 3.76\%, the adapter's encode P99 rises from 1.30 to 4.47\,ms, and the median TTFT is 201.1\,ms.

\subsection{Pool designs in full}
\label{sec:ev-designs}

The pool-design comparison runs in the scheduling testbed on four workers, with the standard mix of the main paper and every request encoded in full, without session reuse. Each request re-sends its session's transcript, which grows from turn to turn. In the runs labeled 100K, request sizes therefore have a median of about 131K characters and reach 390K, and about half of the requests are long enough to be cut, into two to four chunks. In the 1M runs the median is about 1.03 million characters, and 98\% of the requests are cut, always into four chunks. Every design replays the same arrivals, so the columns of Table~\ref{tab:ev-designs} line up. Because per-worker FIFO places stateless requests round-robin in the testbed, this comparison exercises no session affinity. The two-lane design uses the best of the twelve configurations in Table~\ref{tab:ev-lanes}. A run counts as backlogged if requests are still unfinished at the end, the achieved rate is below 85\% of the offered rate, or the P99 exceeds 3\,s, and the harness then skips the higher loads for that design.

At 86.5 requests/s of 1M-character requests, the P99 is 2,203\,ms with per-worker FIFO queues, 904\,ms with the shared shortest-first queue, and 136\,ms with chunks in FIFO order. Near saturation, ordering the chunks shortest-first helps further, with 738\,ms against 1,891\,ms at 122.7 requests/s. The P99 of the tuned two lanes is 2.2 and about 6 times that of chunks in FIFO order at the two loads used for tuning with 100K contexts, and 3.9 and 9.5 times with 1M contexts. At the lightest loads the ratios are 2.0 and 3.4. One worker alone has a P99 of 10.3\,ms at 80.3 requests/s and is backlogged at 353 with 100K contexts; with 1M contexts it has a P99 of 302\,ms at 6.5 requests/s and is backlogged at 31.5.

\begin{table}[htbp]
\centering
\small
\setlength{\tabcolsep}{4.5pt}
\begin{tabular}{lrrrrrrr}
\toprule
\multicolumn{8}{l}{\emph{100K-character contexts}} \\
Achieved load (requests/s) & 307 & 611 & 812 & 962 & 1,117 & 1,399 & \\
\midrule
Per-worker FIFO & 8.4 & 12.0 & 22.0 & 69.3 & 4,108\dag & -- & \\
Two lanes, tuned & 8.9 & 14.0 & 65.0 & 4,584\dag & -- & -- & \\
Shared shortest-first & 8.4 & 10.5 & 17.0 & 36.1 & 2,162 & 8,633\dag & \\
Chunks, FIFO & 4.4 & 6.4 & 10.5 & 24.0 & 675 & 7,650\dag & \\
Chunks, shortest-first & 4.5 & 6.8 & 10.3 & 23.6 & 371 & 6,876\dag & \\
\midrule
\multicolumn{8}{l}{\emph{1M-character contexts}} \\
Achieved load (requests/s) & 26.8 & 52.5 & 71.1 & 86.5 & 98.4 & 122.7 & 150.2 \\
\midrule
Per-worker FIFO & 345 & 456 & 1,048 & 2,203 & 5,664\dag & -- & -- \\
Two lanes, tuned & 355 & 425 & 1,204 & 5,716\dag & -- & -- & -- \\
Shared shortest-first & 350 & 405 & 593 & 904 & 1,500 & 6,494\dag & -- \\
Chunks, FIFO & 106 & 109 & 126 & 136 & 150 & 1,891 & 10,015\dag \\
Chunks, shortest-first & 99.6 & 110 & 124 & 137 & 157 & 738 & 5,814\dag \\
\bottomrule
\end{tabular}
\caption{P99 latency of the five pool designs on four workers, with every request encoded in full. The two-lane design sends requests of at least 256K characters to one worker and the rest to three at 100K, and requests of at least 128K characters to three workers and the rest to one at 1M. A dagger marks a backlogged run. The column at 1,117 requests/s sits at the edge of saturation, where a repeated run moves an entry by about 20\% but leaves the order of the designs unchanged.}
\label{tab:ev-designs}
\end{table}

The two-lane design sends a request of at least $\tau$ characters to the large lane and every other request to the small lane, each lane with its own FIFO queue and its own workers. Table~\ref{tab:ev-lanes} lists all twelve configurations at each context size, four thresholds times three splits of the four workers, each at the two loads used for tuning. The best configuration at 100K keeps 14.0\,ms and 55.2\,ms, and the best at 1M keeps 436 and 1,065\,ms. When these two configurations were run again for Table~\ref{tab:ev-designs}, they gave 14.0 and 65.0\,ms and 425 and 1,204\,ms at the same loads.

\begin{table}[htbp]
\centering
\small
\begin{tabular}{lccc}
\toprule
Threshold $\tau$ & Small:large = 1:3 & 2:2 & 3:1 \\
\midrule
\multicolumn{4}{l}{\emph{100K-character contexts, loads of 611 and 812 requests/s}} \\
8K & 15.5 / 510 & \dag & \dag \\
32K & 15.5 / 506 & \dag & \dag \\
128K & 34.3 / \dag & 22.8 / 358 & \dag \\
256K & \dag & 738 / \dag & \textbf{14.0 / 55.2} \\
\midrule
\multicolumn{4}{l}{\emph{1M-character contexts, loads of 52.5 and 71.1 requests/s}} \\
32K & 424 / 1,171 & 2,958 / \dag & \dag \\
128K & \textbf{436 / 1,065} & \dag & \dag \\
512K & 443 / 1,151 & 2,675 / \dag & \dag \\
1M & \dag & 528 / \dag & 415 / 2,167 \\
\bottomrule
\end{tabular}
\caption{P99 latency of the twelve two-lane configurations at each context size, given as P99 at the lower load / P99 at the higher load. Columns give how many of the four workers serve the small and the large lane, and K is 1,024 characters. A dagger alone means the configuration was backlogged at the lower load and not run at the higher one. The configuration in bold is the one used in Table~\ref{tab:ev-designs}.}
\label{tab:ev-lanes}
\end{table}

Table~\ref{tab:ev-mixes} gives the two mixes with session reuse that the main paper summarizes, on four workers with 1M-character contexts. Session work stays on its home worker in every design, and the two-lane design is the untuned default, a 32K threshold with three small-lane workers that also serve the sessions. In the first mix, 92.3\% of the requests in the measured window are appends (1,632 of 1,768 at 88.4 requests/s, with 103 session starts and 33 stateless documents). All five designs then have a P99 between 397 and 410\,ms, set by session starts, whose P99 is 440 to 461\,ms, since a start encodes a whole million-character transcript on its home worker and no queue order can shorten it. At 225.4 requests/s the two shortest-first designs still hold (1,637 and 1,697\,ms), while the other three are backlogged.

In the second mix, clusters of four stateless million-character documents arrive as a Poisson process at 0.5 clusters per second on top of the session load. At the lightest load, chunking cuts the documents' P99 from 348 to 126\,ms and the P99 of the appends queued behind them from 301 to 134\,ms, while session starts stay at 416 and 418\,ms. Two lanes keep the appends fast (49.0\,ms) by giving the documents a worker of their own, where the documents wait 656\,ms. Across the five loads the chunked designs keep the documents between 118 and 139\,ms, and at the four higher loads the append P99 of every design lies between 375 and 481\,ms.

\begin{table}[htbp]
\centering
\small
\setlength{\tabcolsep}{4.5pt}
\begin{tabular}{lrrrrr}
\toprule
\multicolumn{6}{l}{\emph{Appends to 1M-character contexts, P99 of all requests}} \\
Achieved load (requests/s) & 88.4 & 168.2 & 225.4 & 264.2 & \\
\midrule
Per-worker FIFO & 404 & 497 & 3,261\dag & -- & \\
Two lanes & 410 & 681 & 8,734\dag & -- & \\
Shared shortest-first & 397 & 480 & 1,637 & 6,355\dag & \\
Chunks, FIFO & 400 & 494 & 3,576\dag & -- & \\
Chunks, shortest-first & 400 & 476 & 1,697 & 6,253\dag & \\
\midrule
\multicolumn{6}{l}{\emph{With clusters of million-character documents, P99 of documents / P99 of appends}} \\
Achieved load (requests/s) & 42.6 & 84.4 & 107.0 & 127.6 & 145.0 \\
\midrule
Per-worker FIFO & 348 / 301 & 511 / 431 & 367 / 430 & 516 / 449 & 370 / 481 \\
Two lanes & 656 / 49.0 & 1,277 / 381 & 1,104 / 417 & 992 / 446 & 654 / 477 \\
Shared shortest-first & 349 / 291 & 395 / 376 & 373 / 415 & 423 / 439 & 377 / 449 \\
Chunks, FIFO & 126 / 134 & 139 / 377 & 135 / 414 & 138 / 441 & 127 / 454 \\
Chunks, shortest-first & 118 / 141 & 129 / 375 & 128 / 405 & 123 / 424 & 129 / 438 \\
\bottomrule
\end{tabular}
\caption{The two mixes with session reuse on four workers. The two-lane design uses a 32K threshold with three small-lane workers. A dagger marks a backlogged run.}
\label{tab:ev-mixes}
\end{table}

Chunking also lowers the variability of service. With a request's service summed over its tasks, the second moment of service falls from 4,052 to 1,303\,ms$^2$ at four workers, one million characters, and 26 requests/s. This is a descriptive statistic, and the P99 values above are measured directly.

A simple cost model describes the best chunk count of an isolated encode. With $T(N,k)\approx a+bN/k+ck$, where $a$ is fixed overhead, $b$ the per-character encoding cost, and $c$ the per-chunk overhead, the time is lowest near $k\approx\sqrt{bN/c}$. The scheduler does not use this model, and its chunk count follows the target size.

\subsection{Replay of SGLang's tokenizer stage}
\label{sec:ev-replay}

The replay reproduces the request path of SGLang's tokenizer manager in one process with one asyncio event loop, without a server or a model. The reference tokenizer and \sys serial run on the loop's thread, as SGLang runs its tokenizer. The in-process configuration adds chunk-parallel encoding with four threads for stateless requests of at least 65,536 characters, with the chunk count taken from the planner's default table, which gives one chunk below 256,000 characters, four below 1,000,000, sixteen below 4,000,000, and 32 above, and at most one chunk per 65,536 characters. The pool configurations hand every request to a pool of two, four or eight workers that orders its queues shortest-first, keeps sessions on their home workers, and cuts stateless requests of at least 65,536 characters into chunks of about 65,536 characters. The load is the standard mix of the scheduling testbed, with 64 sessions whose turns each re-send the whole transcript with a session key, 2.3\% stateless documents of 32K to 100K characters, Poisson arrivals, and at most one request in flight per session. Each configuration and rate runs in a fresh process with the Qwen3 tokenizer, for a 20-second window after a 4-second warm-up, on the 16-core Zen 5 server with no other job on the processor that runs the replay. After each run the chunked requests and a 5\% sample are re-encoded with the reference and compared, and every request is compared across the configurations run at the same rate. No comparison differed. A run counts as backlogged if requests are unfinished at the end or the P99 exceeds 10\,s.

Table~\ref{tab:ev-replay} gives the median and P99 at each offered rate; the achieved rates are within $-1.0\%$ and $+8.0\%$ of the offered ones. With the reference tokenizer every turn re-encodes the whole transcript, so its P99 reaches 524\,ms at 20 requests/s with 100K-character sessions and 945\,ms at 10 requests/s with 250K, and it is backlogged at 5 requests/s with 1M. \sys serial encodes only each turn's append and a short suffix before it. With 100K sessions its median stays between 1.6 and 2.2\,ms and its P99 between 4.4 and 9.9\,ms up to 320 requests/s, and it saturates at 640. The pools add about 1\,ms to the median with 100K sessions, about 2\,ms with 250K and about 6.5\,ms with 1M, the cost of moving each transcript and its IDs between processes, and they carry the load further, with a P99 of 27.2\,ms for the four-worker pool at 640 requests/s with 100K sessions. The in-process configuration never cuts a request in these three loads, because every stateless document is shorter than 256,000 characters, so it tracks \sys serial. With 1M-character sessions, session starts, the first request of each session, have a P99 of 383 to 409\,ms in every \sys configuration at 5 requests/s. On the serial path a start also holds up the turns queued behind it on the event loop, so \sys serial has a P99 of 55.3\,ms at 5 requests/s against a median of 6.6\,ms, while pools of four and eight workers, where a start occupies only its home worker, keep the P99 at 19.1 and 17.9\,ms.

\begin{table}[htbp]
\centering
\footnotesize
\setlength{\tabcolsep}{3.5pt}
\begin{tabular}{rllllll}
\toprule
Rate & Reference & \sys serial & In-process chunks & Pool of 2 & Pool of 4 & Pool of 8 \\
\midrule
\multicolumn{7}{l}{\emph{100K-character sessions}} \\
5 & 36.2 / 73.9 & 1.7 / 4.9 & 1.6 / 4.7 & 2.9 / 4.7 & 2.7 / 5.7 & 2.8 / 5.5 \\
10 & 39.3 / 108 & 1.6 / 4.5 & 1.7 / 4.7 & 2.6 / 6.0 & 2.7 / 7.6 & 2.8 / 6.9 \\
20 & 80.8 / 524 & 1.6 / 4.4 & 1.6 / 4.5 & 2.7 / 6.1 & 2.6 / 6.2 & 2.7 / 6.9 \\
40 & 7,161\dag{} / 20,387\dag & 1.7 / 5.4 & 1.7 / 5.5 & 2.8 / 6.1 & 2.8 / 6.5 & 2.8 / 7.0 \\
80 & -- & 1.7 / 4.9 & 1.8 / 5.2 & 2.8 / 6.1 & 2.8 / 6.3 & 2.9 / 6.8 \\
160 & -- & 1.8 / 6.4 & 1.8 / 6.2 & 3.0 / 7.3 & 3.0 / 6.9 & 3.0 / 7.4 \\
320 & -- & 2.2 / 9.9 & 2.2 / 9.8 & 3.4 / 10.5 & 3.3 / 9.1 & 3.2 / 8.4 \\
640 & -- & 1,170 / 4,363 & 1,179 / 4,393 & 8.4 / 58.1 & 5.7 / 27.2 & 5.5 / 24.2 \\
800 & -- & 5,178\dag{} / 11,075\dag & 5,187\dag{} / 11,094\dag & 990 / 4,518 & 81.3 / 1,059 & 37.9 / 839 \\
1,000 & -- & -- & -- & 4,815\dag{} / 11,396\dag & 2,958 / 8,808 & 2,998\dag{} / 10,685\dag \\
\midrule
\multicolumn{7}{l}{\emph{250K-character sessions}} \\
5 & 102 / 283 & 2.6 / 5.2 & 2.7 / 10.1 & 4.5 / 9.7 & 4.5 / 8.0 & 4.6 / 8.4 \\
10 & 468 / 945 & 2.5 / 11.5 & 2.5 / 10.9 & 4.3 / 12.6 & 4.5 / 15.4 & 4.6 / 14.5 \\
20 & 11,794\dag{} / 26,581\dag & 2.3 / 10.7 & 2.4 / 10.6 & 4.4 / 12.0 & 4.2 / 13.6 & 4.3 / 13.3 \\
40 & -- & 2.4 / 10.3 & 2.4 / 10.6 & 4.5 / 13.2 & 4.3 / 13.7 & 4.4 / 14.7 \\
80 & -- & 2.6 / 12.0 & 2.6 / 12.3 & 4.7 / 13.4 & 4.6 / 14.5 & 4.7 / 14.7 \\
160 & -- & 3.4 / 18.9 & 3.3 / 19.3 & 5.5 / 19.0 & 5.3 / 17.2 & 5.3 / 16.9 \\
320 & -- & 740 / 2,618 & 701 / 2,499 & 11.7 / 58.0 & 8.6 / 32.5 & 8.4 / 28.9 \\
640 & -- & 15,438\dag{} / 28,169\dag & 15,451\dag{} / 28,200\dag & 7,978\dag{} / 16,682\dag & 6,172\dag{} / 14,154\dag & 5,982\dag{} / 15,652\dag \\
\midrule
\multicolumn{7}{l}{\emph{1M-character sessions}} \\
5 & 15,783\dag{} / 29,188\dag & 6.6 / 55.3 & 6.6 / 87.6 & 13.4 / 73.4 & 12.8 / 19.1 & 12.9 / 17.9 \\
10 & -- & 6.6 / 202 & 6.5 / 206 & 12.8 / 58.2 & 12.6 / 53.2 & 12.7 / 49.6 \\
20 & -- & 6.4 / 40.0 & 6.6 / 39.0 & 13.2 / 46.1 & 13.0 / 53.4 & 12.9 / 52.0 \\
40 & -- & 7.6 / 399 & 7.7 / 401 & 14.8 / 380 & 14.4 / 166 & 14.1 / 76.5 \\
80 & -- & 911 / 2,544 & 1,055 / 2,912 & 41.6 / 477 & 26.6 / 432 & 24.7 / 350 \\
160 & -- & 14,267\dag{} / 29,354\dag & 14,111\dag{} / 29,458\dag & 6,338\dag{} / 18,763\dag & 3,101\dag{} / 16,001\dag & 2,676\dag{} / 15,513\dag \\
\bottomrule
\end{tabular}
\caption{Median / P99 latency from scheduled arrival in the replay of SGLang's tokenizer stage, by offered rate in requests/s. No configuration was run above the first rate at which it backlogged. A dagger marks a backlogged run.}
\label{tab:ev-replay}
\end{table}

In the burst workload, clusters of four stateless documents of one million characters arrive as a Poisson process at 0.5 clusters per second, on top of session turns at the stated rate with 100K-character sessions (Table~\ref{tab:ev-burst}). \sys serial encodes each document on the event loop, and the turns queued behind a cluster wait, so the turn P99 is 325 to 654\,ms. Encoding each document as 16 chunks with four threads in process brings the turn P99 to 38.5 to 54.7\,ms up to 160 requests/s. The pools cut each document into four or eight chunks and encode them away from the loop, and they hold the turn P99 at 20.5 to 49.2\,ms with four workers and 10.0 to 21.0\,ms with eight, the highest values at 320 requests/s, where another user's job started on the server's other processor during the runs. The reference tokenizer is already backlogged at 20 requests/s. Each run has 24 to 56 documents in its window, and the document P99 rests on these. Counting the documents, the achieved rates are up to 12.4\% above the offered turn rates.

\begin{table}[htbp]
\centering
\small
\setlength{\tabcolsep}{4pt}
\begin{tabular}{rlllllr}
\toprule
Rate & Reference & \sys serial & In-process chunks & Pool of 4 & Pool of 8 & Documents \\
\midrule
20 & 17,160\dag{} / 12,802\dag & 377 / 397 & 54.7 / 111 & 32.4 / 142 & 14.5 / 94.2 & 40 \\
40 & -- & 325 / 478 & 38.5 / 252 & 20.5 / 119 & 10.0 / 72.0 & 56 \\
80 & -- & 654 / 764 & 53.0 / 73.1 & 31.7 / 110 & 15.1 / 68.4 & 48 \\
160 & -- & 542 / 728 & 48.8 / 81.1 & 28.9 / 121 & 16.0 / 72.8 & 48 \\
320 & -- & 530 / 525 & 200 / 269 & 49.2 / 132 & 21.0 / 83.1 & 24 \\
\bottomrule
\end{tabular}
\caption{P99 of session turns / P99 of documents in the burst workload of the tokenizer-stage replay, by offered turn rate in requests/s. The last column counts the documents in the measured window. A dagger marks a backlogged run.}
\label{tab:ev-burst}
\end{table}

The four-worker FIFO comparison of the main paper uses the same harness and the pool of the adapter, with the stateless form of the standard mix at 1M characters and one shared FIFO queue. In one configuration no request is cut. In the other, requests of at least 65,536 characters are cut, a transcript into four chunks and the few stateless documents above that size into two. Table~\ref{tab:ev-fifo} gives the results at four rates, and three arrival seeds at 26 requests/s give P99 values of 353, 359 and 377\,ms for whole requests and 102, 102 and 115\,ms for chunks.

\begin{table}[htbp]
\centering
\small
\begin{tabular}{rrll}
\toprule
Offered rate & Achieved rate & Whole requests & Chunks \\
\midrule
13 & 12.4 & 32.0 / 346 & 13.6 / 95.7 \\
26 & 26.8 & 32.1 / 374 & 14.4 / 103 \\
39 & 41.4 & 36.3 / 409 & 16.5 / 117 \\
52 & 52.4 & 40.6 / 451 & 19.5 / 132 \\
\bottomrule
\end{tabular}
\caption{Median / P99 latency with one shared FIFO queue on four workers and stateless million-character requests, with rates in requests/s. The arrivals are the same in both columns.}
\label{tab:ev-fifo}
\end{table}

\subsection{Capacity on the 128-core Zen 3 node}
\label{sec:ev-capacity}

The capacity runs use the scheduling testbed on 128-core Zen 3 nodes, with the workers pinned to 16, 32 or 64 cores. The capacity brackets of Section~4.7 of the main paper come from the repeats on a rented node described later in this section; the runs described first used shared cluster nodes with two 64-core EPYC 7713 processors. Eight load-generating processes send arrivals to one dispatcher process, which assigns tasks to the workers, because one process cannot generate enough load. The load is the stateless form of the standard mix, with 64 sessions, every request re-sending its session's transcript and encoded in full, and at most one request in flight per session. A run passes if its P99 is at most 50\,ms, and it counts as backlogged under the testbed rule of Section~\ref{sec:ev-designs}. The runs used several hosts of the same type, some shared with other jobs, so repeated points can differ; Table~\ref{tab:ev-capacity} lists every run near each bracket. Runs in which the host stalled during load generation were excluded before analysis and are not shown. In the chunked rows, chunks of the 64K or 32K target come from rounding $N/c$, and one chunk per worker means chunks of $N/W$ characters but at least 16,384, which gives 32 chunks of about 31,250 characters at 32 workers and 61 to 64 chunks at 64 workers.

\begin{table}[htbp]
\centering
\small
\setlength{\tabcolsep}{4pt}
\begin{tabular}{rllll}
\toprule
Workers & Configuration & Passes in every run & Mixed & Fails in every run \\
\midrule
\multicolumn{5}{l}{\emph{1M-character requests}} \\
16 & Chunks, 64K target & 244.4 (44.8, 44.8) & -- & 290.2 (59.5--62.2, 3 runs) \\
16 & Per-worker FIFO & -- & -- & 91.7 (1,421) \\
32 & Chunks, 64K target & 411.5 (48.9, 49.5) & -- & 488.7 (53.7--58.9, 3 runs) \\
32 & Chunks, one per worker & 530 (30.1, 38.9) & 565.8 (45.3, 50.4) & 620 (198, 1,398) \\
32 & Per-worker FIFO & -- & -- & 154.3 (4,878\dag) \\
64 & Chunks, 64K target & 500 (46.0) & -- & 625 (56.6--59.1, 5 runs) \\
64 & Chunks, 32K target & 750 (27.8, 29.1) & -- & 875 (56.1--104, 4 runs) \\
64 & Chunks, one per worker & 500 (20.9) & 625 (39.1, 93.5) & 750 (4,078\dag, 5,095\dag) \\
\midrule
\multicolumn{5}{l}{\emph{100K-character requests}} \\
16 & Chunks, 64K target & 1,887 (10.7) & -- & 2,516 (71.3) \\
16 & Per-worker FIFO & 1,150 (20.4) & -- & 1,400 (507) \\
32 & Chunks, 64K target & 3,850 (17.8) & -- & 4,422 (2,620) \\
32 & Per-worker FIFO & 1,658 (28.7) & -- & 2,200 (63.3) \\
64 & Chunks, 64K target & 4,900 (38.9) & -- & 5,907 (249) \\
64 & Per-worker FIFO & 1,500 (34.2) & -- & 2,200 (53.7) \\
\bottomrule
\end{tabular}
\caption{Offered rates in requests/s around a 50\,ms P99 objective on the 128-core Zen 3 node, with the P99 of each run in parentheses. The columns give the highest rate that passed in every run, a rate that passed in one run and failed in another, and the lowest rate above it that failed in every run. Several values are runs on different hosts. A dash means that no tested rate falls in that column. All rows use the earlier version of the testbed's dispatcher.}
\label{tab:ev-capacity}
\end{table}

The three brackets of Section~4.7 in the main paper were repeated on a 128-core Zen 3 node with two 64-core EPYC 7763 (up to 3.5\,GHz) rented for these runs, with the current dispatcher and three arrival seeds per point. Host-wide CPU use outside our container, measured from \code{/proc/stat} against the container's own accounting, stayed below 0.7 cores during every group. Table~\ref{tab:ev-capacity-dedicated} gives each seed's bracket, the highest tested rate that passed and the lowest tested rate above it that failed, on a grid that includes the shared-node points and one or two rates between them. The union over seeds, the highest rate that passed in every seed and the lowest that failed in every seed, is the bracket reported in Section~4.7, and it coincides with the shared-node brackets above.

\begin{table}[htbp]
\centering
\small
\begin{tabular}{rlllll}
\toprule
Workers & Configuration & Seed 1 & Seed 2 & Seed 3 & All seeds \\
\midrule
16 & Chunks, 64K target & 270 $\rightarrow$ 290.2 & 270 $\rightarrow$ 290.2 & 244.4 $\rightarrow$ 270 & 244.4 $\rightarrow$ 290.2 \\
32 & Chunks, one per worker & 530 $\rightarrow$ 565.8 & 565.8 $\rightarrow$ 620 & 565.8 $\rightarrow$ 620 & 530 $\rightarrow$ 620 \\
64 & Chunks, 32K target & 750 $\rightarrow$ 812 & 812 $\rightarrow$ 875 & 812 $\rightarrow$ 875 & 750 $\rightarrow$ 875 \\
\bottomrule
\end{tabular}
\caption{Offered rates in requests/s of million-character stateless requests on the dedicated 128-core Zen 3 node, by arrival seed. Each entry is the highest tested rate whose P99 stayed at or below 50\,ms, followed by the lowest tested rate above it that did not. The last column is the union over seeds. A run is valid if it exits cleanly, every scheduled arrival is either completed or still pending at the end, and the host did not stall. Every run was valid in this sense, and 24,475 shadow comparisons found no difference.}
\label{tab:ev-capacity-dedicated}
\end{table}

On the shared nodes, interpolating the P99 linearly between the rates of each bracket, the chunked pool with 100K-character requests meets the objective up to about 2,295, 3,857 and 4,953 requests/s on 16, 32 and 64 workers, and per-worker FIFO up to about 1,165, 1,992 and 2,066.

At 64 workers the target chunk size matters (Table~\ref{tab:ev-chunksize}). With the earlier dispatcher and 750 requests/s, 32K-character chunks keep the P99 at 27.8 and 29.1\,ms in two runs, 64K-character chunks give 78.4\,ms, and one chunk per worker, about twice as many chunks as with the 32K target, is backlogged in both runs. The dispatcher handles every task, and its CPU time grows with the number of tasks. At 750 requests/s it handles 11,410 tasks/s with 64K chunks and uses 38\% of one core, 22,916 tasks/s and 62\% with 32K chunks, and 41,696 tasks/s and 93\% with one chunk per worker. A later version of the dispatcher does less work per task, with constant-time queue operations, batched reads of worker messages, and task messages that the load generators serialize in advance. It lowers these shares to 46\% with 32K chunks and 71\% with one chunk per worker, and one chunk per worker then reaches 272 and 289\,ms at 750 requests/s instead of backlogging, while the 32K target stays at 28.8\,ms.

\begin{table}[htbp]
\centering
\small
\setlength{\tabcolsep}{3.5pt}
\begin{tabular}{llrrrrrrrr}
\toprule
Target & Chunks & 250 & 375 & 500 & 625 & 750 & 875 & 1,000 & Dispatcher CPU at 750 \\
\midrule
\multicolumn{10}{l}{\emph{Earlier dispatcher}} \\
64K & 15--20 & 38.5 & 39.2 & 46.0 & 56.6 & 78.4 & 101 & -- & 0.38 \\
32K & 31--39 & 20.5 & 36.5 & 23.4 & 24.5 & 29.1 & 104 & 3,057\dag & 0.62 \\
32K, repeat & 31--39 & 20.4 & 21.7 & 22.0 & 36.7 & 27.8 & 69.9 & 3,252\dag & 0.62 \\
One per worker & 61--64 & 13.7 & 14.4 & 20.9 & 39.1 & 4,078\dag & -- & -- & 0.93 \\
One per worker, repeat & 61--64 & -- & -- & -- & 93.5 & 5,095\dag & -- & -- & 0.93 \\
\midrule
\multicolumn{10}{l}{\emph{Later dispatcher}} \\
64K & 15--20 & -- & -- & -- & -- & -- & 91.4 & 125 & -- \\
32K & 31--39 & -- & 21.3 & 22.7 & 24.0 & 28.8 & 100 & 2,993 & 0.46 \\
One per worker & 61--64 & 13.2 & 15.8 & 16.5 & 26.1 & 289 & 5,440\dag & -- & 0.71 \\
One per worker, repeat & 61--64 & -- & -- & -- & 26.9 & 272 & -- & -- & 0.71 \\
\bottomrule
\end{tabular}
\caption{P99 latency on 64 workers with million-character requests, by target chunk size and offered rate in requests/s. The second column gives the range of chunks per request. The last column is the dispatcher's CPU time divided by the run's duration. The two repeats with one chunk per worker ran on the same host with the same allocation. With the 64K target and the earlier dispatcher, 1,094 requests/s is backlogged (8,182\,ms), and with the 32K target and the later dispatcher, 1,125 requests/s is backlogged (5,743\,ms). A dagger marks a backlogged run.}
\label{tab:ev-chunksize}
\end{table}

The 100K-character runs and the chunk-size comparison were also repeated on the rented node with the current dispatcher and three arrival seeds, on the shared-node grid extended by one or two rates. Table~\ref{tab:ev-capacity-100k} gives the brackets. Every bracket is higher than on the shared nodes, whereas the million-character brackets are the same on both. The shared queue passes at 5,907 requests/s on 64 workers in every seed, and linear interpolation puts its capacity at 6,198, 7,657 and 6,732 requests/s in the three seeds; per-worker FIFO reaches about 3,080 to 3,100 requests/s on 64 workers, no more than the 3,189 to 3,325 it reaches on 32. At 64 workers and 750 requests/s of million-character requests, the 32K target gives a P99 of 30.0, 37.5 and 44.3\,ms in the three seeds, the 64K target 63.0 to 78.0\,ms, and one chunk per worker 311 to 340\,ms; at 875 requests/s one chunk per worker overloads in every seed, while the 32K target reaches 64.9 to 152\,ms and the 64K target 79.3 to 100\,ms. The order of the three targets is the same as on the shared nodes. Over the whole repeat, 215 runs in 24 groups, every run passed the validity rule and 429,457 shadow comparisons found no difference.

\begin{table}[htbp]
\centering
\small
\begin{tabular}{rlllll}
\toprule
Workers & Design & Seed 1 & Seed 2 & Seed 3 & All seeds \\
\midrule
16 & Per-worker FIFO & 1,887 $\rightarrow$ 2,200 & 1,887 $\rightarrow$ 2,200 & 1,887 $\rightarrow$ 2,200 & 1,887 $\rightarrow$ 2,200 \\
16 & Shared queue & 2,516 $\rightarrow$ 2,750 & 2,516 $\rightarrow$ 2,750 & 2,516 $\rightarrow$ 2,750 & 2,516 $\rightarrow$ 2,750 \\
32 & Per-worker FIFO & 3,316 $\rightarrow$ 3,850 & 2,750 $\rightarrow$ 3,316 & 2,750 $\rightarrow$ 3,316 & 2,750 $\rightarrow$ 3,850 \\
32 & Shared queue & 4,900 $\rightarrow$ 5,251 & 4,422 $\rightarrow$ 4,900 & 4,422 $\rightarrow$ 4,900 & 4,422 $\rightarrow$ 5,251 \\
64 & Per-worker FIFO & 2,953 $\rightarrow$ 3,900 & 2,953 $\rightarrow$ 3,900 & 2,953 $\rightarrow$ 3,900 & 2,953 $\rightarrow$ 3,900 \\
64 & Shared queue & 5,907 $\rightarrow$ 6,500 & 7,656 $\rightarrow$ 9,570 & 6,500 $\rightarrow$ 7,656 & 5,907 $\rightarrow$ 9,570 \\
\bottomrule
\end{tabular}
\caption{Offered rates in requests/s of 100K-character stateless requests on the dedicated 128-core Zen 3 node, by arrival seed, with the 64K chunk target, which leaves these requests whole. Each entry is the highest tested rate whose P99 stayed at or below 50\,ms, followed by the lowest tested rate above it that did not; the last column is the union over seeds. In seed 1 on 64 workers the P99 crossed 50\,ms at 6,500 requests/s (64.8\,ms) and fell below it again at 7,656 (45.6\,ms) before the run at 9,570 overloaded; the bracket takes the first crossing.}
\label{tab:ev-capacity-100k}
\end{table}

Traffic that follows real agent traces asks for much less. In the replay of Table~\ref{tab:ev-trace}, the sessions' arrival times and request sizes follow real agent traces, and the sessions are filled with web text. Every session starts at its first call, when contexts are still short. At 50 requests/s the requests have a median of 21,677 characters, a 90th percentile of 52,491, a 99th percentile of 188,024, and a maximum of 641,692; the medians at the higher rates are 21.7K to 22.9K characters. The rate is raised by drawing more sessions. These runs use the scheduling testbed on sixteen cores of the Zen 5 server with the default cutting rule of 131,072 and 65,536 characters.

\begin{table}[htbp]
\centering
\small
\begin{tabular}{lrrrr}
\toprule
\multicolumn{5}{l}{\emph{Four workers, every call encoded in full}} \\
Achieved load (requests/s) & 52.6 & 105.1 & 210.0 & 420.4 \\
\midrule
Per-worker FIFO & 6.0 & 4.3 & 3.6 & 4.5 \\
Shared shortest-first & 5.5 & 4.1 & 3.5 & 3.5 \\
Chunks, FIFO & 4.1 & 3.1 & 3.0 & 3.0 \\
\midrule
\multicolumn{5}{l}{\emph{Four workers, with session reuse}} \\
Achieved load (requests/s) & 105.1 & 210.0 & 420.4 & 841.8 \\
\midrule
Per-worker FIFO & 4.8 & 4.4 & 6.3 & 11.5 \\
Shared shortest-first & 4.7 & 4.4 & 6.3 & 11.6 \\
Chunks, FIFO & 5.1 & 4.4 & 6.4 & 12.4 \\
\midrule
\multicolumn{5}{l}{\emph{Sixteen workers, with session reuse}} \\
Achieved load (requests/s) & 210.0 & 420.4 & 841.8 & 1,262.3 \\
\midrule
Per-worker FIFO & 3.9 & 5.2 & 6.4 & 6.9 \\
Chunks, FIFO & 4.5 & 5.2 & 6.9 & 7.0 \\
\bottomrule
\end{tabular}
\caption{P99 latency in the replay whose sessions follow real agent traces in their arrival times and request sizes, filled with web text. The workers are at most 7.9\% busy in the first block, 14.9\% in the second, and 5.5\% in the third.}
\label{tab:ev-trace}
\end{table}

\subsection{SGLang under load}
\label{sec:ev-load}

Table~\ref{tab:ev-load} lists the runs. They use SGLang v0.5.19 on an RTX PRO 6000 GPU of the Zen 5 server and serve Qwen3-8B, except that the desktop row uses an RTX 5090 in a Zen 5 desktop. The adapter runs in process without the pool, and the sessions take their text from the public agent transcripts described in the main paper. Every turn re-sends the session's whole transcript with its session identifier, and TTFT is measured on warm turns only. In the closed-loop runs, each session sends its next turn as soon as the previous one has been answered, and its transcript grows from 8K to 48K tokens over 30 turns. The paced runs add a think time drawn uniformly between 1 and 3\,s before each turn, and the desktop's paced runs grow from 8K to 16K tokens. The small-append runs start from a 32K-token prefix and append 400 characters per turn in a closed loop. The Poisson run sends 15 turns/s over twelve sessions of 70K characters for 60\,s and measures TTFT from each turn's scheduled arrival.

\begin{table}[htbp]
\centering
\small
\setlength{\tabcolsep}{4pt}
\begin{tabular}{lrrlll}
\toprule
Setting & Sessions & Runs & Reference & \sys serial & Change \\
\midrule
Closed loop & 1 & 3 & 177 / 309 & 158 / 276 & $-$10.5 / $-$11.0\% \\
Closed loop & 4 & 3 & 520 / 878 & 568 / 887 & $+$9.2 / $+$1.0\% \\
Closed loop & 8 & 3 & 870 / 1,660 & 941 / 1,596 & $+$8.1 / $-$3.8\% \\
Paced & 4 & 2 & 198 / 481 & 172 / 383 & $-$13.0 / $-$20.4\% \\
Paced & 8 & 2 & 227 / 762 & 201 / 719 & $-$11.5 / $-$5.6\% \\
Small appends, closed loop & 4 & 2 & 101 / 147 & 94.7 / 106 & $-$6.4 / $-$27.5\% \\
Small appends, closed loop & 8 & 2 & 174 / 293 & 166 / 187 & $-$4.5 / $-$36.1\% \\
Poisson, 15 turns/s & 12 & 1 & 85.9 / 662 & 54.6 / 512 & $-$36.4 / $-$22.6\% \\
Desktop, paced & 4 & 2 & 48.0 / 88.1 & 38.8 / 85.3 & $-$19.2 / $-$3.3\% \\
\bottomrule
\end{tabular}
\caption{Warm-turn TTFT in SGLang, median / P99 in milliseconds, with the reference tokenizer and with \sys serial. The last column gives the change of the median and of the P99. The Poisson row measures from each turn's scheduled arrival.}
\label{tab:ev-load}
\end{table}

In the closed loop with four or eight sessions, \sys raises the median TTFT by 8 to 9\% while completing more turns per second, 4.196 against 4.079 with four sessions and 4.632 against 4.535 with eight. A faster front end sends each session's next turn sooner, so more turns overlap in GPU prefill. With one session, with paced arrivals, and with open-loop arrivals, the median falls. The small-append runs are closed-loop as well, and there both the median (by 6.4\% and 4.5\%) and the P99 (by 27.5\% and 36.1\%) fall. Section~\ref{sec:per-tokenizer} gives the runs with gpt-oss-120b and how their exactness was checked.

When six sessions replay 250K-character transcripts open-loop, limiting SGLang's HTTP and tokenizer process to one core or to eight makes no difference for either tokenizer (Table~\ref{tab:ev-openloop}), because SGLang tokenizes requests one at a time in that process. The wait in the table is the time a turn spends behind its session's previous request. At 8 turns/s it is 116\,ms with the reference on one core and 0.18\,ms with \sys. At 12 turns/s every configuration is beyond what this setup sustains, since each session receives a turn every 500\,ms on average while a turn takes 451 to 512\,ms end to end at the median, so the medians there reflect queueing.

\begin{table}[htbp]
\centering
\small
\begin{tabular}{lrlll}
\toprule
Tokenizer & Cores & 4 turns/s & 8 turns/s & 12 turns/s \\
\midrule
Reference & 1 & 146 / 839 / 0.16 & 293 / 1,614 / 116 & 3,385 / 16,492 / 3,180 \\
Reference & 8 & 147 / 849 / 0.17 & 296 / 1,632 / 114 & 3,284 / 15,845 / 3,067 \\
\sys & 1 & 59.4 / 517 / 0.18 & 85.8 / 1,041 / 0.18 & 1,496 / 9,842 / 1,386 \\
\sys & 8 & 59.3 / 516 / 0.17 & 87.0 / 1,050 / 0.20 & 1,483 / 9,783 / 1,379 \\
fastokens & 1 & 62.7 / 534 / 0.17 & 89.0 / 1,039 / 0.20 & 1,574 / 10,577 / 1,454 \\
\bottomrule
\end{tabular}
\caption{Median TTFT / P99 TTFT / median wait behind the session's previous request, in milliseconds from scheduled arrival, when six sessions replay 250K-character transcripts open-loop in SGLang. The cores column is the budget of SGLang's HTTP and tokenizer process.}
\label{tab:ev-openloop}
\end{table}

With the pool inside SGLang, the main paper's end-to-end experiment sends 400-character appends to twelve warm sessions of 100K characters as Poisson arrivals at 4, 8 or 12 turns/s for 60\,s. Every 8\,s a cluster of four stateless documents of one million characters arrives, which SGLang tokenizes and then declines for exceeding the model's context. Table~\ref{tab:ev-e2e} pools two server processes per configuration. Without the documents, at 16 and 24 turns/s, the pool of four workers gives 29.0 / 73.9 and 34.5 / 105\,ms at the median and P99, against 27.8 / 74.2 and 32.9 / 103\,ms for \sys serial and 73.4 / 199 and 124 / 995\,ms for the reference. A control in which SGLang serves 200K-character documents instead of declining them gives warm-turn P99 values of 5,358, 5,214, and 5,216\,ms for the reference, \sys serial, and the pool, so GPU prefill dominates that setting.

\begin{table}[htbp]
\centering
\small
\setlength{\tabcolsep}{3.5pt}
\begin{tabular}{rllllllll}
\toprule
& \multicolumn{2}{c}{Reference} & \multicolumn{2}{c}{\sys serial} & \multicolumn{2}{c}{Pool of 4} & \multicolumn{2}{c}{Pool of 8} \\
Rate & Turns & Doc. & Turns & Doc. & Turns & Doc. & Turns & Doc. \\
\midrule
4 & 52.0 / 961 / 1,297 & 995 & 25.6 / 49.1 / 71.7 & 128 & 26.7 / 46.6 / 62.9 & 32.4 & 26.8 / 47.4 / 64.3 & 31.5 \\
8 & 58.7 / 1,242 / 1,441 & 1,013 & 26.0 / 53.8 / 166 & 125 & 26.5 / 48.7 / 62.7 & 33.2 & 26.6 / 49.0 / 62.6 & 32.9 \\
12 & 71.5 / 1,324 / 1,561 & 976 & 26.4 / 53.7 / 138 & 125 & 27.1 / 51.0 / 65.1 & 35.0 & 27.0 / 51.1 / 67.1 & 32.5 \\
\bottomrule
\end{tabular}
\caption{SGLang with the model in the loop and clusters of million-character documents that are tokenized and then declined. The turn columns give warm-turn TTFT from scheduled arrival as P50 / P95 / P99, and the document columns give the median time until a document is answered. Rates are warm turns per second.}
\label{tab:ev-e2e}
\end{table}

When every request carries a session key and no long stateless request arrives, the pool has little to do. In a separate set of paced runs with eight sessions, the four-worker pool gives a median TTFT of 197.5\,ms, against 202.0\,ms for \sys serial and 227.1\,ms for the reference. Measured from each turn's schedule, its median stays within 6\,ms of \sys serial in every open-loop run, with 57.2 and 57.9\,ms against 56.0 and 56.3\,ms for twelve 70K-character sessions at 15 turns/s, 61.6 and 51.2\,ms against 59.7 and 49.5\,ms for six 250K-character sessions at 4 turns/s, and 91.9 and 90.9\,ms against 89.4 and 85.4\,ms at 8 turns/s. The pool's cost shows in the adapter's median encode time, 2.12\,ms against 0.74\,ms, and in SGLang's median tokenize window, 2.76\,ms against 1.36\,ms. The two pool servers served 1,978 and 1,872 keyed requests and 6 and 6 requests by content lookup, all through the pool, with no fallback.

A further run of the repeated-document configuration of Section~4.6 in the main paper doubles the warm-turn rate to sixteen per second over a 30-second window, which keeps the twelve sessions and the document inside the model's cache. The P99 TTFT of warm turns is 618\,ms with the reference and 94.5 and 98.3\,ms with \sys serial and \sys pool, 6.3--6.5$\times$ apart against 7.4--7.5$\times$ at eight turns per second, and the medians are 81.2, 31.7, and 33.1\,ms. The reference's tail is still the time its event loop spends tokenizing documents, with a P99 of 456\,ms for the wait before the handler receives a turn that follows a cluster. \sys's tail grows because more turns queue behind each document's prefill, so the P99 of a turn's own forward pass after a cluster rises to 47--57\,ms from 44--48\,ms at the lower rate while GPU utilization rises from about 30\% to 40--54\%.

\subsection{fastokens}
\label{sec:ev-fastokens}

fastokens 0.1.2 is the optional tokenizer backend that SGLang v0.5.19 pins, and it reuses work from earlier calls in the same process. To locate its divergence, we replayed the 279 requests of the sequential TTFT runs of the main paper in serving order through one fastokens process and compared every request with the reference. 78 requests differ, 46 by one extra token and 32 by two. The differences lie in five of the nine sessions, one session per context size and server run, with 30, 16, 15, 9 and 8 differing requests. Once a session differs, every later request of that session differs at the same position.

Every difference has the same shape. Where the reference keeps a newline, a run of spaces, and a second newline together as one token, fastokens emits a lone newline, ID 198, followed by the spaces and the second newline as one token. The runs in the five sessions are 16, 3, 6, 4 and 5 spaces long. Table~\ref{tab:ev-ftk} shows the first difference in one session. Turn 15 of that session has 37,262 characters and ends with \code{prev\_host\_state}, a newline and six spaces. Turn 16 appends 401 characters that begin with a newline and fourteen spaces. The reference encodes turn 16 as 9,097 tokens and fastokens as 9,098, and the first difference is at token 9,021, the reference's whitespace token that spans the boundary between the two turns. Replaying turns 0 to 21 of this session in order in a fresh process reproduces it, with turns 0 to 15 equal and turns 16 to 21 different. Encoded alone in fresh processes, the same turn-16 text matches the reference in all 20 encodes, five each with two, four, eight and sixteen cores.

\begin{table}[htbp]
\centering
\small
\begin{tabular}{rll}
\toprule
Token & Reference & fastokens \\
\midrule
9,020 & \code{\_state} & \code{\_state} \\
9,021 & newline, 6 spaces, newline (35117) & newline (198) \\
9,022 & 13 spaces (1060) & 6 spaces, newline (6926) \\
9,023 & space, \code{for} (369) & 13 spaces (1060) \\
9,024 & space, \code{method} (1714) & space, \code{for} (369) \\
\bottomrule
\end{tabular}
\caption{The first difference in the fastokens replay, at turn 16 of one session, with token IDs in parentheses. Turn 15 ended after the first newline and six spaces, so the reference's token 9,021 spans the boundary between the two turns.}
\label{tab:ev-ftk}
\end{table}

The same effect shows in serving. At the end of each run, the benchmark compares the prompt-token count that SGLang reports for the final request with a local reference count. With raw-text requests on the Zen 5 server, the fastokens count differs in 15 of 24 runs, 5 of 9 sequential runs and 10 of 15 runs under core budgets of two, four and eight cores. On the desktop it differs in 3 of 4 runs, and through the chat interface it matches in all 6. The reference tokenizer and \sys match in every run. A separate check that encodes each of 50 prompts once found all 50 counts equal and identical greedy outputs.

\subsection{Fallback localization}
A fast encoder that meets a character it cannot handle sends the whole call to the reference encoder, so one exceptional character slows the entire request (main paper, \S2.2, \S3.3.3 and \S4.3). Table~\ref{tab:localization} separates the effect of chunking on this fallback from parallelism, on twenty English documents of one million characters, four of which trigger the guard. On those four, encoding sixteen chunks one after another on one core lowers the median from 276\,ms to 45\,ms, a sixfold gain before any parallelism, and sixteen cores lower it further to 17\,ms. On clean documents, sequential chunking changes little and the gain is from the cores.

\begin{table}[h]
\centering
\begin{tabular}{lrr}
\toprule
Execution & Clean & Guarded \\
\midrule
Whole input, one core & 26.5 & 276 \\
16 chunks, one core & 24.7 & 45.0 \\
16 chunks, 16 cores & 5.1 & 17.4 \\
\bottomrule
\end{tabular}
\caption{Median encode time in milliseconds for English documents of one million characters, 16 clean and 4 that trigger the fast encoder's guard. Chunking on one core confines the fallback to one chunk.}
\label{tab:localization}
\end{table}

\subsection{Rust prototype}
\label{sec:ev-rust}

The Rust prototype plans cuts with the same rules and encodes the chunks on native threads, one thread per chunk up to 64, keeping the IDs in native buffers. It ran on 66 cores of the rented 128-core Zen 3 node of the capacity study, with no other experiment on the node and 0.5 to 1.2 cores of CPU use outside the container, and its timings come from the prototype alone, not from the SGLang adapter or its pool. The prototype is a research build that has not been through the checks applied to the main encoder. Each configuration encodes 20 documents at 1M and 4M characters and 10 at 10M, and the table reports the median of each document's first encode. The prototype returns the IDs either as a Python list or as a packed byte buffer. Table~\ref{tab:ev-rust} gives Qwen3 medians for the list return by chunk count, and the best speedup over serial encoding for both returns.

\begin{table}[htbp]
\centering
\small
\setlength{\tabcolsep}{4pt}
\begin{tabular}{llrrrrrrrrrr}
\toprule
Text & Size & Serial & $k=4$ & 8 & 16 & 32 & 64 & Best, list & Best, bytes & Guarded & Fallback \\
\midrule
Web & 1M & 50.5 & 15.6 & 12.9 & 10.3 & 7.8 & 8.1 & 6.47 (32) & 8.55 (32) & 4 of 20 & 1 \\
Web & 4M & 268 & 93.7 & 65.5 & 43.5 & 35.7 & 36.2 & 7.50 (32) & 12.93 (64) & 7 of 20 & 2 \\
Web & 10M & 4,514 & 1,405 & 678 & 333 & 194 & 138 & 32.73 (64) & 47.84 (64) & 7 of 10 & 4 \\
Code & 1M & 41.8 & 17.9 & 11.7 & 9.5 & 8.0 & 7.1 & 5.88 (64) & 8.50 (64) & 0 of 20 & 0 \\
Code & 4M & 255 & 86.7 & 57.8 & 49.1 & 39.7 & 38.3 & 6.66 (64) & 14.33 (64) & 1 of 20 & 2 \\
Code & 10M & 552 & 272 & 182 & 137 & 111 & 107 & 5.17 (64) & 11.31 (64) & 2 of 10 & 2 \\
\bottomrule
\end{tabular}
\caption{Median time in milliseconds of the Rust prototype with the Qwen3 tokenizer on 66 cores of the rented 128-core Zen 3 node, by chunk count $k$ with the list return, and the best speedup over serial encoding, to two decimals, with the chunk count that gives it. Guarded counts the documents whose serial encode falls back to the reference for the whole document. Fallback is the largest number of the 64 chunks of one document that fall back.}
\label{tab:ev-rust}
\end{table}

At ten million characters of web text, seven of the ten documents contain a character that sends the serial encode of the whole document to the reference encoder, so the serial median is 4,514\,ms, while the three clean documents take 794 to 814\,ms. With 64 chunks only the chunks that contain such a character fall back, 14 of the 640 in all, and the median falls to 138\,ms. The three clean documents go from 794--814\,ms to 91--102\,ms. Most of the 33-fold gain at this size therefore comes from confining the fallback to a few chunks, and the rest from parallel encoding.

With one chunk the prototype takes 0.89 to 1.06 times the serial time in 15 of the 16 groups of two tokenizers, two kinds of text and four sizes, and 1.27 times in one, Qwen3 on code at 10M characters. At 64 chunks of web text, planning the cuts takes 0.94, 3.43 and 8.47\,ms at 1M, 4M and 10M characters, and returning the IDs to Python as a list takes 3.55, 20.8 and 67.1\,ms. Both steps are serial, so they take a growing share of the time as threads are added, and the byte-buffer return removes most of the list cost.

Fitting the cost model of Section~\ref{sec:ev-designs}, with the fixed cost $a$ folded into $T_1$ so that $T(k)\approx T_1/k+ck$, at 64 chunks on 256K-character inputs gives a cost per chunk $c$ of 0.028 to 0.035\,ms for the prototype, against about 0.16\,ms for the process pool on a shared cluster node of the same generation, whose encodes of 256K characters take 2.66\,ms with 16 chunks and 10.04\,ms with 64. Going from 16 to 64 chunks at that size changes the prototype's speed by a factor of 0.78 to 1.21, while the pool slows to 0.26 to 0.35 of its speed. The prototype's best chunk count is 32 or 64 in 12 of the 16 groups.

The runs on this node compared 12,320 ID sequences with the reference, the first and second encode of every document in 352 configurations, and none differed. These checks are in addition to the 23,912 ID comparisons and 144,000 adversarial comparisons of planned cuts and IDs from the prototype's runs on the Zen 5 server. The same matrix had run earlier on a shared cluster node with two 64-core EPYC 7713 processors, with the same wheel and another 12,320 identity checks without a difference; there the ten-million-character web documents took 3,967\,ms serially and 124\,ms with 64 chunks, 32.0$\times$, and across the 352 configurations its times were about 13\% lower at the median than on the rented node.

\section{Scope of the exactness checks}
\label{sec:validation}
Table~3 of the main paper totals the exactness checks, and Table~\ref{tab:tok-exact} splits them by tokenizer, and Table~\ref{tab:platform-exact} lists the platform and runtime repeats, with its caption stating what the sweep and serial-encoder rows count. Every counted output passed through \sys's cuts or its fast encoder. The first row, whole encodes by the fast encoder on the full web corpus with one comparison per document and tokenizer, is the only row that reaches all eleven admitted tokenizers. The same experiments also compared whole encodes by fastokens, and by the reference itself, with the reference. These matched as well, but they test other encoders, so they are not counted.

The rows count different things, documents in the web-corpus check, constructed strings in the adversarial battery, assembled outputs in the chunking sweeps, turns in the turn-by-turn replay, and served requests in the SGLang checks, so each is reported on its own row. The 2,195 SGLang shadow re-encodes combine 319 synchronous and 1,876 asynchronous checks. The 2,268 asynchronous shadow re-encodes of the runs with served long documents (Section~4.6 of the main paper) have a row of their own. Of these, 212 come from twelve runs on a GH200 node, where the fast encoder is an ARM build and not one of the builds behind the first row of Table~3. Before those runs it matched the reference on 800,000 whole documents under six tokenizers, on the 220,000 strings of the adversarial battery, and on the turn-by-turn replay, and the shadow re-encodes and prompt-token counts of the runs themselves are counted above.